\definecolor{DarkGreen}{rgb}{0.1,0.5,0.1}
\definecolor{DarkRed}{rgb}{0.5,0.1,0.1}
\definecolor{DarkBlue}{rgb}{0.1,0.1,0.5}
\numberwithin{equation}{section}
\numberwithin{theorem}{section}
\newcommand{\IGNORE}[1]{}
\newcommand\E{\mathbb{E}}
\newcommand{\qedhere}{~}
\newcommand\inner[1]{\langle #1 \rangle}
\newcommand\poly{\operatorname{poly}}
\newcommand{\Exp}{\mathop{\mathbb E}\displaylimits}
\newcommand{\tildO}[1]{\tilde{O}(#1)}
\newcommand{\unitcircle}{\mathcal{K}}
\newcommand{\conj}{^*}
\newcommand{\neighbor}{\mathcal{N}_{\tau}}
\newcommand{\hatG}{\widehat{G}}
\newcommand{\hats}{\hat{s}}
\newcommand{\hatr}{\hat{r}}
\newcommand{\thetax}{\theta}
\newcommand{\mper}{\,.}
\newcommand{\drq}{\Delta\rq}
\newcommand{\dr}{\Delta r}
\renewcommand{\rq}{r}
\newcommand{\hatp}{\hat{p}}
\newcommand{\hatA}{\hat{A}}
\newcommand{\hatB}{\hat{B}}
\newcommand{\hatC}{\hat{C}}
\newcommand{\hatD}{\hat{D}}
\newcommand{\hath}{\hat{h}}
\newcommand{\haty}{\hat{y}}
\newcommand{\hata}{\hat{a}}
\newcommand{\hatTheta}{\widehat{\Theta}}
\newcommand{\dy}{\Delta y}
\renewcommand{\dh}{\Delta h}
\newcommand{\CONDNAME}{{weakly-quasi-convex}\xspace}
\newcommand{\CONDSHORT}{{WQC}}
\newcommand{\ball}{\mathcal{B}}
\newcommand{\proj}[1]{\Pi_{#1}}
\newcommand\R{\mathbb{R}}
\newcommand\C{\mathbb{C}}
\newcommand{\Id}{I}
\newcommand{\ballC}{\mathcal{C}}
\newcommand{\spp}[1]{^{(#1)}}
\newcommand{\partialgrad}[2]{{\frac{\partial #1 }{\partial #2}}}
\newcommand{\tp}[2]{(#1^{\top})^{#2}}
\newcommand{\CC}{\textup{CC}}
\newcommand{\MCC}{\textup{MCC}}
\newcommand{\phat}{q}
\newcommand{\Var}{\textup{Var}}
\newcommand{\trace}{\textup{tr}}
\let\epsilon=\varepsilon
\numberwithin{equation}{section}
\newcommand\MYcurrentlabel{xxx}
\newcommand{\MYstore}[2]{%
	\global\expandafter \def \csname MYMEMORY #1 \endcsname{#2}%
}
\newcommand{\MYload}[1]{%
	\csname MYMEMORY #1 \endcsname%
}
\newcommand{\MYnewlabel}[1]{%
	\renewcommand\MYcurrentlabel{#1}%
	\MYoldlabel{#1}%
}
\newcommand{\MYdummylabel}[1]{}
\newcommand{\torestate}[1]{%
	\let\MYoldlabel\label%
	\let\label\MYnewlabel%
	#1%
	\MYstore{\MYcurrentlabel}{#1}%
	\let\label\MYoldlabel%
}
\newcommand{\restatetheorem}[1]{%
	\let\MYoldlabel\label
	\let\label\MYdummylabel
	\begin{theorem*}[Restatement of \prettyref{#1}]
		\MYload{#1}
	\end{theorem*}
	\let\label\MYoldlabel
}
\newcommand{\restatelemma}[1]{%
	\let\MYoldlabel\label
	\let\label\MYdummylabel
	\begin{lemma*}[Restatement of \prettyref{#1}]
		\MYload{#1}
	\end{lemma*}
	\let\label\MYoldlabel
}
\newcommand{\restateprop}[1]{%
	\let\MYoldlabel\label
	\let\label\MYdummylabel
	\begin{proposition*}[Restatement of \prettyref{#1}]
		\MYload{#1}
	\end{proposition*}
	\let\label\MYoldlabel
}
\newcommand{\restatefact}[1]{%
	\let\MYoldlabel\label
	\let\label\MYdummylabel
	\begin{fact*}[Restatement of \prettyref{#1}]
		\MYload{#1}
	\end{fact*}
	\let\label\MYoldlabel
}
\newcommand{\restate}[1]{%
	\let\MYoldlabel\label
	\let\label\MYdummylabel
	\MYload{#1}
	\let\label\MYoldlabel
}
\newcommand\indicator[1]{\ensuremath{\mathbf{1}_{#1}}}
\title{Gradient Descent Learns Linear Dynamical Systems}
\author{\name Moritz Hardt \email m@mrtz.org \\
	\addr Department of Electrical Engineering and Computer Science \\
	University of California, Berkeley
	\AND
	\name Tengyu Ma \email tengyuma@stanford.edu \\
	\addr Facebook AI Research
	\AND 
\name Benjamin Recht \email  brecht@berkeley.edu \\
	\addr Department of Electrical Engineering and Computer Science \\
University of California, Berkeley}
\begin{document}
	\editor{Sujay Sanghavi}
\maketitle 
\begin{abstract}
We prove that stochastic gradient descent efficiently converges to the global optimizer of the maximum likelihood objective of an unknown linear time-invariant dynamical system from a sequence of noisy observations generated by the system. Even though the objective function is non-convex, we provide polynomial running time and sample complexity bounds under strong but natural assumptions. Linear systems identification has been studied for many decades, yet, to the best of our knowledge, these are the first polynomial guarantees for the problem we consider.

\vspace{0.15in}

\noindent{\bf Key-words: } non-convex optimization,  linear dynamical system, stochastic gradient descent, generalization bounds, time series, over-parameterization
\end{abstract}
\section{Introduction}
Many learning problems are by their nature sequence problems where the goal is
to fit a model that maps a sequence of input words $x_1,\dots,x_T$ to a corresponding sequence of observations $y_1,\dots,y_T.$ Text translation, speech recognition, time series prediction, video captioning and question answering systems, to name a few, are all sequence to sequence learning problems. For a sequence model to be both expressive and parsimonious in its parameterization, it is crucial to equip the model with memory thus allowing its prediction at time~$t$ to depend on previously seen inputs. 

Recurrent neural networks form an expressive class of non-linear sequence models. Through their many variants, such as long-short-term-memory~\cite{lstm}, recurrent neural networks have seen remarkable empirical success in a broad range of domains. At the core, neural networks are typically trained using some form of (stochastic) gradient descent. Even though the training objective is non-convex, it is widely observed in practice that gradient descent quickly approaches a good set of model parameters. Understanding the effectiveness of gradient descent for non-convex objectives on theoretical grounds is a major open problem in this area. 

If we remove all non-linear state transitions from a recurrent neural network, we are left with the state transition representation of a linear dynamical system. Notwithstanding, the natural training objective for linear systems remains non-convex due to the composition of multiple linear operators in the system. If there is any hope of eventually understanding recurrent neural networks, it will be inevitable to develop a solid understanding of this special case first.

To be sure, linear dynamical systems are important in their own right and have been studied for many decades independently of machine learning within the control theory community. Control theory provides a rich set techniques for identifying and manipulating linear systems. The learning problem in this context corresponds to ``linear dynamical system identification''. Maximum likelihood estimation with gradient descent is a popular heuristic for dynamical system identification~\cite{LjungBook}.  In the context of machine learning, linear systems play an important role in numerous tasks.  For example, their estimation arises as subroutines of reinforcement learning in robotics~\cite{levine2013guided}, location and mapping estimation in robotic systems~\cite{durrant2006simultaneous}, and estimation of pose from video~\cite{Rahimi05}.

In this work, we show that gradient descent efficiently minimizes the maximum likelihood objective of an unknown linear system given noisy observations generated by the system. More formally, we receive noisy observations generated by the following \emph{time-invariant linear system}:
\begin{align}
h_{t+1} & = Ah_{t} + Bx_t \label{eqn:model} \\
y_t & = Ch_t + Dx_t + \xi_t\nonumber
\end{align}
Here, $A,B,C,D$ are linear transformations with compatible dimensions and we denote by $\Theta=(A,B,C,D)$ the parameters of the system. The vector $h_t$ represents the hidden state of the model at time~$t.$ Its dimension~$n$ is called the \emph{order} of the system. The stochastic noise variables $\{\xi_t\}$ perturb the output of the system which is why the model is called an \emph{output error model} in control theory. We assume the variables are drawn i.i.d.~from a distribution with mean~$0$ and variance~$\sigma^2.$

Throughout the paper we focus on \emph{controllable} and \emph{externally stable} systems. A linear system is externally stable (or \emph{equivalently bounded-input bounded-output stable}) if and only if the spectral radius of~$A$, denoted $\rho(A),$ is strictly bounded by~$1.$ Controllability is a mild non-degeneracy assumption that we formally define later.
Without loss of generality, we further assume that the transformations $B$,~$C$ and~$D$ have bounded Frobenius norm. This can be achieved by a rescaling of the output variables.
We assume we have $N$ pairs of sequences $(x,y)$ as training examples,
\[
S = \left\{(x\spp{1},y\spp{1}),\dots,(x\spp{N},y\spp{N})\right\}\,.
\]
Each input sequence~$x\in \R^T$ of length $T$ is drawn from a distribution and $y$ is the corresponding output of the system above generated from an unknown initial state $h.$ We allow the unknown initial state to vary from one input sequence to the next. This only makes the learning problem more challenging.

Our goal is to fit a linear system to the observations. We parameterize our model in exactly the same way as~\eqref{eqn:model}. That is, for linear mappings $(\hatA,\hatB,\hatC,\hatD)$, the trained model is defined as:
\begin{align}
\hath_{t+1} = \hatA\hath_{t} + \hatB x_t\,,\qquad
\haty_t = \hatC\hath_t + \hatD x_t \label{eqn:haty_t}
\end{align}

The \emph{(population) risk} of the model is obtained by feeding the learned system with the correct initial states and comparing its predictions with the ground truth in expectation over inputs and errors. Denoting by $\hat y_t$ the $t$-th prediction of the trained model starting from the correct initial state that generated $y_t$, and using $\hatTheta$ as a short hand for $(\hatA,\hatB,\hatC,\hatD)$, we formally define population risk as:
\begin{equation}\label{eq:pop}
f(\hatTheta) = \Exp_{\{x_t\},\{\xi_t\}}\left[\frac 1T
\sum_{t=1}^{T}\left\|\haty_t -
y_t\right\|^2\right]
\end{equation}

Note that even though the prediction $\hat y_t$ is generated from the correct initial state, the learning algorithm does not have access to the correct initial state for its training sequences.

While the squared loss objective turns out to be non-convex, it has many appealing properties. Assuming the inputs $x_t$ and errors~$\xi_t$ are drawn independently from a Gaussian distribution, the corresponding population objective corresponds to maximum likelihood estimation. In this work, we make the weaker assumption that the inputs and errors are drawn independently from possibly different distributions. The independence assumption is certainly idealized for some learning applications. However, in control applications the inputs can often be chosen by the controller rather than by nature. Moreover, the outputs of the system at various time steps are correlated through the unknown hidden state and therefore not independent even if the inputs are.

\subsection{Our results}

We show that we can efficiently minimize the population risk using projected stochastic gradient descent. 
The bulk of our work applies to single-input single-output (SISO) systems meaning that inputs and outputs are scalars $x_t,y_t\in\mathbb{R}$. However, the hidden state can have arbitrary dimension~$n$. Every controllable SISO admits a convenient canonical form called \emph{controllable canonical form} that we formally introduce later in Section~\ref{sec:prelim}. In this canonical form, the transition matrix $A$ is governed by $n$ parameters $a_1,\dots,a_n$ which coincide with the coefficients of the characteristic polynomial of~$A.$ The minimal assumption under which we might hope to learn the system is that the spectral radius of~$A$ is smaller than~$1.$ However, the set of such matrices is non-convex and does not have enough structure for our analysis.\footnote{In both the controllable canonical form and the standard parameterization of the matrix $A$, the set of matrices with spectral radius less than 1 is not convex.} We will therefore make additional assumptions. The assumptions we need differ between the case where we are trying to learn $A$ with $n$ parameter system, and the case where we allow ourselves to over-specify the trained model with $n'> n$ parameters. The former is sometimes called proper learning, while the latter is called improper learning. In the improper case, we are essentially able to learn any system with spectral radius less than~$1$ under a mild separation condition on the roots of the characteristic polynomial. Our assumption in the proper case is stronger and we introduce it next.

\subsection{Proper learning}

Suppose the state transition matrix $A$ has characteristic polynomial $\det(zI - A) = z^n + a_1z^{n-1}+\dots + a_n$, which turns out to by and large decide the difficulty of the learning according to our analysis. (In fact, we will parameterize $A$ in a way so that the coefficients of the characteristic polynomials are the parameters of learning problem. See Section~\ref{sec:prelim} for the detailed setup.)
Consider the corresponding  polynomial $q(z)= 1 + a_1z+a_2z^2+\cdots +a_nz^n$ over the complex numbers~$\mathbb{C}.$

\begin{wrapfigure}{R}{0.4\textwidth}
\includegraphics[width=0.38\textwidth]{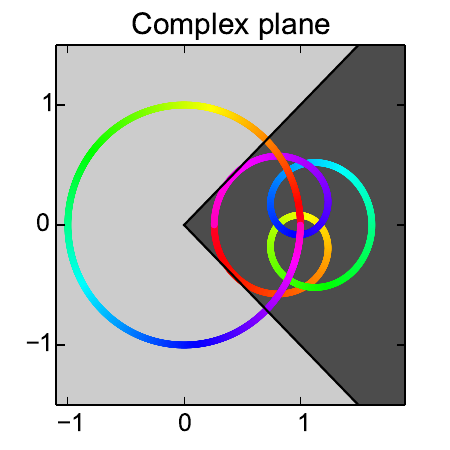}
\caption{An example of polynomial $q$ that satisfies our assumption. The unit circle is the collection of the inputs of $q$ and the other curve shows the corresponding outputs (with the corresponding colors.) We see the image of the polynomial stays in the wedge which contains all the complex number $z$ satisfying $\Re(q(z))>|\Im(q(z))|.$ }\label{fig:1}
\end{wrapfigure}

We will require the state transition matrix satisfy that the image of the unit circle on the complex plane under the polynomial~$q$ is contained in the cone of complex numbers whose real part is larger than their absolute imaginary part. Formally, for all $z\in\mathbb{C}$ such that $|z|=1,$ we require that $\Re(q(z))>|\Im(q(z))|.$ Here, $\Re(z)$ and $\Im(z)$ denote the real and imaginary part of $z,$ respectively. We illustrate this condition in Figure~\ref{fig:1} on the right for a degree~$4$ system.

Our assumption has three important implications. First, it implies (via Rouch\'{e}'s theorem) that the spectral radius of~$A$ is smaller than~$1$ and therefore ensures the stability of the system. Second, the vectors satisfying our assumption form a convex set in~$\mathbb{R}^n.$ Finally, it ensures that the objective function is \emph{weakly quasi-convex}, a condition we introduce later when we show that it enables stochastic gradient descent to make sufficient progress.

We note in passing that our assumption can be satisfied via the $\ell_1$-norm constraint $\|a\|_1\le \sqrt{2}/2.$ Moreover, if we pick random Gaussian coefficients with expected norm bounded by~$o(1/\sqrt{\log n}),$ then the resulting vector will satisfy our assumption with probability $1-o(1)$. 
Roughly speaking, the assumption requires the roots of the characteristic polynomial $p(z) = z^n+a_1z^{n-1}+\dots + a_n$ are relatively dispersed inside the unit circle. 
(For comparison, on the other end of the spectrum, the polynomial $p(z) = (z-0.99)^n$ have all its roots colliding at the same point and doesn't satisfy the assumption.)
\begin{theorem}[Informal]\label{thm:proper-intro}
	Under our assumption, projected stochastic gradient descent, when given~$N$ sample sequence of length~$T$, returns parameters~$\hatTheta$ with population risk 
			\begin{equation}
	\Exp f(\hatTheta) \le f(\Theta) + O\left(\sqrt{\frac{n^5+\sigma^2n^3}{TN}}\right)\mper\nonumber
	\end{equation}
\end{theorem}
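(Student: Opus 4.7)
The plan is to reduce the convergence analysis to two essentially separate tasks: establishing a geometric condition on the population risk that allows stochastic gradient descent to make progress, and bounding the variance of the stochastic gradients. The geometric condition is the \emph{weak quasi-convexity} (WQC) alluded to in the excerpt, which for our objective $f$ around the optimum $\Theta$ should take the form $\langle \nabla f(\hatTheta),\hatTheta-\Theta\rangle \ge \tau\bigl(f(\hatTheta)-f(\Theta)\bigr)$ for some constant $\tau>0$ whenever $\hatTheta$ lies in the feasible set defined by the assumption on $q(z)$. Given WQC plus a gradient-variance bound $V$ and a domain diameter $D$, a standard projected-SGD analysis yields an excess risk of $O(D\sqrt{V/(\tau^2 M)})$ after $M$ stochastic steps, which is what we need to match the bound in the theorem after plugging in $M\asymp TN$ and tracking the polynomial dependence on $n$ and $\sigma$.

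For the WQC part, I would move to the frequency domain. In controllable canonical form the SISO transfer function factors as $G(z)=p(z)/q(z)$ where $q(z)=1+a_1z+\cdots+a_nz^n$ is exactly the polynomial in the assumption, and $p$ collects the numerator parameters contributed by $B,C,D$. By Parseval, the population risk decomposes, up to the irreducible noise term $\sigma^2$, as an integral of $|\hat G(z)-G(z)|^2$ over the unit circle. Writing $\hat G-G=(\hat p\, q-p\,\hat q)/(\hat q\, q)$ and taking the gradient in the linear parameters $(\hat a,\hat b,\hat c,\hat d)$ (the numerator parameters enter linearly, the denominator ones enter as $\hat q$ which we multiply through), the inner product $\langle\nabla f,\hatTheta-\Theta\rangle$ reduces to an integral whose integrand is essentially $\Re\bigl((\hat G-G)\cdot\overline{(\hat G-G)}\cdot q/\hat q\bigr)$. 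The assumption $\Re(q(z))>|\Im(q(z))|$ on $|z|=1$ is exactly what guarantees that the real part of $q/\hat q$ stays bounded below by an absolute constant on the unit circle for every feasible $\hat q$, because the feasible set is the convex set where $q$ maps the circle into the right-half ``cone'', and division of two elements in this cone keeps the result with strictly positive real part. That lower bound on $\Re(q/\hat q)$ is what produces the positive $\tau$ in WQC.

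The feasibility set itself is convex (it is the intersection of the half-spaces $\Re(q(z))>|\Im(q(z))|$ indexed by $z$ on the unit circle), so projected SGD is well defined, and Rouch\'e's theorem applied to $q$ on the disk gives $\rho(A)<1$, keeping us inside the stability region where impulse responses decay. To bound the stochastic gradient variance $V$, I would write one-sample gradients explicitly in terms of the inputs, the hidden state trajectories, and the prediction residuals, and control each factor using (i) stability of $\hat A$ (uniform over the feasible set) to bound $\|\hat h_t\|$, (ii) sub-Gaussian or bounded-moment assumptions on $x_t,\xi_t$, and (iii) the dimension~$n$, which appears through the length of the state and through how $q/\hat q$ amplifies on the circle. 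A careful bookkeeping of these factors should produce $V=O(n^4+\sigma^2 n^2)$ per step (or something of that polynomial order), which, combined with the WQC-based SGD rate over $M=TN$ gradient steps, matches $O(\sqrt{(n^5+\sigma^2 n^3)/(TN)})$ up to the appearance of one extra $n$ factor from the domain diameter $D$.

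The main obstacle I anticipate is the WQC step, specifically making the ``frequency-domain inner product'' argument rigorous with explicit constants: one needs to turn the pointwise cone inequality $\Re(q/\hat q)\ge c$ into a bona fide lower bound on $\langle\nabla f,\hatTheta-\Theta\rangle$ that is proportional to $f(\hatTheta)-f(\Theta)$, which requires that the same weight $1/|\hat q|^2$ appearing in $|\hat G-G|^2$ is indeed the one that the gradient inner product produces (up to a constant factor). Handling the numerator parameters $(\hat b,\hat c,\hat d)$ simultaneously with $\hat a$ via a block-structured argument, and making sure the constant $\tau$ does not degrade with $n$, is the delicate part; the variance computation and the SGD-to-risk conversion are, by comparison, routine once stability is in hand.
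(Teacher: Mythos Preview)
Your strategy---frequency-domain weak quasi-convexity plus a variance bound plus the standard projected-SGD rate---is exactly the paper's approach, and your WQC computation (the integrand $\Re(q/\hat q)\,|\hat G-G|^2$) is precisely Lemma~\ref{lem:main}.

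The one genuine gap is your step-count: you cannot obtain $M\asymp TN$ independent stochastic gradient steps, because within a single sequence the observations $y_1,\dots,y_T$ are correlated through the shared hidden state, so per-time-step gradients are neither independent nor unbiased for the population objective. The paper instead takes $N$ steps (one per sequence, via back-propagation over the whole sequence) and shows that the variance of each \emph{sequence} gradient scales like $O(\mathrm{poly}(n)/T)$; plugging $V\asymp (n^5+\sigma^2 n^3)/T$ and $M=N$ into the SGD rate gives the stated $\sqrt{(n^5+\sigma^2 n^3)/(TN)}$. This $1/T$ variance scaling is not automatic---naive bounds from weak smoothness give variance $O(\mathrm{poly}(n))$ independent of $T$, which would only yield a $1/\sqrt{N}$ rate. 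The paper singles this out as ``the only tricky part'' (Lemma~\ref{lem:variance}): it comes from cancellation of cross terms when the per-time-step losses are averaged over the sequence, and requires a fairly tedious case analysis using the no-blow-up bounds on $\|A^kB\|$ (Lemma~\ref{lem:no_blowing_up}). Your description of the variance calculation as ``routine once stability is in hand'' misses this.

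A secondary point you do not address: the initial state $h_0$ is unknown at training time, so the empirical loss on a sample is not an unbiased estimate of the population risk as you have defined it. The paper handles this by dropping the first $T/4$ outputs from the loss and arguing that the resulting bias is $\exp(-\Omega((1-\alpha)T))$; this is what makes the gradient an (almost) unbiased estimator of the idealized-risk gradient rather than of the population-risk gradient directly.
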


Here the expectation on LHS is with respect to the randomness of the algorithm. Recall that $f(\Theta)$ is the population risk of the optimal system, and $\sigma^2$ refers to the variance of the noise variables. We also assume that the inputs $x_t$ are drawn from a pairwise independent distribution with mean~$0$ and variance~$1.$ Note, however, that this does not imply independence of the outputs as these are correlated by a common hidden state.
The stated version of our result glosses over the fact that we need our assumption to hold with a small amount of slack; a precise version follows in Section~\ref{sec:acq}. Our theorem establishes a polynomial convergence rate for stochastic gradient descent. Since each iteration of the algorithm only requires a sequence of matrix operations and an efficient projection step, the running time is polynomial, as well. Likewise, the sample requirements are polynomial since each iteration requires only a single fresh example. An important feature of this result is that the error decreases with both the length $T$ and the number of samples $N$. The dependency on the dimension $n$, on the other hand, is likely to be quite loose, and tighter bounds are left for future work. 

The algorithm requires a (polynomial-time) projection step to a convex set at every iteration (formally defined in Section~\ref{sec:acq} and Algorithm~\ref{alg:backprop}). Computationally, it can be a bottleneck, although it is unlikely to be required in practice and may be an artifact of our analysis.

\subsection{The power of over-parameterization}

Endowing the model with additional parameters compared to the ground truth turns out to be surprisingly powerful. We show that we can essentially remove the assumption we previously made in  proper learning. The idea is simple. If $p$ is the characteristic polynomial of $A$ of degree $n$. We can find a system of order $n'>n$ such that the characteristic polynomial of its transition matrix becomes $p \cdot p'$ for some polynomial $p'$ of order $n'-n.$ This means that to apply our result we only need the polynomial $p\cdot p'$ to satisfy our assumption. At this point, we can choose $p'$ to be an approximation of the inverse $p^{-1}$. For sufficiently good approximation, the resulting polynomial $p\cdot p'$ is close to $1$ and therefore satisfies our assumption. Such an approximation exists generically for $n'=O(n)$ under mild non-degeneracy assumptions on the roots of~$p$. In particular, any small random perturbation of the roots would suffice.

\begin{theorem}[Informal]\label{thm:improper-intro}
	Under a mild non-degeneracy assumption, stochastic gradient descent returns parameters~$\hatTheta$ corresponding to a system of order $n'=O(n)$ with population risk 
		\begin{equation}
	f(\hatTheta) \le f(\Theta) + O\left(\sqrt{\frac{n^5+\sigma^2n^3}{TN}}\right)\,,\nonumber
	\end{equation}
	when given $N$ sample sequences of length~$T$. 
\end{theorem}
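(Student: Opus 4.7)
The plan is to reduce Theorem~\ref{thm:improper-intro} to Theorem~\ref{thm:proper-intro} by lifting the true order-$n$ system to an equivalent order-$n'$ representation, with $n' = O(n)$, whose characteristic polynomial satisfies the cone assumption required by Theorem~\ref{thm:proper-intro}. Pass to the transfer-function picture: let $p(z) = z^n + a_1 z^{n-1} + \cdots + a_n$ be the characteristic polynomial of $A$ and let $q(z) = z^n p(1/z) = 1 + a_1 z + \cdots + a_n z^n$ be its reverse. For any polynomial $p'$ of degree $n' - n$, the ground-truth system admits an equivalent realization in controllable canonical form of order $n'$ whose characteristic polynomial is $p \cdot p'$; the corresponding reverse polynomial is $q \cdot q'$ with $q'$ the reverse of $p'$. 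It therefore suffices to construct $q'$ of degree $O(n)$ so that $q(z) q'(z)$ lies in the cone $\{w \in \C : \Re(w) > |\Im(w)|\}$ for every $z$ with $|z|=1$.

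Next I would take $q'$ to be a truncation of the Taylor expansion of $1/q$ around the origin. Since $\rho(A) < 1$, the zeros of $q$ all lie strictly outside the closed unit disk, so $1/q$ is analytic in a neighborhood of that disk and its Taylor series converges geometrically. Under the stated non-degeneracy hypothesis on the roots of $p$, one can quantify both $\max_{|z|=1}|1/q(z)|$ and the geometric decay rate, and then pick the truncation degree $n' - n = Cn$ for a sufficiently large constant $C$ so that $\max_{|z|=1}|q(z) q'(z) - 1| < 1 - 1/\sqrt{2}$. This places $q(z) q'(z)$ strictly inside the required cone with slack, so the quantitative version of the assumption needed in Section~\ref{sec:acq} is satisfied at the lifted optimum.

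Finally, define the lifted ground truth $\tilde\Theta$ of order $n'$ by putting the coefficients of $p \cdot p'$ into controllable canonical form for $\tilde A$ and choosing $\tilde B, \tilde C, \tilde D$ so that the overall transfer function matches the original; by construction $f(\tilde\Theta) = f(\Theta)$. The parameters of $\tilde A$ then lie in the interior of the convex feasible set used by Theorem~\ref{thm:proper-intro} for $n'$ coefficients, so projected stochastic gradient descent run over the $n'$-parameter model inherits the guarantee of Theorem~\ref{thm:proper-intro}. This yields population-risk error $O(\sqrt{(n'^5 + \sigma^2 n'^3)/(TN)}) = O(\sqrt{(n^5 + \sigma^2 n^3)/(TN)})$ since $n' = O(n)$, which is exactly the bound claimed in Theorem~\ref{thm:improper-intro}.

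The main obstacle is the quantitative approximation step. The truncation error of $1/q$ after $k$ terms behaves like $\rho^k \cdot \max_{|z|=1}|1/q(z)|$, where $\rho < 1$ reflects how deeply inside the unit disk the roots of $p$ sit, and the $L^\infty$-norm of $1/q$ can blow up when several roots cluster near a common point of the unit circle. Formulating a non-degeneracy condition that simultaneously controls both quantities, and verifying that it holds, for instance, after any small random perturbation of the roots of $p$, is the technical heart of the argument; the subsequent reduction to Theorem~\ref{thm:proper-intro} is essentially mechanical.
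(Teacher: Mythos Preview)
Your proposal is correct and essentially coincides with the paper's proof: the paper likewise multiplies numerator and denominator of $G(z)=s(z)/p(z)$ by a helper polynomial $u(z)$ chosen to approximate $p^{-1}$, so that the extended characteristic polynomial $p\cdot u$ is $\alpha$-acquiescent, and then invokes the proper-learning guarantee (Corollary~\ref{cor:main}/Theorem~\ref{thm:improper}). Your degree-$d$ Taylor truncation of $1/q$ is, after reversal, exactly the polynomial the paper constructs in Lemma~\ref{lem:approximation_inverse} via partial fractions and Fourier coefficients; the non-degeneracy you anticipate is the paper's bound on $\Gamma(p)$ (Lemma~\ref{lem:distinct_roots}), which together with $1/(1-\alpha)$ fixes the required degree $d=O(n)$.
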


We remark that the idea we sketched also shows that, in the extreme, improper learning of linear dynamical systems becomes easy in the sense that the problem can be solved using linear regression against the outputs of the system. 		However, our general result interpolates between the proper case and the regime where linear regression works. We discuss in more details in Section~\ref{subsec:linear_regression}. 
\subsection{Multi-input multi-output systems}

Both results we saw immediately extend to single-input multi-output (SIMO) systems as the dimensionality of~$C$ and $D$ are irrelevant for us. The case of multi-input multi-output (MIMO) systems is more delicate. Specifically, our results carry over to a broad family of multi-input multi-output systems. However, in general MIMO systems no longer enjoy canonical forms like SISO systems. In Section~\ref{sec:mimo}, we introduce a natural generalization of controllable canonical form for MIMO systems and extend our results to this case.

\subsection{Related work}

System identification is a core problem in dynamical systems and has been studied in depth for many years.  The most popular reference on this topic is the text by Ljung~\cite{LjungBook}.  Nonetheless, the list of non-asymptotic results on identifying linear systems from noisy data is surprisingly short.
Several authors have recently tried to estimate the sample complexity of dynamical system identification using machine learning tools~\cite{Vidyasagar08, Campi02, Weyer99}.  All of these result are rather pessimistic with sample complexity bounds that are exponential in the degree of the linear system and other relevant quantities.  Contrastingly,  we prove that gradient descent has an associated polynomial sample complexity in all of these parameters.  Moreover, all of these papers only focus on how well empirical risk approximates the true population risk and do not provide guarantees about any algorithmic schemes for minimizing the empirical risk.

The only result to our knowledge which provides polynomial sample complexity for identifying linear dynamical systems is in Shah \emph{et al}~\cite{Shah12}.  Here, the authors show that if certain \emph{frequency domain} information about the linear dynamical system is observed, then the linear system can be identified by solving a second-order cone programming problem. This result is about improper learning only, and the size of the resulting system may be quite large, scaling as the $(1-\rho(A))^{-2}$.  As we describe in this work, very simple algorithms work in the improper setting when the system degree is allowed to be polynomial in $(1-\rho(A))^{-1}$.  Moreover, it is not immediately clear how to translate the frequency-domain results to the time-domain identification problem discussed above.

Our main assumption about the image of the polynomial $q(z)$ is an appeal to the theory of passive systems.  A system is passive if the dot product between the input sequence $u_t$ and output sequence $y_t$ are strictly positive.  Physically, this notion corresponds to systems that cannot create energy.  For example, a circuit made solely of resistors, capacitors, and inductors would be a passive electrical system.  If one added an amplifier to the internals of the system, then it would no longer be passive.  The set of passive systems is a subset of the set of stable systems, and the subclass is somewhat easier to work with mathematically.  Indeed, Megretski used tools from passive systems to provide a relaxation technique for a family of identification problems in dynamical systems~\cite{Megretski08}.  His approach is to lower bound a nonlinear least-squares cost with a convex functional.  However, he does not prove that his technique can identify any of the systems, even asymptotically. ~\citet{soderstrom1982some, stoica1982uniqueness,stoica1984uniqueness} and later \citet{Bazanella08,eckhard2011global} prove the quasi-convexity of a cost function under a passivity condition in the context of system identification, but no sample complexity or global convergence proofs are provided.

\subsection{Proof overview}\label{sec:proof_overview}

The first important step in our proof is to develop population risk in Fourier domain where it is closely related to what we call \emph{idealized risk}. Idealized risk essentially captures the $\ell_2$-difference between the \emph{transfer function} of the learned system and the ground truth. The transfer function is a fundamental object in control theory. Any linear system is completely characterized by its transfer function $G(z) = C(zI - A)^{-1}B.$ In the case of a SISO, the transfer function is a rational function of degree $n$ over the complex numbers and can be written as $G(z)=s(z)/p(z).$ In the canonical form introduced in Section~\ref{sec:prelim}, the coefficients of $p(z)$ are precisely the parameters that specify $A.$ 
Moreover, $z^n p(1/z)=1+a_1z+a_2z^2+\cdots +a_nz^n$ is the polynomial we encountered in the introduction. Under the assumption illustrated earlier, we show in Section~\ref{sec:fourier} that the idealized risk is \emph{weakly quasi-convex} (Lemma~\ref{lem:main}). Quasi-convexity implies that gradients cannot vanish except at the optimum of the objective function; we review this (mostly known) material in Section~\ref{sec:quasi}. 
In particular, this lemma implies that in principle we can hope to show that gradient descent converges to a global optimum. However, there are several important issues that we need to address. First, the result only applies to idealized risk, not our actual population risk objective. Therefore it is not clear how to obtain unbiased gradients of the idealized risk objective. Second, there is a subtlety in even defining a suitable empirical risk objective. The reason is that risk is defined with respect to the correct initial state of the system which we do not have access to during training.
We overcome both of these problems. In particular, we design an almost unbiased estimator of the gradient of the idealized risk in Lemma~\ref{lem:unbiased_estimator} and give variance bounds of the gradient estimator (Lemma~\ref{lem:variance}).

Our results on improper learning in Section~\ref{sec:improper} rely on a surprisingly simple but powerful insight. We can extend the degree of the transfer function $G(z)$ by extending both numerator and denominator with a polynomial $u(z)$ such that $G(z)=s(z)u(z)/p(z)u(z).$ While this results in an equivalent system in terms of input-output behavior, it can dramatically change the geometry of the optimization landscape. In particular, we can see that only $p(z)u(z)$ has to satisfy the assumption of our proper learning algorithm. This allows us, for example, to put $u(z)\approx p(z)^{-1}$ so that $p(z)u(z)\approx 1,$ hence trivially satisfying our assumption. A suitable inverse approximation exists under light assumptions and requires degree no more than $d=O(n).$ Algorithmically, there is almost no change. We simply run stochastic gradient descent with $n+d$ model parameters rather than $n$ model parameters.

\subsection{Preliminaries}
\label{sec:prelim}

For complex matrix (or vector, number) $C$, we use $\Re(C)$ to denote the real part and  $\Im(C)$ the imaginary part, and $\bar{C}$ the conjugate and $C^{*} = \bar{C}^{\top}$ its conjugate transpose . We use $|\cdot|$ to denote the absolute value of a complex number $c$. For complex vector $u$ and $v$, we use $\inner{u,v} = u^*v$ to denote the inner product and 
$\|u\| = \sqrt{u^*u}$ is the norm of $u$. 
For complex matrix $A$ and $B$ with same dimension, $\inner{A,B} = \trace(A^*B)$ defines an inner product, and $\|A\|_F = \sqrt{\trace(A^*A)}$ is the Frobenius norm. 
For a square matrix $A$, we use $\rho(A)$ to
denote the spectral radius of $A$, that is, the largest absolute value of the
elements in the spectrum of $A$. We use $\Id_n$ to denote the identity matrix
with dimension $n\times n$, and we drop the subscript when it's clear from the
context.We let $e_i$ denote the $i$-th standard basis vector.

A SISO of order~$n$ is in \emph{controllable canonical form} if~$A$ and $B$
have the following form
\begin{equation}
A\; = \;
	\left[ \begin{array}{ccccc} 0 & 1 & 0 & \cdots & 0 \\ 0 & 0 & 1 & \cdots & 0 \\
	\vdots & \vdots & \vdots & \ddots & \vdots \\  0 & 0 & 0 & \cdots & 1 \\
	-a_n & -a_{n-1} & -a_{n-2} & \cdots & -a_1 \end{array} \right]
\qquad 
B = \left[ \begin{array}{c} 0\\ 0 \\ \vdots \\ 0 \\ 1 \end{array} \right]
\label{eqn:controllable_form}
	\end{equation}
 
We will parameterize $\hatA,\hatB,\hatC,\hatD$ accordingly. We will write $A = \CC(a)$ for brevity, where $a$ is used to denote the unknown last row $[-a_n, \dots, -a_1]$ of matrix $A$. We will use $\hata$ to denote the corresponding training variables for $a$.  Since here $B$ is known, so $\hatB$ is no longer a trainable parameter, and is forced to be equal to $B$. Moreover, $C$ is a row vector and we use $[c_1, \cdots, c_n]$ to denote its coordinates (and similarly for $\hatC$).

A SISO is \emph{controllable} if and only if the matrix 
$[B\mid AB\mid A^2B\mid\cdots\mid A^{n-1}B]$ has rank~$n.$ This statement corresponds to the condition that all hidden states should be reachable from some initial condition and input trajectory. Any controllable system admits a controllable canonical form~\cite{Heij07}. 
For vector $a = [a_n,\dots,a_1]$, let $p_a(z)$ denote the polynomial
\begin{equation}
p_a(z) = z^n + a_1z^{n-1}+\dots+a_n\,.\label{eqn:def_p}
\end{equation}
When $a$ defines the matrix~$A$ that appears in controllable canonical form,
then $p_a$ is precisely the characteristic polynomial of~$A.$ That is,
$p_a(z)=\mathrm{det}(z I - A).$

\section{Gradient descent and quasi-convexity}
\label{sec:quasi}

It is known that under certain mild conditions (stochastic) gradient descent converges even on non-convex functions to local minimum~\cite{GeEscaping,LeeGradient}. Though usually for concrete problems the challenge is to prove that there is no spurious local minimum other than the target solution. Here
we introduce a condition similar to the quasi-convexity notion in~\cite{HazanBeyond}, which ensures that any point with vanishing gradient is the optimal solution . 
Roughly speaking, the condition says that at any point~$\theta$ the negative of the gradient $-\nabla f(\theta)$ should be positively correlated with direction~$\thetax^*-\theta$ pointing towards the optimum. Our condition is slightly weaker than that in~\cite{HazanBeyond} since we only require quasi-convexity and smoothness with respect to the optimum, and this (simple) extension will be necessary for our analysis. 	

\begin{definition}[Weak quasi-convexity]\label{def:cond}
We say an objective function $f$ is \emph{$\tau$-\CONDNAME} ($\tau$-\CONDSHORT) 
over a domain~$\ball$ with respect to global minimum $\thetax^*$ if there is a
positive constant $\tau>0$ such that for all $\theta\in \ball$, 
\begin{equation}
	\nabla f(\thetax)^{\top} (\thetax-\thetax^*) \ge \tau (f(\thetax)-f(\thetax^*))\,.\label{eqn:condition}
\end{equation}
We further say $f$ is $\Gamma$-weakly-smooth if for for any point $\thetax$, $\|\nabla f(\thetax)\|^2 \le \Gamma (f(\thetax) - f(\thetax^*))$.
\end{definition}
Note that indeed any $\Gamma$-smooth function in the usual sense (that is, $\|\nabla^2 f\|\le \Gamma$) is $O(\Gamma)$-weakly-smooth. For a random vector $X\in \R^n$, we define it's variance to be $\Var[X] = \Exp[\|X -\E X\|^2]$. 

\newcommand{\mfr}{\mathfrak{r}}
\begin{definition}
We call $\mfr(\theta)$ an unbiased estimator of  $\nabla f(\thetax)$ with variance $V$ if it satisfies 
$\Exp[\mfr(\theta)] = \nabla f(\thetax)$ and $\Var[\mfr(\theta)] \le V$.
\end{definition}

\emph{Projected stochastic gradient descent} over some closed convex set~$\ball$  with \emph{learning rate}~$\eta>0$ refers to the following algorithm in which
$\proj{\ball}$ denotes the Euclidean projection onto~$\ball$: 
\begin{align}
	&\textrm{ {\bf for} $k = 0$ to $K-1$ : }\nonumber\\
	& \quad \quad \quad w_{k+1}  = \thetax_k -\eta \mfr(\thetax_k)\nonumber \\
	& \quad \quad \quad \thetax_{k+1}   = \proj{\ball}(w_{k+1})\nonumber \\
	& \textrm{ {\bf return}  $\thetax_{j}$ with $j$ uniformly picked from ${1,\dots,K}$}\label{eqn:grad-descent}
\end{align}

	The following Proposition is well known for convex objective functions (corresponding to $1$-\CONDNAME functions). We extend it (straightforwardly) to the case when $\tau$-\CONDSHORT~holds with any positive constant $\tau$. 
	
	\begin{proposition}\label{lem:framework}
	Suppose the objective function $f$ is $\tau$-\CONDNAME and $\Gamma$-weakly-smooth, and $\mfr(\cdot)$ is an unbiased estimator for $\nabla f(\thetax)$ with variance $V$. Moreover, suppose the global minimum $\thetax^*$ belongs to $\ball$, and the initial point $\thetax_0$ satisfies $\|\thetax_0-\thetax^*\|\le R$. Then projected gradient descent~\eqref{eqn:grad-descent} with a proper learning rate 
	returns $\theta_K$ in $K$ iterations with expected error $$\Exp f(\theta_K) -f(\theta^*)\le O\left(\max\left\{\frac{\Gamma R^2}{\tau^2K}, \frac{R\sqrt{V}}{\tau \sqrt{K}}\right\}\right)\mper$$ 
	\end{proposition}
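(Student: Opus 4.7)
The plan is to run the standard one-step progress analysis for projected SGD, then sum telescopically and average. First I would use the fact that $\theta^*\in\ball$ and that Euclidean projection onto a closed convex set is non-expansive, giving
\[
\|\theta_{k+1}-\theta^*\|^2 \;\le\; \|w_{k+1}-\theta^*\|^2 \;=\; \|\theta_k-\theta^*\|^2 - 2\eta\,\langle \mfr(\theta_k),\theta_k-\theta^*\rangle + \eta^2 \|\mfr(\theta_k)\|^2.
\]
Taking conditional expectation given $\theta_k$ and using unbiasedness together with the variance bound yields
\[
\Exp\|\theta_{k+1}-\theta^*\|^2 \;\le\; \Exp\|\theta_k-\theta^*\|^2 - 2\eta\,\Exp\langle \nabla f(\theta_k),\theta_k-\theta^*\rangle + \eta^2\bigl(\Exp\|\nabla f(\theta_k)\|^2 + V\bigr).
\]

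Next I would invoke the two structural hypotheses. Weak quasi-convexity gives $\langle \nabla f(\theta_k),\theta_k-\theta^*\rangle \ge \tau(f(\theta_k)-f(\theta^*))$, and weak smoothness gives $\|\nabla f(\theta_k)\|^2 \le \Gamma(f(\theta_k)-f(\theta^*))$. Plugging both in and writing $\Delta_k := \Exp[f(\theta_k)-f(\theta^*)]$, one obtains
\[
\Exp\|\theta_{k+1}-\theta^*\|^2 \;\le\; \Exp\|\theta_k-\theta^*\|^2 - (2\eta\tau - \eta^2\Gamma)\,\Delta_k + \eta^2 V.
\]
Choosing $\eta \le \tau/\Gamma$ absorbs the $\eta^2\Gamma$ term into half of the progress term, so that $2\eta\tau-\eta^2\Gamma \ge \eta\tau$.

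Now I would telescope: summing $k=0,\dots,K-1$ and using $\|\theta_0-\theta^*\|\le R$ gives
\[
\eta\tau \sum_{k=0}^{K-1}\Delta_k \;\le\; R^2 + \eta^2 V K,
\qquad\text{hence}\qquad
\frac{1}{K}\sum_{k=0}^{K-1}\Delta_k \;\le\; \frac{R^2}{\eta\tau K} + \frac{\eta V}{\tau}.
\]
Because $j$ is uniformly random in $\{1,\dots,K\}$, the expected value $\Exp[f(\theta_j)-f(\theta^*)]$ equals (up to an $O(1)$ off-by-one) this average. It remains to optimize~$\eta$ subject to $\eta\le \tau/\Gamma$. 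Taking $\eta = \min\{\tau/\Gamma,\; R/\sqrt{VK}\}$: if $R/\sqrt{VK}\ge \tau/\Gamma$ (the ``noise-light'' regime) then $\eta=\tau/\Gamma$ and the first term dominates, giving $O(\Gamma R^2/(\tau^2 K))$; otherwise $\eta=R/\sqrt{VK}$ balances the two terms at $O(R\sqrt{V}/(\tau\sqrt{K}))$. Taking the maximum of the two bounds yields the claimed rate.

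The proof is essentially routine; the only nontrivial point is noticing that weak smoothness lets us bound $\Exp\|\nabla f(\theta_k)\|^2$ by $\Gamma\Delta_k$ rather than by some global constant, so the quadratic-in-$\eta$ term is of the same form as the linear progress term and can be absorbed by picking $\eta$ small enough. This is exactly the role of weak smoothness in the statement and is what allows the $\Gamma R^2/(\tau^2 K)$ rate (rather than $1/\sqrt{K}$) to appear in the low-noise case. The extension from $\tau=1$ (ordinary convexity) to general $\tau$ is transparent: $\tau$ simply factors through the bound, as reflected in the $1/\tau$ and $1/\tau^2$ factors in the final expression.
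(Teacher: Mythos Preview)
Your proof is correct and follows essentially the same approach as the paper: the one-step distance recursion via projection non-expansiveness, followed by applying weak quasi-convexity and weak smoothness to obtain a telescoping bound on $\sum_k \Exp[f(\theta_k)-f(\theta^*)]$, and then the same two-regime choice of $\eta$ (roughly $\tau/\Gamma$ versus $R/\sqrt{VK}$). The only differences are cosmetic: the paper begins from the \CONDSHORT\ inequality and expands toward the distance recursion rather than starting from it, and it uses $\eta=\tau/(2\Gamma)$ rather than $\tau/\Gamma$ in the low-noise case, but the algebra and the resulting bounds are the same up to constants.
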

	\begin{remark}
		It's straightforward to see (from the proof) that the algorithm tolerates inverse exponential bias, namely bias on the order of $\exp(-\Omega(n))$, in the gradient estimator. Technically, suppose $\Exp[\mfr(\theta)] = \nabla f(\theta) \pm \zeta$ then $f(\theta_K)\le O\left(\max\left\{\frac{\Gamma R^2}{\tau^2K}, \frac{R\sqrt{V}}{\tau \sqrt{K}}\right\}\right) + \poly(K)\cdot \zeta$.  Throughout the paper, we assume  that the error that we are shooting for is inverse polynomial, namely $1/n^C$ for some absolute constant $C$, and therefore the effect of inverse exponential bias is negligible.
	\end{remark}

We defer the proof of Proposition~\ref{lem:framework} to Appendix~\ref{sec:optimization} which is a simple variation of the standard convergence analysis of stochastic gradient descent (see, for example, ~\cite{Bottou:1999:OLS:304710.304720}).
	Finally, we note that  the sum of two quasi-convex functions may no longer be quasi-convex. However, if a sequence functions is $\tau$-\CONDSHORT~with respect to a common point $\thetax^*$, then their sum is also $\tau$-\CONDSHORT. This follows from the linearity of gradient operation.
	
\begin{proposition}\label{prop:QWC-sum}Suppose functions $f_1,\dots, f_n$ are individually $\tau$-\CONDNAME~in $\ball$ with respect to a common global minimum $\thetax^*$ , then for non-negative $w_1,\dots, w_n$ the linear combination $f = \sum_{i=1}^n w_if_i$ is also $\tau$-\CONDNAME~with respect to $\thetax^*$ in $\ball$. 
\end{proposition}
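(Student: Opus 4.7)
The plan is a direct computation using linearity of the gradient operator. First I would note that for any $\theta \in \ball$, linearity gives $\nabla f(\theta) = \sum_{i=1}^n w_i \nabla f_i(\theta)$. Applying the defining inequality of $\tau$-\CONDNOUN{} to each $f_i$ at $\theta$ yields
\begin{equation*}
\nabla f_i(\theta)^{\top}(\theta - \theta^*) \;\ge\; \tau\bigl(f_i(\theta) - f_i(\theta^*)\bigr).
\end{equation*}
Multiplying by $w_i \ge 0$ preserves the inequality, and summing over $i$ gives
\begin{equation*}
\nabla f(\theta)^{\top}(\theta - \theta^*) \;=\; \sum_{i=1}^n w_i\, \nabla f_i(\theta)^{\top}(\theta - \theta^*) \;\ge\; \tau \sum_{i=1}^n w_i\bigl(f_i(\theta) - f_i(\theta^*)\bigr) \;=\; \tau\bigl(f(\theta) - f(\theta^*)\bigr),
\end{equation*}
which is exactly the required inequality from Definition~\ref{def:cond}.

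The only detail worth checking separately is that $\theta^*$ really is a global minimizer of $f$ on $\ball$ (so that the right-hand side $f(\theta) - f(\theta^*)$ is meaningful as a non-negative quantity). This is immediate: since each $f_i$ attains its global minimum at $\theta^*$, we have $f_i(\theta) \ge f_i(\theta^*)$ for every $\theta \in \ball$, and combining these with the non-negative weights $w_i$ gives $f(\theta) \ge f(\theta^*)$.

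There is no real obstacle here—the statement is almost tautological once one invokes linearity of $\nabla$ and of summation, together with monotonicity of scalar multiplication by non-negative weights. I would expect the proof to fit in three or four lines in the final paper. Its role is structural: later we will apply Proposition~\ref{lem:framework} to objectives (such as the idealized risk in Fourier domain) that naturally decompose as sums of per-frequency contributions, and Proposition~\ref{prop:QWC-sum} is the closure property that lets us lift per-term \CONDNOUN{} to the aggregated objective without degrading the constant~$\tau$.
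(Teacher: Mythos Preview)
Your proof is correct and matches the paper's approach exactly: the paper simply remarks that the result ``follows from the linearity of gradient operation'' without writing out the details, and what you have written is precisely the three-line expansion of that remark. Your additional check that $\theta^*$ is a global minimizer of $f$ is a nice touch not made explicit in the paper.
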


\section{Population risk in frequency domain}
\label{sec:fourier}

We next establish conditions under which risk is \CONDNAME. Our strategy is to first approximate the risk functional~$f(\hatTheta)$ by what we call \emph{idealized risk}. This approximation of the objective function is fairly straightforward; we justify it toward the end of the section. We can show that \begin{equation}\textstyle
f(\hatTheta) \approx \|D-\hatD\|^2 + \sum_{k=0}^{\infty} \big(\hatC\hatA^k B - CA^kB\big)^2 \mper
\label{eqn:approximation}
\end{equation}

The leading term $\|D-\hatD\|^2$ is convex in $\hatD$ which appears
nowhere else in the objective. It therefore doesn't affect the convergence of
the algorithm (up to lower order terms) by virtue of Proposition~\ref{prop:QWC-sum}, and we restrict our attention to the remaining terms.

\begin{definition}[Idealized risk]
We define the \emph{idealized risk} as
\begin{equation}
g(\hatA,\hatC) = \sum_{k=0}^{\infty} \left(\hatC \hatA^{k}B-
CA^{k}B\right)^2\,. \label{eqn:ideal-objective}
\end{equation}
\end{definition}

We now use basic concepts from control theory (see~\cite{Heij07,hespanha2009} for more background) to express the idealized risk~\eqref{eqn:ideal-objective} in Fourier domain.
The \emph{transfer function} of the linear system is
\begin{equation}\label{eq:transfer-fn-def}
	G(z) = C(zI-A)^{-1}B\mper
\end{equation}

Note that $G(z)$ is a rational function over the complex numbers of degree $n$ and hence we can find polynomials $s(z)$ and $p(z)$ such that
$G(z) = \frac{s(z)}{p(z)} \,,$
with the convention that the leading coefficient of $p(z)$ is $1$.  
In controllable canonical form~\eqref{eqn:controllable_form}, the coefficients of $p$ will correspond to the last row of the $A,$ while the coefficients of $s$ correspond to the entries of $C$. 
Also note that
\[
G(z) = \sum_{t=1}^\infty z^{-t} CA^{t-1} B = \sum_{t=1}^\infty z^{-t} r_{t-1}
\]
The sequence $r=(r_0,r_1,\ldots, r_t,\ldots) = (CB, CAB, \ldots, CA^{t}B,\ldots)$ is called the \emph{impulse response} of the linear system. The behavior of a linear system is uniquely determined by the impulse response and therefore by $G(z).$ 
Analogously, we denote the transfer function of the learned system by
$
\hatG(z) = \hatC(zI-\hatA)^{-1}B = \hats(z)/\hatp(z)\,.
$
The idealized risk~\eqref{eqn:ideal-objective} is only a function of the impulse response $\hatr$ of the learned system, and therefore it is only a function of $\hatG(z)$. 

Recall that $C = [c_1,\dots, c_n]$ is defined in Section~\ref{sec:prelim}. For future reference, we note that by some elementary calculation (see Lemma~\ref{lem:uinverseB}), we have
	\begin{equation}
	G(z) = C(zI-A)^{-1}B = \frac{c_1+\dots + c_nz^{n-1}}{z^n+a_1z^{n-1}+\dots + a_n}\,,
	\end{equation}
	which implies that $s(z)=c_1+\dots + c_nz^{n-1}$ and $p(z)=z^n+a_1z^{n-1}+\dots + a_n$.

With these definitions in mind, we are ready to express idealized risk in Fourier domain.

\begin{proposition}\label{prop:risk_freq_rep}
	Suppose $p_{\hata}(z)$ has all its roots inside unit circle, then the idealized risk $g(\hata, \hatC)$ can be written in the Fourier domain as
	\[
	g(\hatA,\hatC)= \int_{0}^{2\pi}\left|\hatG(e^{i\theta})- G( e^{i\theta})\right|^2\mathrm{d}\theta\mper
	\]
\end{proposition}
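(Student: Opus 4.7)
The plan is to recognize this as a direct application of Parseval's (Plancherel's) identity for Fourier series on the unit circle. The two transfer functions, when evaluated on $|z|=1$, are Fourier series whose coefficients are precisely the impulse responses, and the idealized risk is the sum of squared differences of those coefficients.

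First, I would establish integrability. The hypothesis that $p_{\hat a}(z)$ has all roots inside the unit circle means $\rho(\hat A) < 1$ (and by our standing assumption the true system satisfies $\rho(A) < 1$). Using Gelfand's formula, this gives $\|\hat A^k\|, \|A^k\| \lesssim \rho^k$ for some $\rho < 1$, so the impulse responses $r_k = CA^k B$ and $\hat r_k = \hat C \hat A^k B$ decay geometrically. In particular the series
\[
G(z) = \sum_{t=1}^{\infty} z^{-t} r_{t-1}, \qquad \hat G(z) = \sum_{t=1}^{\infty} z^{-t} \hat r_{t-1}
\]
converge uniformly on $|z|=1$, define continuous functions on the unit circle, and the formal rearrangements below are legal.

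Second, I would set $d_k = \hat r_k - r_k$ and write
\[
\hat G(e^{i\theta}) - G(e^{i\theta}) = \sum_{k=0}^{\infty} d_k \, e^{-i(k+1)\theta}.
\]
Viewed in the orthonormal basis $\{e^{im\theta}\}_{m\in\mathbb{Z}}$ of $L^2([0,2\pi], d\theta/(2\pi))$, the only nonzero Fourier coefficients of $\hat G - G$ are those at frequencies $m = -(k+1)$ for $k\ge 0$, and they equal $d_k$. Parseval's identity then gives
\[
\frac{1}{2\pi}\int_0^{2\pi} \bigl|\hat G(e^{i\theta}) - G(e^{i\theta})\bigr|^2 \, d\theta \;=\; \sum_{k=0}^{\infty} d_k^2 \;=\; \sum_{k=0}^{\infty} \bigl(\hat C \hat A^k B - C A^k B\bigr)^2 \;=\; g(\hat A, \hat C),
\]
which is the claimed identity (up to the normalization factor $1/(2\pi)$, which is presumably absorbed into the measure convention of the statement).

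There is essentially no obstacle here; the one thing worth double-checking is the interchange of sum and integral in applying Parseval to an infinite series rather than a trigonometric polynomial, but this is standard once we have absolute summability of $\{d_k\}$ (which follows from geometric decay of both $r_k$ and $\hat r_k$). Alternatively, one can truncate $\hat G$ and $G$ to their first $N$ impulse response terms, apply the finite Parseval identity, and pass to the limit $N\to\infty$ using dominated convergence on the unit circle and monotone convergence on the coefficient side.
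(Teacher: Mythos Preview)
Your proof is correct and follows the same approach as the paper: both recognize $G(e^{i\theta})$ and $\hat G(e^{i\theta})$ as the Fourier transforms of the impulse response sequences $\{r_k\}$ and $\{\hat r_k\}$, and then invoke Parseval's theorem. Your version is simply more explicit about the decay estimates justifying the interchange of sum and integral, and you correctly flag the $1/(2\pi)$ normalization that the paper's statement leaves implicit.
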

\begin{proof}
	Note that $G(e^{i\theta})$ is the Fourier transform of the sequence $\{r_k\}$ and so is $\hatG(e^{i\theta})$ the Fourier transform\footnote{The Fourier transform exists since $\|r_k\|^2 = \|\hatC\hatA^k\hatB\|^2 \le \|\hatC\| \|\hatA^k\|\|\hatB\|\le c\rho(\hatA)^k$ where $c$ doesn't depend on $k$ and $\rho(\hatA) < 1$.}  of $\hatr_k$. Therefore by Parseval' Theorem, we have that 
	\begin{align}
	g(\hatA,\hatC) &= \sum_{k=0}^{\infty} \|\hatr_k-r_k \|^2 = \int_{0}^{2\pi} |\hatG(e^{i\theta})- G( e^{i\theta})|^2d\theta\,.
	\end{align}
\end{proof}

\subsection{Quasi-convexity of the idealized risk}

Now that we have a convenient expression for risk in Fourier domain, we can prove that the idealized risk $g(\hatA,\hatC)$ is weakly-quasi-convex when $\hata$ is not so far from the true $a$ in the sense that $p_a(z)$ and $\hatp_a(z)$ have an angle less than $\pi/2$ for every $z$ on the unit circle. We will use the convention that $a$ and $\hata$ refer to the parameters that specify $A$ and $\hatA,$ respectively.

\begin{lemma}
\label{lem:main}
\label{lem:main_single_term}
	For $\tau>0$ and every $\hat C$, the idealized risk $g(\hatA,\hatC)$ is $\tau$-weakly-quasi-convex over the domain 
	\begin{equation}\label{eqn:main-cond}
	\neighbor(a)= \left\{
	\hata\in\mathbb{R}^n\colon 
	\Re\left(\frac{p_{a}(z)}{p_{\hata}(z)}\right) \ge \tau/2, \forall~z\in \C, \textup{~s.t.~} |z| = 1
	\right\}
	\,.
	\end{equation}
\end{lemma}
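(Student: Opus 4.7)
The plan is to reduce the weakly-quasi-convex inequality to a pointwise inequality on the unit circle, leveraging the Fourier representation from Proposition~\ref{prop:risk_freq_rep}. Since $g(A,C)=0$, writing the parameters as $(\hata,\hatc)$ (so that $\hatA=\CC(\hata)$ and $\hatC$ has entries $\hatc_j$), the target inequality becomes
\[
\langle \nabla g(\hatA,\hatC),\ (\hata-a,\,\hatc-c)\rangle \;\ge\; \tau\, g(\hatA,\hatC).
\]

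First, I would check that every $\hata \in \neighbor(a)$ yields a stable $\hatA$, i.e.\ $p_{\hata}$ has all its roots strictly inside the unit disc. This follows from the argument principle: the hypothesis $\Re(p_a/p_{\hata}) \ge \tau/2 > 0$ on $|z|=1$ forces the winding number of $p_a/p_{\hata}$ around the origin to be zero, so $p_{\hata}$ has the same number of roots inside the disc as $p_a$. Hence $\hatG$ has the Fourier representation and $g(\hatA,\hatC) = \int_0^{2\pi}|\hatG(e^{i\theta})-G(e^{i\theta})|^2\,d\theta$.

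Second---and this is the crux---I would compute the directional derivative of $\hatG(z)=\hats(z)/\hatp(z)$ in the direction $(\hata-a,\,\hatc-c)$. Using $\partial\hats/\partial\hatc_j = z^{j-1}$ and $\partial\hatp/\partial\hata_j = z^{n-j}$ together with the quotient rule, and noting that the leading $z^n$ cancels so $\sum_j(\hata_j-a_j)z^{n-j}=\hatp(z)-p(z)$ while $\sum_j(\hatc_j-c_j)z^{j-1}=\hats(z)-s(z)$, the contributions telescope into
\[
\frac{(\hats-s)\hatp - \hats(\hatp-p)}{\hatp^2} \;=\; \frac{\hats\,p-s\,\hatp}{\hatp^2} \;=\; \bigl(\hatG(z)-G(z)\bigr)\cdot\frac{p(z)}{\hatp(z)}.
\]
This is the identity where the numerator and denominator gradients combine into a single clean factor, and where the ratio $p/\hatp$ enters the picture.

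Third, I would differentiate $g$ under the integral sign (legitimate because $\hatp$ is bounded away from $0$ on $|z|=1$) and use $\partial_t|F|^2 = 2\Re(\bar F\,\partial_t F)$, obtaining
\[
\langle\nabla g,\,(\hata-a,\hatc-c)\rangle = 2\!\int_0^{2\pi}\!\Re\!\left[\overline{(\hatG-G)(e^{i\theta})}\cdot(\hatG-G)(e^{i\theta})\cdot\frac{p(e^{i\theta})}{\hatp(e^{i\theta})}\right]\!d\theta = 2\!\int_0^{2\pi}\!|\hatG-G|^2\,\Re\!\left[\frac{p(e^{i\theta})}{\hatp(e^{i\theta})}\right]\!d\theta.
\]
The pointwise hypothesis $\Re(p_a(e^{i\theta})/p_{\hata}(e^{i\theta})) \ge \tau/2$ then gives $\langle\nabla g,\,(\hata-a,\hatc-c)\rangle \ge \tau\int_0^{2\pi}|\hatG-G|^2\,d\theta = \tau\,g(\hatA,\hatC)$, as required.

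The main obstacle is the algebraic telescoping in the second step: one must carefully check that the contributions of the $\hata$-gradient and $\hatc$-gradient combine exactly so as to produce the factor $p/\hatp$ rather than something less structured. Conceptually this is the reason that a real-part condition on $p/\hatp$ is the ``right'' hypothesis---it is precisely the factor that multiplies $|\hatG-G|^2$ inside the integral after one takes the gradient inner product. The remaining steps---stability of $\hatA$ via Rouch\'e-type reasoning, differentiation under the integral, and the pointwise lower bound---are essentially routine.
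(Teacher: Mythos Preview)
Your argument is correct and follows essentially the same route as the paper: compute the directional derivative of $\hatG=\hats/\hatp$ in the direction of the true parameters, observe that the contributions telescope to $(\hatG-G)\cdot p/\hatp$, and conclude via the Fourier representation that the weak-quasi-convexity inequality holds pointwise under the integral. The only organizational difference is that the paper first shows each term $|\hatG(z)-G(z)|^2$ is $\tau$-WQC and then invokes the preservation of WQC under nonnegative combinations (Proposition~\ref{prop:QWC-sum}), whereas you differentiate the full integral directly; your added remark that $\hata\in\neighbor(a)$ forces $p_{\hata}$ to have all roots inside the unit circle (so Proposition~\ref{prop:risk_freq_rep} applies) is a useful observation the paper leaves implicit here.
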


\begin{proof}
We first analyze a single term $h= |\hatG(z)-G(z)|^2$. Recall that $\hatG(z) = \hats(z)/\hatp(z)$ where $\hatp(z) = p_{\hata}(z) = z^n+\hata_1z^{n-1}+\dots + \hata_n$. Note that $z$ is fixed and $h$ is a function of $\hata$ and $\hatC$. 
	Then it is straightforward to see that 
	\begin{equation}
		\partialgrad{h}{\hats(z)} =  2\Re\left\{\frac{1}{\hatp(z)}\left(\frac{\hats(z)}{\hatp(z)}-\frac{s(z)}{p(z)}\right)\conj\right\}\label{eqn:grads}\mper
	\end{equation}
	
	and 
	\begin{equation}
		\partialgrad{h}{\hatp(z)} =  -2\Re\left\{\frac{\hats(z)}{\hatp(z)^2}\left(\frac{\hats(z)}{\hatp(z)}-\frac{s(z)}{p(z)}\right)\conj\right\}\label{eqn:gradp}\mper
	\end{equation}
	Since $\hats(z)$ and $\hatp(z)$ are linear in $\hatC$ and $\hata$ respectively, by chain rule we have that  
	\begin{align*}
\inner{\partialgrad{h}{\hata}, \hata - a}+ \inner{\partialgrad{h}{\hatC}, \hatC - C} & = \partialgrad{h}{\hatp(z)}\inner{\partialgrad{\hatp(z)}{\hata}, \hata - a} + \partialgrad{h}{\hats(z)}\inner{\partialgrad{\hats(z)}{\hatC}, \hatC - C} \nonumber\\
& = \partialgrad{h}{\hatp(z)}(\hatp(z)-p(z))+ \partialgrad{h}{\hats(z)}(\hats(z)-s(z))\mper
	\end{align*}
	Plugging the formulas~\eqref{eqn:grads} and~\eqref{eqn:gradp} for $\partialgrad{h}{\hats(z)}$ and $\partialgrad{h}{\hatp(z)}$ into the equation above, we obtain that 
	\begin{align}
		\inner{\partialgrad{h}{\hata}, \hata - a}+ \inner{\partialgrad{h}{\hatC}, \hatC - C} & = 2 \Re \left\{ \frac{-\hats(z)(\hatp(z)-p(z))+ \hatp(z)(\hats(z)-s(z))}{\hatp(z)^2}    \left(\frac{\hats(z)}{\hatp(z)}  - \frac{s(z)}{p(z)}  \right)^*\right\}\nonumber\\
		&= 2 \Re \left\{ \frac{\hats(z)p(z)-s(z)\hatp(z) }{\hatp(z)^2}    \left(\frac{\hats(z)}{\hatp(z)}  - \frac{s(z)}{p(z)}  \right)^*\right\}\nonumber\\
		&= 2 \Re \left\{ \frac{p(z)}{\hatp(z)}\left|\frac{\hats(z)}{\hatp(z)}  - \frac{s(z)}{p(z)}  \right|^2\right\} \nonumber\\
		& = 2 \Re \left\{ \frac{p(z)}{\hatp(z)}\right\} \left|\hatG(z)-G(z)\right|^2\mper\nonumber
	\end{align}

Hence $h = |\hatG(z)-G(z)|^2 $ is $\tau$-weakly-quasi-convex with $\tau= 2 \min_{|z|=1} \Re\left\{\frac{p(z)}{\hatp(z)}\right\}$. This implies our claim, since by Proposition~\ref{prop:risk_freq_rep}, the idealized risk $g$ is convex combination of functions of the form $|\hatG(z)-G(z)|^2$ for $|z| = 1$. Moreover, Proposition~\ref{prop:QWC-sum} shows that convex combination preserves weak quasi-convexity.
\end{proof}

For future reference, we also prove that the idealized risk is $O(n^2/\tau_1^4)$-weakly smooth. 

\begin{lemma}\label{lem:smooth}
	The idealized risk $g(\hatA,\hatC)$ is $\Gamma$-weakly smooth  with $\Gamma = O(n^2/\tau_1^4)$.  
\end{lemma}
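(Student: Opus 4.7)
The plan is to reduce weak smoothness to a pointwise Fourier--domain bound, exactly mirroring the strategy used to prove weak quasi-convexity in Lemma~\ref{lem:main_single_term}. Writing $h(z)=|\hatG(z)-G(z)|^2$ as before, Proposition~\ref{prop:risk_freq_rep} gives $g(\hatA,\hatC)=\int_0^{2\pi}h(e^{i\theta})\,\mathrm{d}\theta$, and by Leibniz's rule the gradient commutes with the integral. Applying Cauchy--Schwarz to the vector-valued integral,
\[
\|\nabla g(\hatA,\hatC)\|^2 \;\le\; 2\pi \int_0^{2\pi}\|\nabla h(e^{i\theta})\|^2\,\mathrm{d}\theta.
\]
Since the minimum of $g$ is $0$ (attained at the true parameters), it is then enough to establish a pointwise bound $\|\nabla h(z)\|^2 = O(n^2/\tau_1^4)\cdot |\hatG(z)-G(z)|^2$ for every $z$ with $|z|=1$.

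For the pointwise bound, I would reuse the formulas~\eqref{eqn:grads} and~\eqref{eqn:gradp} for $\partial h/\partial \hats(z)$ and $\partial h/\partial \hatp(z)$ that were computed in the proof of Lemma~\ref{lem:main_single_term}. Since $\hats(z)=\sum_i \hatc_i z^{i-1}$ is linear in $\hatC$ and $\hatp(z)=z^n+\sum_i \hata_i z^{n-i}$ is linear in $\hata$, the chain rule gives $\partial h/\partial \hatc_i = 2\Re\{(\partial h/\partial \hats(z))\cdot \overline{z^{i-1}}\}$ and analogously for $\hata_i$. Summing the squares over $i$ and using $|z|=1$ so that $\sum_{i=0}^{n-1}|z|^{2i}=n$, we get
\[
\|\nabla_{\hatC} h\|^2 \;\le\; 4n\,\bigl|\tfrac{\partial h}{\partial \hats(z)}\bigr|^2 \;=\; \frac{16 n}{|\hatp(z)|^2}|\hatG(z)-G(z)|^2,
\qquad
\|\nabla_{\hata} h\|^2 \;\le\; 4n\,\bigl|\tfrac{\partial h}{\partial \hatp(z)}\bigr|^2 \;=\; \frac{16 n\,|\hats(z)|^2}{|\hatp(z)|^4}|\hatG(z)-G(z)|^2.
\]

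Step three is to control the algebraic prefactor $(|\hatp(z)|^2+|\hats(z)|^2)/|\hatp(z)|^4$ on the unit circle. The quantity $\tau_1$ will be the slack already built into the analysis (the same parameter controlling the assumed lower bound on $|\hatp(z)|$ for $|z|=1$ that underlies the domain $\neighbor(a)$), yielding $|\hatp(z)|\ge \Omega(\tau_1)$. For the numerator, Cauchy--Schwarz on the coefficients gives $|\hats(z)|\le \sqrt{n}\,\|\hatC\|$ and $|\hatp(z)|\le 1+\sqrt{n}\,\|\hata\|$ on $|z|=1$, both $O(\sqrt{n})$ under the boundedness of $\hatC,\hata$ enforced by the projection set. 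Combining these yields $\|\nabla h(z)\|^2 \le O(n^2/\tau_1^4)\cdot |\hatG(z)-G(z)|^2$, which integrates, via the first display and Proposition~\ref{prop:risk_freq_rep}, to $\|\nabla g\|^2\le O(n^2/\tau_1^4)\cdot g(\hatA,\hatC)$.

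The main obstacle is essentially bookkeeping rather than conceptual: handling the Wirtinger-style complex derivatives cleanly when passing from $\partial h/\partial \hats(z), \partial h/\partial \hatp(z)$ to $\partial h/\partial \hatC, \partial h/\partial \hata$, and making sure the constants $|\hatp(z)|\ge \Omega(\tau_1)$ and the $O(\sqrt n)$ upper bounds on $|\hats(z)|,|\hatp(z)|$ are genuinely implied by membership in the optimization domain used elsewhere in the paper. Once these ingredients are in place the desired $\Gamma=O(n^2/\tau_1^4)$ falls out of the pointwise bound and the Parseval identity.
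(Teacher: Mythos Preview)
Your approach is correct and close in spirit to the paper's, but Cauchy--Schwarz is applied at a different stage. The paper writes $\nabla_{\hatC}\, g$ directly as a product integral $\int_{\mathbb{T}} A(z)\,v(z)\,dz$ with scalar $A(z)=2\Re\{(\hatG(z)-G(z))^*/\hatp(z)\}$ and vector $v(z)=[1,z,\dots,z^{n-1}]$, and then applies Cauchy--Schwarz to split it as $\|\nabla_{\hatC}\, g\|^2\le \bigl(\int|A|^2\bigr)\bigl(\int\|v\|^2\bigr)\le O(n/\tau_1^2)\cdot g$; the $\hata$ case is handled analogously. Your route instead first passes $\|\int\nabla h\|^2\le 2\pi\int\|\nabla h\|^2$ and only then bounds the integrand pointwise, which forces you to use the cruder estimate $|\hats(z)|\le\sqrt n\,\|\hatC\|$ rather than the $L^2$ identity $\int|\hats|^2=2\pi\|\hatC\|^2$ that the paper's factorization makes available. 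Since the target is only $\Gamma=O(n^2/\tau_1^4)$ this extra slack costs nothing here. The two ingredients you flag as obstacles --- the lower bound $|\hatp(z)|\ge\tau_1$ on $|z|=1$, supplied by membership in $\ball_\alpha$, and an a priori bound on $\|\hatC\|$ --- are exactly what the paper's proof relies on as well, and it is equally terse about the latter.
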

\begin{proof}
	By equation~\eqref{eqn:gradp} and the chain rule we get that 
	\begin{align}
	\partialgrad{g}{\hatC} & = \int_{\mathbb{T}} \partialgrad{|\hatG(z)-G(z)|^2}{p(z)}\cdot\partialgrad{p(z)}{\hatC}dz = \int_{\mathbb{T}}2\Re\left\{\frac{1}{\hatp(z)}\left(\frac{\hats(z)}{\hatp(z)}-\frac{s(z)}{p(z)}\right)\conj\right\}\cdot [1,\dots, z^{n-1}]dz\mper\nonumber
	\end{align}
	therefore we can bound the norm of the gradient by 
	\begin{align}
	\left\|\partialgrad{g}{\hatC}\right\|^2  & \le \left(\int_{\mathbb{T}} \left|\frac{\hats(z)}{\hatp(z)}-\frac{s(z)}{p(z)}\right|^2dz \right)\cdot \left(\int_{\mathbb{T}} 4\|[1,\dots, z^{n-1}]\|^2\cdot |\frac{1}{p(z)}|^2 dz\right) \nonumber
 \le O(n/\tau_1^2) \cdot g(\hatA,\hatC)\mper\nonumber
	 	\end{align}
	Similarly, we could show that $\left\|\partialgrad{g}{\hata}\right\|^2 \le O(n^2/\tau_1^2)\cdot g(\hatA,\hatC)$. 
\end{proof}

\subsection{Justifying idealized risk}

We need to justify the approximation we made in Equation~\eqref{eqn:approximation}. 
\begin{lemma}\label{lem:population_risk}
Assume that $\xi_t$ and $x_t$ are drawn i.i.d. from an arbitrary distribution
with mean~$0$ and variance~$1.$ Then the population risk $f(\hatTheta)$ can be written as, 
\begin{equation}
f(\hatTheta) = (\hatD-D)^2 + \sum_{k=1}^{T-1} \left(1-\frac{k}{T}\right)\left(\hatC \hatA^{k-1}B- CA^{k-1}B\right)^2 + \sigma^2\mper\label{eqn:objective-simplified}
\end{equation}
\end{lemma}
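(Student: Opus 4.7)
The plan is to unroll both linear recursions, subtract, expand the square, and let the independence of the inputs and noise kill all cross terms. Let $r_k := CA^kB$ and $\hatr_k := \hatC\hatA^k B$ denote the true and learned impulse responses. Unrolling the dynamics with initial state $h_1=0$ (tacitly assumed in the stated identity, since that identity carries no residual in the initial state) and using $\hatB=B$ from the controllable canonical form gives $y_t = \sum_{s=1}^{t-1} r_{t-1-s}\, x_s + Dx_t + \xi_t$ and $\haty_t = \sum_{s=1}^{t-1}\hatr_{t-1-s}\,x_s + \hatD x_t$, so after the change of variable $k = t-1-s$ the error becomes $\haty_t - y_t = (\hatD - D)x_t + \sum_{k=0}^{t-2}(\hatr_k - r_k)x_{t-1-k} - \xi_t$.

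Next I would square this identity and take expectation. By the pairwise-independent, mean-zero, unit-variance assumption on $\{x_t\}$ together with independence of the $\xi_t$'s (mean $0$, variance $\sigma^2$), every cross-product in the expansion has zero expectation, and all that survives is the sum of squared coefficients weighted by the variance of the associated variable. This gives $\E[(\haty_t - y_t)^2] = (\hatD - D)^2 + \sum_{k=0}^{t-2}(\hatr_k - r_k)^2 + \sigma^2$ for each $t$.

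To finish, I sum over $t = 1,\dots,T$, divide by $T$, and exchange the order of summation: the term $(\hatr_k - r_k)^2$ appears for every $t \ge k+2$, hence exactly $T-k-1$ times, so $f(\hatTheta) = (\hatD - D)^2 + \sigma^2 + \sum_{k=0}^{T-2}\frac{T-k-1}{T}(\hatr_k - r_k)^2$. Relabeling the summation index as $k+1$ converts this directly into the form stated in the lemma. There is no deep step here: the only things to watch are the index bookkeeping in the double sum and the tacit handling of the initial state — for a general $h_1$, the calculation picks up an extra deterministic residual $((\hatC\hatA^{t-1} - CA^{t-1})h_1)^2$, which is absent from the identity as stated and which must be understood as being suppressed by the $h_1=0$ convention.
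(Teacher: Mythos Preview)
Your proof is correct and follows essentially the same approach as the paper: unroll the recursions, subtract, use independence to kill cross terms, then sum over $t$ and exchange the order of summation. The only nuance is the initial state: the paper's definition of population risk feeds the learned system the same initial state and writes the initial-state contribution as $CA^{t-1}h_0$ in \emph{both} $y_t$ and $\haty_t$, so it cancels for arbitrary $h_0$; your treatment (set $h_0=0$ and flag the residual otherwise) is an equally valid reading and arguably more careful.
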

The idealized risk is upper bound of the population risk $f(\hatTheta)$ according to equation~\eqref{eqn:approximation} and~\eqref{eqn:objective-simplified}. We don't have to quantify the gap between them because later in Algorithm~\ref{alg:backprop}, we will directly optimize the idealized risk by constructing an estimator of its gradient, and thus the optimization will guarantee a bounded idealized risk which translates to a bounded population risk. See Section~\ref{sec:learning} for details. 
\begin{proof}[Proof of Lemma~\ref{lem:population_risk}]
Under the distributional assumptions on $\xi_t$ and $x_t,$
we can calculate the objective functions above analytically. We write out $y_t,\haty_t$ in terms of the inputs, 
\begin{align}
y_t = Dx_t + \sum_{k=1}^{t-1} CA^{t-k-1}Bx_{k}  + CA^{t-1}h_0+ \xi_t\,,
\qquad
\haty_t = \hatD x_t + \sum_{k=1}^{t-1} \hatC \hatA^{t-k-1}\hatB x_{k} + CA^{t-1}h_0 \,.\nonumber
\end{align}
Therefore, using the fact that $x_t$'s are independent and with mean 0 and covariance $\Id$, the expectation of the error can be calculated (formally by Claim~\ref{claim:gaussian_expectation}),  \begin{align}
\Exp\left[\|\haty_t-y_t\|^2\right]
& = \|\hatD-D\|_F^2 + \textstyle\sum_{k=1}^{t-1}\big\|\hatC \hatA^{t-k-1}\hatB-CA^{t-k-1}B\big\|_F^2 + \Exp[\|\xi_t\|^2]\mper\label{eqn:37}
\end{align}
Using $\Exp[\|\xi_t\|^2]=\sigma^2\,,$ it follows that 
\begin{equation}
f(\hatTheta) = 
\|\hatD-D\|_F^2 + \textstyle\sum_{k=1}^{T-1} \big(1-\frac{k}{T}\big)\big\|\hatC \hatA^{k-1}\hatB- CA^{k-1}B \big\|_F^2 + \sigma^2\mper \label{eqn:objective-analytical-form}
\end{equation}
Recall that under the controllable canonical form~\eqref{eqn:controllable_form}, $B=e_n$ is known and therefore $\hatB = B$ is no longer a variable. 
Then the expected objective function~\eqref{eqn:objective-analytical-form} simplifies to
	\begin{equation*}
	f(\hatTheta) = (\hatD-D)^2 + \textstyle\sum_{k=1}^{T-1} \big(1-\frac{k}{T}\big)\big(\hatC \hatA^{k-1}B- CA^{k-1}B\big)^2 + \sigma^2\mper
	\end{equation*}
\end{proof}

The previous lemma does not yet control higher order contributions present in the idealized risk. This requires additional structure that we introduce in the next section.

\section{Effective relaxations of spectral radius}
\label{sec:acq}

The previous section showed quasi-convexity of the idealized risk. However, several steps are missing towards showing finite sample guarantees for stochastic gradient descent. In particular, we will need to control the variance of the stochastic gradient at any system that we encounter in the training. For this purpose we formally introduce our main assumption now and show that it serves as an effective relaxation of spectral radius. This results below will be used for proving convergence of stochastic gradient descent in Section~\ref{sec:learning}.

Consider the following convex region~$\ballC$ in the complex plane, 
\begin{equation}
\ballC = \{z\colon \Re z \ge (1+\tau_0)|\Im z|\}
\cap \{z\colon \tau_1 < \Re z <
\tau_2\}\,.\label{eqn:ballc}
\end{equation}
where $\tau_0, \tau_1, \tau_2 > 0$ are constants that are considered as fixed constant throughout the paper. Our bounds will have polynomial dependency on these parameters. Pictorially, this convex set is pretty much the dark area in Figure~\ref{fig:1} (with the corner chopped).  This set in $\C$ induces a convex set in the parameter space which is a subset of the transition matrix with spectral radius less than $\alpha$. 

\begin{definition}
\label{def:strong_passivity}
\label{def:acquiescence}
We say a polynomial $p(z)$ is \emph{$\alpha$-acquiescent} if  $\{p(z)/z^n \colon |z|=\alpha\}\subseteq\ballC.$ A linear system with transfer function $G(z)=s(z)/p(z)$ is $\alpha$-acquiescent if the denominator $p(z)$ is.
\end{definition}

The set of coefficients $a\in\R^n$ defining acquiescent systems form a convex set. Formally, for
a positive $\alpha > 0$, define the convex set $\ball_{\alpha}\subseteq \R^n$ as
\begin{equation}
\ball_{\alpha} = \big\{a\in \R^n: \{p_a(z)/z^n \colon |z|=\alpha\}\subseteq\ballC\big\}\,.\label{eqn:def_ball}
\end{equation}

We note that definition~\eqref{eqn:def_ball} is equivalent to the definition $\ball_{\alpha} = \big\{a\in \R^n: \{z^np(1/z) \colon |z|=1/\alpha\}\subseteq\ballC\big\}$, which is the version that we used in introduction for simplicity.  Indeed, we can verify the convexity of $\ball_{\alpha}$ by definition and the convexity of $\ballC$: $a,b\in B_{\alpha}$ implies that $p_a(z)/z^n, p_b(z)/z^n\in \ballC$ and therefore, $p_{(a+b)/2}(z)/z^n = \frac{1}{2}(p_a(z)/z^n+p_b(z)/z^n)\in \ballC$. We also note that the parameter~$\alpha$ in the definition of acquiescence corresponds to the spectral radius of the companion matrix. In particular, an acquiescent system is stable for $\alpha<1.$ 

\begin{lemma}\label{lem:1}
Suppose $a\in \ball_{\alpha}$, then the roots of polynomial $p_a(z)$ have magnitudes bounded by $\alpha$. 
		Therefore the controllable canonical form $A = \CC(a)$ defined by $a$ has spectral radius $\rho(A) <\alpha$. 
	\end{lemma}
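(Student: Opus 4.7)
The plan is to prove this via the argument principle (equivalently, a winding-number computation), exploiting the fact that the region $\ballC$ is a convex cone disjoint from the origin.

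First, I would verify that $p_a$ has no zeros on the circle $|z|=\alpha$. By the hypothesis $a\in \ball_\alpha$, the value $p_a(z)/z^n$ lies in $\ballC$ for every $z$ with $|z|=\alpha$, and from the definition~\eqref{eqn:ballc} every element of $\ballC$ has real part strictly greater than $\tau_1>0$. In particular $p_a(z)/z^n\neq 0$, so $p_a(z)\neq 0$ on the circle.

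Next, I would count the zeros of $p_a$ inside $\{|z|<\alpha\}$ via the argument principle, which equates this count with the winding number of the closed curve $\gamma\colon\theta\mapsto p_a(\alpha e^{i\theta})$ around the origin. The idea is to factor this curve as
\[
p_a(\alpha e^{i\theta}) \;=\; (\alpha e^{i\theta})^n \cdot q(\theta), \qquad q(\theta):=\frac{p_a(\alpha e^{i\theta})}{(\alpha e^{i\theta})^n}.
\]
The first factor traces out the circle $|w|=\alpha^n$ with winding number exactly $n$. The second factor $q(\theta)$ lies in $\ballC$ by the assumption $a\in\ball_\alpha$; since $\ballC$ is convex and strictly contained in the open right half-plane $\{\Re z>\tau_1\}$, it is simply connected and does not contain $0$, so the closed curve $q$ has winding number $0$ about the origin. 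Because winding numbers of non-vanishing closed curves are additive under multiplication, $\gamma$ has winding number $n+0=n$ about the origin.

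Combining the two steps, $p_a$ has exactly $n$ zeros (counted with multiplicity) in the open disk $\{|z|<\alpha\}$. Since $\deg p_a=n$, these are all of its roots, so every root has modulus strictly less than $\alpha$. Finally, the companion matrix $A=\CC(a)$ in the controllable canonical form~\eqref{eqn:controllable_form} has characteristic polynomial $p_a$, as noted after equation~\eqref{eqn:def_p}; hence $\rho(A)$, the maximum modulus of a root of $p_a$, is bounded by $\alpha$. The only slightly subtle point is justifying the factorization of winding numbers, but this is a standard consequence of the argument principle applied to products of nowhere-zero continuous maps $S^1\to\C^*$, so I do not anticipate any real obstacle.
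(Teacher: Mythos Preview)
Your proof is correct. The paper takes a closely related but slightly different route: it applies the symmetric form of Rouch\'e's theorem to $f(z)=z^n$ and $g(z)=p_a(z)$ on the circle $|z|=\alpha$, checking the inequality $|f(z)-g(z)|<|f(z)|+|g(z)|$ via the geometric observation that $c\in\ballC$ forces $|1-c|<1+|c|$. Since $f$ has $n$ roots inside the circle, so does $g$.

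Your argument-principle computation is essentially the content of Rouch\'e's theorem unpacked: both approaches ultimately reduce to the fact that $p_a(z)/z^n$ stays in a simply connected region avoiding the origin, so it contributes no winding. Your version makes this mechanism more explicit and uses only that $\ballC$ lies in the open right half-plane (convexity is more than you need---any region in $\{\Re z>0\}$ would do), whereas the paper's Rouch\'e inequality $|1-c|<1+|c|$ is exactly the condition $\Re c>0$ in disguise. The two arguments are thus equivalent in substance; yours is a touch more self-contained, while the paper's is shorter by invoking Rouch\'e as a black box.
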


\begin{proof}
	Define holomorphic function $f(z) = z^n$ and $g(z) = p_a(z) = z^n + a_1z^{n-1} + \dots + a_n$. 
	We apply the symmetric form of Rouche's theorem~\cite{estermann1962complex} on the circle $\mathcal{K} = \{z: |z| = \alpha\}$. For any point $z$ on $\mathcal{K}$, we have that $|f(z)| = \alpha^n$,	 and that $|f(z) - g(z)| = \alpha^n \cdot |1- p_a(z)/z^n|$. Since $a\in \ball_{\alpha}$, we have that $p_{a}(z)/z^n\in \ballC$ for any $z$ with $|z| = \alpha$. Observe that for any $c\in \ballC$ we have that $|1-c| < 1+|c|$, therefore we have that 
$$|f(z)-g(z)| = \alpha^n |1-p_a(z)/z^n| < \alpha^n (1+|p_a(z)|/|z^n|) = |f(z)| + |p_a(z)| = |f(z)|+|g(z)|\mper$$
	
	Hence, using Rouche's Theorem, we conclude that $f$ and $g$ have same number of roots inside circle $\mathcal{K}$. Note that function $f=z^n$ has exactly $n$ roots in $\mathcal{K}$ and therefore $g$ have all its $n$ roots inside circle $\mathcal{K}$. 
\end{proof}

The following lemma establishes the fact that $\ball_{\alpha}$ is a monotone family of sets in~$\alpha$.   The proof follows from the maximum modulo principle of the harmonic functions $\Re(z^np(1/z))$ and $\Im(z^np(1/z))$. We defer the short proof to Section~\ref{sec:monotonicity}. We remark that there are larger convex sets than $\ball_{\alpha}$ that ensure bounded spectral radius. However, in order to also guarantee monotonicity and the no blow-up property below, we have to restrict our attention to $\ball_{\alpha}.$

\begin{lemma}[Monotonicity of $\ball_{\alpha}$]\label{lem:monotone}
	For any $0 < \alpha < \beta$, we have that $\ball_{\alpha}\subset \ball_{\beta}$. 
\end{lemma}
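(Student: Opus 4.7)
The plan is to exploit the equivalent reformulation noted immediately after equation~\eqref{eqn:def_ball}: namely, $a\in\ball_\alpha$ if and only if the polynomial $q_a(w) := w^n p_a(1/w) = 1 + a_1 w + \cdots + a_n w^n$ lies in $\ballC$ for every $w$ on the circle $|w|=1/\alpha$. This reciprocal change of variable is the main point, because under it $q_a$ is an entire polynomial, and the constraint circle $\{|w|=1/\alpha\}$ bounds a genuine disk---which lets the maximum/minimum principle for harmonic functions apply cleanly, rather than on an unbounded exterior region.

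Next, I would decompose $\ballC$ into its defining half-plane constraints. Reading off~\eqref{eqn:ballc}, $\ballC$ is the intersection of the four open half-planes $\Re z - (1+\tau_0)\Im z > 0$, $\Re z + (1+\tau_0)\Im z > 0$, $\Re z - \tau_1 > 0$, and $\tau_2 - \Re z > 0$. For each such real-linear functional $L$ on $(\Re z,\Im z)$, the composition $h(w) := L(\Re q_a(w),\Im q_a(w))$ is an $\R$-linear combination of $\Re q_a$, $\Im q_a$, and a constant. Since $q_a$ is holomorphic on $\C$, both $\Re q_a$ and $\Im q_a$ are harmonic on all of $\C$, and therefore each such $h$ is harmonic on $\C$.

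With these four harmonic functions in hand, I would apply the strong minimum principle on the closed disk $\overline{D}_\alpha := \{w \in \C : |w| \le 1/\alpha\}$. Fix $a\in\ball_\alpha$; by definition, each $h_i$ is strictly positive on the boundary circle $|w|=1/\alpha$. The strong minimum principle states that a non-constant harmonic function on a bounded domain attains its minimum only on the boundary, so $h_i(w)>0$ throughout $\overline{D}_\alpha$; and if some $h_i$ is constant, it equals its positive boundary value everywhere on the disk. Either way, $q_a(w)\in\ballC$ for every $w$ with $|w|\le 1/\alpha$.

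To conclude, since $\alpha<\beta$ implies $1/\beta<1/\alpha$, the circle $\{|w|=1/\beta\}$ lies inside $\overline{D}_\alpha$; combined with the previous step, $q_a(w)\in\ballC$ holds all along it, which by the equivalent characterization applied with $\beta$ in place of $\alpha$ gives $a\in\ball_\beta$, proving $\ball_\alpha\subseteq\ball_\beta$. The only mild subtlety I anticipate is preserving the \emph{strict} inequalities that define the open region $\ballC$, and this is exactly what the strong form of the minimum principle buys us; beyond that I do not foresee any real obstacle.
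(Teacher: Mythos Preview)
Your proposal is correct and follows essentially the same route as the paper's proof: both pass to the reciprocal polynomial $q_a(w)=w^n p_a(1/w)$, note that the defining constraints of $\ballC$ pull back to harmonic functions of $w$, and invoke the minimum principle on the disk $\{|w|\le 1/\alpha\}$ to propagate the inequalities inward to the smaller circle $|w|=1/\beta$. If anything, you are slightly more careful than the paper in explicitly enumerating all four half-plane constraints and in flagging the need for the strong minimum principle to preserve strict inequalities; the paper treats only $\Re q_a(z)\ge\tau_1$ in detail and then waves at the remaining conditions with ``similarly.''
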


Our next lemma entails that acquiescent systems have well behaved impulse responses. 
\begin{lemma}[No blow-up property]\label{lem:no_blowing_up}
 	Suppose $a\in \ball_{\alpha}$ for some $\alpha \le  1$. Then the companion matrix $A = \CC(a)$ satisfies
 	\begin{equation}
 		\sum_{k=0}^{\infty} \|\alpha^{-k}A^kB\|^2 \le 2\pi n\alpha^{-2n}/\tau_1^2.\label{eqn:intermediate2}
 	\end{equation}
 	Moreover, it holds that for any $k\ge 0$, 
 	\begin{equation}
 	\|A^kB\|^2 \le  \min\{2\pi n / \tau_1^2, 2\pi n \alpha^{2k-2n}/\tau_1^2\}\mper\nonumber
 	\end{equation}
 	\end{lemma}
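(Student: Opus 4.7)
The plan is to pass to the frequency domain and apply Parseval's identity to the vector-valued sequence $(\alpha^{-k}A^kB)_{k\ge 0}$, using the acquiescence assumption to lower-bound $|p_a|$ on the circle $|z|=\alpha$.

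First I would record the explicit form of the resolvent in controllable canonical form. Since $B=e_n$, the identity already noted just above Proposition~\ref{prop:risk_freq_rep} gives
$$
(zI-A)^{-1}B \;=\; \frac{1}{p_a(z)}\,(1,\,z,\,z^2,\,\ldots,\,z^{n-1})^{\top}\mper
$$
Because $a\in\ball_{\alpha}$ implies $\rho(A)<\alpha$ (by the previous lemma), the Neumann series $\sum_{k=0}^{\infty} z^{-k-1}A^kB=(zI-A)^{-1}B$ converges uniformly on $\{|z|=\alpha\}$. Substituting $z=\alpha e^{i\theta}$, the sequence $v_k:=\alpha^{-k}A^kB$ has a well-defined Fourier series whose sum equals $\alpha e^{i\theta}(\alpha e^{i\theta}I-A)^{-1}B$.

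Next I would apply Parseval's theorem to the vector-valued sequence $(v_k)$, giving
$$
\sum_{k=0}^{\infty}\|\alpha^{-k}A^kB\|^2 \;=\; \frac{1}{2\pi}\int_{0}^{2\pi}\alpha^2\,\bigl\|(\alpha e^{i\theta}I-A)^{-1}B\bigr\|^2\,d\theta\mper
$$
From the explicit formula above, $\|(\alpha e^{i\theta}I-A)^{-1}B\|^2 = \sum_{j=0}^{n-1}\alpha^{2j}/|p_a(\alpha e^{i\theta})|^2 \le n/|p_a(\alpha e^{i\theta})|^2$ since $\alpha\le 1$. Acquiescence, $p_a(z)/z^n\in\ballC$ for $|z|=\alpha$, forces $\Re(p_a(z)/z^n)\ge\tau_1$, hence $|p_a(\alpha e^{i\theta})|\ge\tau_1\alpha^n$. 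Plugging this in yields
$$
\sum_{k=0}^{\infty}\|\alpha^{-k}A^kB\|^2 \;\le\; \frac{\alpha^2\cdot n}{\tau_1^2\alpha^{2n}} \;\le\; \frac{2\pi n}{\tau_1^2\alpha^{2n}}\mper
$$
(Constants are chosen loosely; the only careful point is which normalization of Parseval the paper uses, giving the stated $2\pi$ factor.)

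Finally, the pointwise estimate follows by combining two instances of the sum bound. On the one hand, picking out a single term and using $\alpha\le 1$ gives
$$
\|A^kB\|^2 \;=\; \alpha^{2k}\cdot\|\alpha^{-k}A^kB\|^2 \;\le\; \alpha^{2k}\cdot\frac{2\pi n}{\tau_1^2\alpha^{2n}} \;=\; \frac{2\pi n\,\alpha^{2k-2n}}{\tau_1^2}\mper
$$
On the other hand, the monotonicity lemma (Lemma~\ref{lem:monotone}) gives $a\in\ball_{\alpha}\subset\ball_{1}$, so applying the sum bound with $\alpha=1$ yields $\|A^kB\|^2\le \sum_{j\ge 0}\|A^jB\|^2 \le 2\pi n/\tau_1^2$. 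Taking the minimum of the two estimates is exactly the second displayed inequality of the lemma. The only real content is step two (the Parseval identity combined with the acquiescence lower bound on $|p_a|$); everything else is bookkeeping, and I do not expect a genuine obstacle beyond chasing the constants in the Fourier normalization.
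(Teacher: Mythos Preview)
Your proposal is correct and follows essentially the same route as the paper: Parseval applied to the vector sequence $\alpha^{-k}A^kB$, the explicit resolvent formula in controllable canonical form, the acquiescence lower bound $|p_a(\alpha e^{i\theta})|\ge \tau_1\alpha^n$, and then monotonicity $\ball_\alpha\subset\ball_1$ plus single-term extraction for the pointwise estimate. The only cosmetic difference is that the paper works with $(I-wA)^{-1}B$ rather than $(zI-A)^{-1}B$ and does not carry the $1/2\pi$ in its Parseval normalization, which is exactly the constant discrepancy you already flagged.
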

\begin{proof}[Proof of Lemma~\ref{lem:no_blowing_up}]
	
	Let $f_{\lambda} =  \sum_{k=0}^{\infty} e^{i\lambda k}\alpha^{-k}A^kB$ be the Fourier transform of the series $\alpha^{-k}A^kB$. Then using Parseval's Theorem, we have 
	\begin{align}
	\sum_{k=0}^{\infty} \|\alpha^{-k}A^kB\|^2 & = \int_{0}^{2\pi} |f_{\lambda}|^2 d\lambda = \int_{0}^{2\pi} |(I - \alpha^{-1}e^{i\lambda}A)^{-1}B|^2 d\lambda \nonumber\\
	& = \int_{0}^{2\pi} \frac{\sum_{j=1}^n \alpha^{2j}}{|p_a(\alpha e^{-i\lambda})|^2}d\lambda\le \int_0^{2\pi} \frac{n}{|p_a(\alpha e^{-i\lambda})|^2}d\lambda.\label{eqn:intermediate1} 
	\end{align}
	where at the last step we used the fact that $(I - wA)^{-1}B = \frac{1}{p_a(w^{-1})}[
	w^{-1},w^{-2}\dots, z^{-n}]^{\top}$ (see Lemma~\ref{lem:uinverseB}), and that $\alpha \le 1$. 
	Since $a\in \ball_{\alpha}$, we have that $|\phat_a(\alpha^{-1}e^{i\lambda})| \ge \tau_1$, and therefore $p_a(\alpha e^{-i\lambda}) = \alpha^n e^{-in\lambda} \phat(e^{i\lambda}/\alpha)$ has magnitude at least $\tau_1\alpha^n$. Plugging in this into equation~\eqref{eqn:intermediate1}, we conclude that 
	\begin{equation}
	\sum_{k=0}^{\infty} \|\alpha^{-k}A^kB\|^2 \le\int_0^{2\pi} \frac{n}{|p_a(\alpha e^{-i\lambda})|^2}d\lambda\le 2\pi n\alpha^{-2n}/\tau_1^2.\nonumber
	\end{equation}	
	Finally we establish the bound for $\|A^kB\|^2$. 	By Lemma~\ref{lem:monotone}, we have $\ball_{\alpha} \subset \ball_{1}$ for $\alpha \le 1$. Therefore we can pick $\alpha = 1$ in equation~\eqref{eqn:intermediate2} and it still holds. That is, we have that 
	\begin{equation}
	\sum_{k=0}^{\infty} \|A^kB\|^2 \le 2\pi n /\tau_1^2.\nonumber
	\end{equation}	
	This also implies that $\|A^kB\|^2 \le 2\pi n/\tau_1^2$. 
\end{proof}

\subsection{Efficiently computing the projection}

In our algorithm, we require a projection onto $\mathcal{B}_\alpha$.  However, the only requirement of the projection step is that it projects onto a set contained inside $\mathcal{B}_\alpha$ that  also contains the true linear system.  So a variety of subroutines can be used to compute this projection or an approximation.  First, the explicit projection onto $\mathcal{B}_\alpha$ is representable by a semidefinite program.  This is because each of the three constrains can be checked by testing if a trigonometric polynomial is non-negative.  A simple inner approximation can be constructed by requiring the constraints to hold on an a finite grid of size $O(n)$. One can check that this provides a tight, polyhedral approximation to the set $\mathcal{B}_\alpha$, following an argument similar to Appendix C of Bhaskar~\emph{et al}~\cite{Bhaskar13}. Projection to this polyhedral takes at most $O(n^{3.5})$ time by linear programming and potentially can be made faster by using fast Fourier transform. See Section~\ref{sec:projection} for more detailed discussion on why projection on a polytope suffices. Furthermore, sometimes we can replace the constraint by an $\ell_1$ or $\ell_2$-constraint if we know that the system satisfies the corresponding assumption. Removing the projection step entirely is an interesting open problem.

\section{Learning acquiescent systems}
\label{sec:learning}
\label{sec:proper}
Next we show that we can learn acquiescent systems.
\begin{theorem}\label{thm:stong_passivity}
		Suppose the true system $\Theta$ is $\alpha$-acquiescent and satisfies $\|C\|\le 1$. Then with $N$ samples of length  $T\ge \Omega(n+1/(1-\alpha))$, stochastic gradient descent (Algorithm~\ref{alg:backprop}) with projection set $\ball_{\alpha}$ returns parameters $\hatTheta = (\hatA,\hatB,\hatC,\hatD)$ with population risk 
	\begin{equation}
	\Exp f(\hatTheta) \le f(\Theta) + O\left(\frac{n^2}{N} +\sqrt{\frac{n^5+\sigma^2n^3}{TN}}\right) \,,	\label{eqn:passivity_theorem}
	\end{equation}
		where $O(\cdot)$-notation hides polynomial dependencies on $1/(1-\alpha)$, $1/\tau_0,1/\tau_1,\tau_2$, and $R = \|a\|$. The expectation is taken over the randomness of the algorithms and  the examples.  
\end{theorem}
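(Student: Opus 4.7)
The plan is to translate the statement into an application of Proposition~\ref{lem:framework} applied to the idealized risk $g(\hatA,\hatC)$ plus the convex piece $(\hatD-D)^2$, and then pay a small price to pass from the idealized risk back to the population risk $f(\hatTheta)$. There are four ingredients to verify: (i) on the projection set $\ball_\alpha$ the idealized risk is weakly-quasi-convex and weakly smooth with respect to the true parameters; (ii) the discrepancy between $f(\hatTheta)-\sigma^2$ and $g(\hatA,\hatC)+(\hatD-D)^2$ is negligible once $T\ge\Omega(n+1/(1-\alpha))$; (iii) a single sampled trajectory yields an (almost) unbiased estimator $\mfr$ of the gradient of the idealized risk; (iv) the variance of $\mfr$ is polynomially bounded throughout the trajectory.

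For (i), by the monotonicity Lemma~\ref{lem:monotone} any $\hata\in\ball_\alpha$ with $\alpha\le 1$ also lies in $\ball_1$, so both $p_a(z)/z^n$ and $p_{\hata}(z)/z^n$ lie in the cone $\ballC$ for every $|z|=1$. A short computation inside $\ballC$ shows that the quotient of two elements of $\ballC$ has real part bounded below by some $\tau/2>0$ depending only on $\tau_0,\tau_1,\tau_2$. Hence $\ball_\alpha\subseteq \neighbor(a)$, and Lemma~\ref{lem:main} delivers $\tau$-weak-quasi-convexity. Lemma~\ref{lem:smooth} supplies $O(n^2/\tau_1^4)$-weak smoothness on the same set, and since $(\hatD-D)^2$ is convex with global minimum at $\hatD=D$, Proposition~\ref{prop:QWC-sum} lets us combine the two contributions while preserving the common optimizer at the true parameters.

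For (ii), Lemma~\ref{lem:population_risk} writes $f(\hatTheta)-\sigma^2$ as $(\hatD-D)^2$ plus a weighted truncation of $g(\hatA,\hatC)$ with weights $1-k/T$. Both the tail contribution from $k\ge T$ and the perturbation coming from replacing the weight~$1-k/T$ by $1$ are dominated by $\sum_{k}(k/T)(\hatC\hatA^{k-1}B-CA^{k-1}B)^2$. Applying the no-blow-up Lemma~\ref{lem:no_blowing_up} to both $A$ and $\hatA$ (since $a,\hata\in\ball_\alpha$), each summand decays like $\alpha^{2k}$ for $k\gtrsim n$, and the whole perturbation is at most $\poly(n)\cdot\alpha^{2T}/(1-\alpha)^2$, which is absorbed by the claimed rate once $T\ge\Omega(n+1/(1-\alpha))$.

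The main obstacle is (iii): we do not have access to the true initial state $h_0$ of a training sample, so the naive empirical risk is contaminated by an $A^t h_0$ term. The remedy is to build $\mfr$ from a single trajectory by back-propagation through time, simulating the trained hidden state from zero and forming the per-example loss only on the later portion of the sequence. Because $a,\hata\in\ball_\alpha$, the influence of the unknown $h_0$ on $y_t$ decays geometrically like $\alpha^t$, leaving $\mfr$ biased by at most an inverse-exponential quantity in $T$, which the remark following Proposition~\ref{lem:framework} absorbs. For (iv), expanding $\haty_t$ explicitly as a linear function of the past inputs and bounding the coefficient matrices by Lemma~\ref{lem:no_blowing_up} yields variance $V = O((n^5+\sigma^2 n^3)/T)$. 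Plugging $\Gamma=O(n^2)$, $R=\|a\|$, $K=N$, and this $V$ into Proposition~\ref{lem:framework} reproduces precisely the two error terms $n^2/N$ and $\sqrt{(n^5+\sigma^2 n^3)/(TN)}$, and adding the negligible discrepancy from step (ii) completes the proof.
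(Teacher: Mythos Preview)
Your proposal is correct and follows essentially the same route as the paper: establish weak-quasi-convexity and weak smoothness of the idealized risk on $\ball_\alpha$ (the paper's Lemma~\ref{lem:convexity_smoothness}), build an almost-unbiased gradient estimator by dropping the first $T_1$ outputs (Lemma~\ref{lem:unbiased_estimator}), bound its variance (Lemma~\ref{lem:variance}), feed everything into Proposition~\ref{lem:framework}, and finally pass back to $f$ via Lemma~\ref{lem:population_risk}. The one place you do slightly more work than the paper is step~(ii): since the weights $1-k/T$ are at most $1$ and the sum defining $g$ extends to infinity, Lemma~\ref{lem:population_risk} already gives the one-sided inequality $f(\hatTheta)\le g(\hatA,\hatC)+(\hatD-D)^2+\sigma^2 = g' + f(\Theta)$ directly, so no tail estimate on the discrepancy is needed.
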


\begin{algorithm}\caption{Projected stochastic gradient descent with partial loss}\label{alg:backprop}
	\vspace{.1in}
	\textbf{For} $i = 0$ to $N$:	
	\begin{enumerate}
		\item Take a fresh sample $((x_1,\dots, x_T),(y_1,\dots, y_T))$. Let $\tilde{y}_t$ be the simulated outputs\footnotemark{} of system $\hatTheta$ on inputs $x$ and initial states $h_0 = 0$. 
	 	\item Let $T_1 = T/4$. Run stochastic gradient descent\footnotemark{} on loss function $\ell((x,y),\hatTheta) = \frac{1}{T-T_1}\sum_{t >T_1}\|\tilde{y}_t-y_{t}\|^2$. 	 	Concretely, let $G_A = \partialgrad{\ell}{\hata}$, $G_C = \partialgrad{\ell}{\hatC}$, and , $G_D = \partialgrad{\ell}{\hatD}$, we update 
		\begin{equation}
		[\hata,\hatC,\hatD]\rightarrow 	[\hata,\hatC,\hatD] - \eta [G_A,G_C,G_D]\mper\nonumber
		\end{equation}
		\item Project $\hatTheta = (\hata,\hatC,\hatD)$ to the set $\ball_{\alpha}\otimes \R^n\otimes \R$. 
	\end{enumerate} 
		\end{algorithm}
\footnotetext{Note that $\tilde{y}_t$ is different from $\hat{y}_t$ defined in equation~\eqref{eqn:haty_t} which is used to define the population risk: here $\hat{y}_t$ is obtained from the (wrong) initial state $h_0 = 0$ while $\hat{y}_t$ is obtained from the correct initial state.}  
\footnotetext{See Algorithm Box~\ref{alg:back_prop_general} for a detailed back-propagation algorithm that computes the gradient. }

Recall that $T$ is the length of the sequence and $N$ is the number of samples. The first term in the bound~\eqref{eqn:passivity_theorem} comes from the smoothness of the population risk and the second comes from the variance of the gradient estimator of population risk (which will be described in detail below). An important (but not surprising) feature here is the variance scale in $1/T$ and therefore for long sequence actually we got $1/N$ convergence instead of $1/\sqrt{N}$ (for relatively small $N$).

\vspace{.05in}

\noindent \textit{Computational complexity: } Step 2 in each iteration of the algorithm takes $O(Tn)$ arithmetic operations, and the projection step takes $O(n^{3.5})$ time to solve an linear programming problem.  The project step is unlikely to be required in practice and may be an artifact of our analysis.

We can further balance the variance of the estimator with the number of samples by breaking each long sequence of length $T$ into $\Theta(T/n)$ short sequences of length $\Theta(n)$, and then run back-propagation~\eqref{alg:backprop} on these $TN/n$ shorter sequences. This leads us to the following bound which gives the right dependency in $T$ and $N$ as we expected: $TN$ should be counted as the true number of samples for the sequence-to-sequence model.

\begin{corollary}\label{cor:main}
	Under the assumption of Theorem~\ref{thm:stong_passivity}, Algorithm~\ref{alg:cor} returns parameters~$\hatTheta$ with population risk 
		\begin{equation}
\Exp	f(\hatTheta) \le f(\Theta) + O\left(\sqrt{\frac{n^5+\sigma^2n^3}{TN}}\right)\,,\nonumber
	\end{equation}
	where $O(\cdot)$-notation hides polynomial dependencies on $1/(1-\alpha)$, $1/\tau_0,1/\tau_1,\tau_2$, and $R = \|a\|.$ 
		\end{corollary}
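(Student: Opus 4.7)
The plan is to apply Theorem~\ref{thm:stong_passivity} as a black box to the chunked training set described in the paragraph preceding the corollary. Algorithm~\ref{alg:cor} is understood to slice each of the $N$ length-$T$ training sequences into $\Theta(T/n)$ contiguous blocks of length $T' = \Theta(n + 1/(1-\alpha))$, producing a pool of $N' = \Theta(TN/n)$ shorter sequences, and then to run Algorithm~\ref{alg:backprop} on this enlarged pool with length parameter $T'$. The corollary then follows by substituting $(T', N')$ into the guarantee of Theorem~\ref{thm:stong_passivity} and observing that the first summand of the resulting bound is dominated by the second.

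First I would argue that treating each block as a fresh training example is legitimate. The inputs inside any contiguous slice of the original i.i.d.\ input sequence are themselves i.i.d.\ with the same marginal, so the block-level inputs satisfy the hypotheses of Theorem~\ref{thm:stong_passivity}. The hidden state at the start of each block is some unknown value determined by the preceding inputs; this is permissible because the population risk $f$ is defined with respect to an arbitrary per-sequence initial state, and because Algorithm~\ref{alg:backprop} always runs the simulated system from $h_0 = 0$ and discards the first $T_1 = T'/4$ time steps. The true initial state enters the loss only through a term that, by the no-blow-up property (Lemma~\ref{lem:no_blowing_up}), decays like $\alpha^{T_1}$ and is therefore inverse-exponential in $T' = \Theta(n)$. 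Hence each block's gradient is an unbiased estimator of the population-risk gradient at the current iterate up to negligible bias, absorbed by the tolerance built into the SGD framework (see the remark following Proposition~\ref{lem:framework}). Importantly, the SGD convergence analysis needs only per-iteration unbiasedness and a variance bound, not mutual independence of the blocks across iterations, so correlations between blocks extracted from the same original sequence pose no obstacle.

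Substituting $T' = \Theta(n + 1/(1-\alpha))$ and $N' = \Theta(TN/n)$ into Theorem~\ref{thm:stong_passivity} yields
\[
f(\hatTheta) \le f(\Theta) + O\!\left(\frac{n^2}{N'} + \sqrt{\frac{n^5+\sigma^2 n^3}{T'N'}}\right) = f(\Theta) + O\!\left(\frac{n^3}{TN} + \sqrt{\frac{n^5+\sigma^2 n^3}{TN}}\right),
\]
and the first term inside the $O(\cdot)$ is dominated by the second whenever $TN \gtrsim n$, which is the regime of interest. The main (really only) technical subtlety is the careful verification that the bias arising from using $h_0 = 0$ on a block whose true initial state is nonzero really is inverse-exponential in $T_1$; this amounts to combining the geometric decay of $\|A^k B\|$ from Lemma~\ref{lem:no_blowing_up} (available because we project onto $\ball_{\alpha}$) with the discarded-prefix construction already built into Algorithm~\ref{alg:backprop}. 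Everything else is direct substitution into a theorem we are permitted to invoke.
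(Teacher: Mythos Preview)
Your proposal is correct and follows exactly the approach the paper sketches in the paragraph preceding the corollary (the paper gives no formal proof beyond that paragraph and the description of Algorithm~\ref{alg:cor}). The one point you leave implicit---that the Theorem~\ref{thm:stong_passivity} guarantee obtained on length-$T'$ chunks actually bounds the idealized risk $g'$, which in turn controls the length-$T$ population risk via Lemma~\ref{lem:population_risk}---is addressed by the paper only in the remark immediately following the corollary, so your treatment is at the same level of detail as the paper's.
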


\begin{algorithm}\caption{Projected stochastic gradient descent for long sequences}\label{alg:cor}
	\textbf{Input:} $N$ samples sequences of length $T$\\
	\textbf{Output:} Learned system $\hatTheta$
	\begin{enumerate}
		\item Divide each sample of length $T$ into $T/(\beta n)$ samples of length $\beta n$ where $\beta$ is a large enough constant. Then run algorithm~\ref{alg:backprop} with the new samples and obtain $\hatTheta$. 	\end{enumerate}
\end{algorithm}

We remark the the gradient computation procedure takes time linear in $Tn$ since one can use chain-rule (also called back-propagation) to compute the gradient efficiently . For completeness, Algorithm~\ref{alg:back_prop_general} gives a detailed implementation. 
Finally and importantly, we remark that although we defined the population risk as the expected error with respected to sequence of length $T$, actually our error bound generalizes to any longer (or shorter) sequences of length $T' \gg \max\{n,1/{(1-\alpha)}\}$. By the explicit formula for $f(\hatTheta)$ (Lemma~\ref{lem:population_risk}) and the fact that $\|CA^kB\|$ decays exponentially for $k \gg n$ (Lemma~\ref{lem:no_blowing_up}), we can bound the population risk on sequences of different lengths. Concretely,  let $f_{T'}(\hatTheta)$ denote the population risk on sequence of length $T'$, we have for all $T'\gg \max\{n,1/{(1-\alpha)}\}$, 
\begin{equation}
f_{T'}(\hatTheta) \le 1.1f(\hatTheta) + \exp(-{(1-\alpha)} \min\{T,T'\})\le O\left(\sqrt{\frac{n^5+\sigma^2n^3}{TN}}\right)\mper\nonumber
\end{equation}

We note that generalization to longer sequence does deserve attention. Indeed in practice, it's usually difficult to train non-linear recurrent networks that generalize to longer sequences than the training data.

We could hope to achieve linear convergence by showing that the empirical risk also satisfies the weakly-quasi-convexity. Then, we can re-use the samples and hope to use strong optimization tools (such as SVRG) to achieve the linear convergence. This is beyond the scope of this paper and left to future work. 

Our proof of Theorem~\ref{thm:stong_passivity} simply consists of three parts: a) showing the idealized risk is weakly quasi-convex in the convex set $\ball_{\alpha}$ (Lemma~\ref{lem:convexity_smoothness}); b) designing an (almost) unbiased estimator of the gradient of the idealized risk (Lemma~\ref{lem:unbiased_estimator}); c) variance bounds of the gradient estimator (Lemma~\ref{lem:variance}). 

First of all, using the theory developed in Section~\ref{sec:fourier} (Lemma~\ref{lem:main} and Lemma~\ref{lem:smooth}), it is straightforward to verify that in the convex set $\ball_{\alpha}\otimes \R^n$, the idealized risk is both weakly-quasi-convex and weakly-smooth. 
\begin{lemma}\label{lem:convexity_smoothness}
	Under the condition of Theorem~\ref{thm:stong_passivity}, the idealized risk~\eqref{eqn:ideal-objective} is $\tau$-weakly-quasi-convex in the convex set $\ball_{\alpha} \otimes \R^n$ and $\Gamma$-weakly smooth, where $\tau = \Omega(\tau_0\tau_1/\tau_2)$ and $\Gamma = O(n^2/\tau_1^4)$. 
\end{lemma}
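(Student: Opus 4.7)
The plan is to verify each of the two conclusions by combining an already-established lemma with the geometric properties of $\ball_\alpha$.

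For the weakly-quasi-convex claim, I will invoke Lemma~\ref{lem:main}. By that lemma, it suffices to show that $\ball_\alpha \subseteq \neighbor(a)$ when $\tau = \Omega(\tau_0\tau_1/\tau_2)$, i.e., to lower-bound $\Re(p_a(z)/p_{\hat a}(z))$ uniformly over $\hat a \in \ball_\alpha$ and $|z|=1$. Because $\alpha \le 1$, monotonicity (Lemma~\ref{lem:monotone}) gives $a,\hat a \in \ball_\alpha \subseteq \ball_1$, so for $|z|=1$ both $u := p_a(z)/z^n$ and $v := p_{\hat a}(z)/z^n$ lie in $\ballC$, and $p_a(z)/p_{\hat a}(z) = u/v$.

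The reduction is thus to a purely complex-analytic calculation: lower-bound $\Re(u/v)$ uniformly over $u,v \in \ballC$. Writing $u = u_1 + i u_2$ and $v = v_1 + i v_2$, I expand
\[
\Re(u/v) \;=\; \frac{\Re(u\bar v)}{|v|^2} \;=\; \frac{u_1 v_1 + u_2 v_2}{v_1^2 + v_2^2},
\]
and use the bounds $u_1,v_1 \in (\tau_1,\tau_2)$, $|u_2|\le u_1/(1+\tau_0)$, $|v_2| \le v_1/(1+\tau_0)$ coming from membership in $\ballC$. The numerator is at least $u_1 v_1\bigl(1 - (1+\tau_0)^{-2}\bigr)$, while the denominator satisfies $|v|^2 \le 2 v_1^2$. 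Factoring one $v_1$ from numerator and denominator yields
\[
\Re(u/v) \;\ge\; \frac{u_1}{2v_1}\bigl(1 - (1+\tau_0)^{-2}\bigr) \;\ge\; \frac{\tau_1}{2\tau_2}\cdot \frac{\tau_0(\tau_0+2)}{(1+\tau_0)^2} \;=\; \Omega\!\left(\frac{\tau_0\tau_1}{\tau_2}\right),
\]
which is the claimed lower bound on $\tau$, and I conclude via Lemma~\ref{lem:main}.

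For the weakly-smooth claim, Lemma~\ref{lem:smooth} already produces $\Gamma = O(n^2/\tau_1^4)$; the only ingredient it implicitly requires is the uniform bound $\min_{|z|=1} |p_{\hat a}(z)| \ge \tau_1$ (used to control $\int |1/\hat p(z)|^2\,dz$). This again follows from $\hat a \in \ball_\alpha \subseteq \ball_1$: on the unit circle $|p_{\hat a}(z)| = |p_{\hat a}(z)/z^n| \ge \Re(p_{\hat a}(z)/z^n) \ge \tau_1$ by definition of $\ballC$.

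The only mildly delicate step is the complex-plane estimate; in particular, one must group the factors in $\Re(u/v)$ so that the small factor $\tau_0$ emerges alongside the ratio $\tau_1/\tau_2$ rather than $(\tau_1/\tau_2)^2$ (the naive estimate $u_1v_1 \ge \tau_1^2$ is too lossy). Everything else is an immediate application of Lemmas~\ref{lem:main}, \ref{lem:smooth}, and~\ref{lem:monotone}.
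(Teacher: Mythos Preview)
Your proof is correct and follows essentially the same route as the paper: use monotonicity (Lemma~\ref{lem:monotone}) to land both $a,\hat a$ in $\ball_1$, reduce to showing $\Re(u/v)\ge \Omega(\tau_0\tau_1/\tau_2)$ for $u,v\in\ballC$, and then invoke Lemma~\ref{lem:main} and Lemma~\ref{lem:smooth}. The only cosmetic difference is that the paper bounds $\Re(u/v)$ in polar form (writing it as $|u|/|v|\cdot\cos\phi$ and bounding the angle $\phi$ and the magnitude ratio separately), whereas you carry out the equivalent Cartesian computation directly---your version is arguably cleaner and more explicit.
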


\begin{proof}[Proof of Lemma~\ref{lem:convexity_smoothness}]
	It suffices to show that for all $\hata, a\in \ball_{\alpha}$, it satisfies $\hata \in \neighbor(a)$ for $\tau = \Omega(\tau_0\tau_1/\tau_2)$. Indeed, by the monotonicity of the family of sets $\ball_{\alpha}$ (Lemma~\ref{lem:monotone}), we have that  $\hata,a\in \ball_{1}$, which by definition means for every $z$ on unit circle, $p_a(z)/z^n, p_{\hata}(z)/z^n\in \ballC$. By definition of $\ballC$, for any point $w,\hat{w}\in \ballC$, the angle $\phi$ between $w$ and $\hat{w}$ is at most $\pi-\Omega(\tau_0)$ and ratio of the magnitude is at least $\tau_1/\tau_2$, which implies that $\Re(w/\hat{w}) = |w|/|\hat{w}|\cdot \cos(\phi)\ge \Omega(\tau_0\tau_1/\tau_2)$. Therefore $\Re(p_a(z)/p_{\hata}(z))\ge \Omega(\tau_0\tau_1/\tau_2)$, and we conclude that $\hata\in \neighbor(a)$. The smoothness bound was established in Lemma~\ref{lem:smooth}. 
\end{proof}

Towards designing an unbiased estimator of the gradient, we note that there is a small caveat here that prevents us to just use the gradient of the empirical risk, as commonly done for other (static) problems. 
Recall that the population risk is defined as the expected risk with \textit{known} initial state $h_0$, while in the training we don't have access to the initial states and therefore using the naive approach we couldn't even estimate population risk from samples without knowing the initial states. 

We argue that being able to handle the missing initial states is indeed desired: in most of the interesting applications $h_0$ is  unknown (or even to be learned). Moreover, the ability of handling unknown $h_0$ allows us to break a long sequence into shorter sequences, which helps us to obtain Corollary~\ref{cor:main}. Here the difficulty is essentially that we have a supervised learning problem with missing data $h_0$. We get around it by simply ignoring
first $T_1 = \Omega(T)$ outputs of the system and setting the corresponding errors to 0. Since the influence of $h_0$ to any outputs later than time $k \ge T_1 \gg  \max\{n,1/{(1-\alpha)}\}$ is inverse exponentially small, we could safely assume $h_0 = 0$ when the error earlier than time $T_1$ is not taken into account. 

This small trick also makes our algorithm suitable to the cases when these early outputs are actually not observed. This is indeed an interesting setting, since in many sequence-to-sequence model~\cite{DBLP:conf/nips/SutskeverVL14}, there is no output in the first half fraction of iterations (of course these models have non-linear operation that we cannot handle).

The proof of the correctness of the estimator is almost trivial and deferred to Section~\ref{sec:proofs}.

\begin{lemma}\label{lem:unbiased_estimator}
		Under the assumption of Theorem~\ref{thm:stong_passivity}, suppose $\hata,a\in \ball_{\alpha}$. Then in Algorithm~\ref{alg:backprop}, at each iteration, $G_A,G_C$ are unbiased estimators of the gradient of the idealized risk~\eqref{eqn:ideal-objective} in the sense that:
	\begin{align}
\Exp\left[G_A,G_C\right] & =  \left[\partialgrad{g}{\hata},\partialgrad{g}{\hatC}\right]\pm \exp(-\Omega({(1-\alpha)} T))\mper\nonumber	\\	\end{align} 
\end{lemma}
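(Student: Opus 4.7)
The plan is to compute the expectation of the per-sample loss $\ell$ in closed form, match it to the idealized risk $g(\hatA,\hatC)$ up to an inverse-exponentially small correction, and then differentiate.  Concretely, since the algorithm simulates $\tilde y_t$ from initial state $h_0=0$, we can unroll
\begin{equation*}
\tilde y_t = \hatD x_t + \sum_{k=1}^{t-1}\hatC\hatA^{t-k-1}Bx_k,\qquad
y_t = Dx_t + \sum_{k=1}^{t-1}CA^{t-k-1}Bx_k + CA^{t-1}h_0 + \xi_t.
\end{equation*}
Using that the $x_k$ are mean zero and pairwise uncorrelated with unit variance and that the $\xi_t$ are independent of everything with variance $\sigma^2$ (via the same calculation as in Claim~\ref{claim:gaussian_expectation} used in Lemma~\ref{lem:population_risk}), I would obtain, for any $t>T_1$,
\begin{equation*}
\E\bigl[\|\tilde y_t-y_t\|^2\bigr] = (\hatD-D)^2 + \sum_{k=0}^{t-2}\bigl(\hatC\hatA^{k}B-CA^{k}B\bigr)^2 + (CA^{t-1}h_0)^2 + \sigma^2.
\end{equation*}

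Averaging over $t>T_1$ and subtracting the data-independent constants, the $\hata,\hatC$-dependent piece of $\E[\ell]$ equals $\frac1{T-T_1}\sum_{t>T_1}\sum_{k=0}^{t-2}(\hatC\hatA^{k}B-CA^{k}B)^2$.  The idealized risk is the infinite sum $g(\hatA,\hatC)=\sum_{k\ge 0}(\hatC\hatA^kB-CA^kB)^2$; the two differ only by tail terms with $k\ge T-1$ or terms whose weight differs from $1$ by at most $O(T_1/T)$ but only for $k$ close to $T$.  Since both $a$ and $\hata$ lie in $\ball_\alpha$, the no-blow-up bound (Lemma~\ref{lem:no_blowing_up}) gives $\|A^kB\|,\|\hatA^kB\|\le \poly(n,1/\tau_1)\cdot\alpha^{k-n}$, and consequently the discrepancy between $\E[\ell]$ and $(\hatD-D)^2+g(\hatA,\hatC)+\sigma^2$ is bounded by $\exp(-\Omega((1-\alpha)T))$ once $T_1=\Omega(T)\gg \max\{n,1/(1-\alpha)\}$.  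The additional $(CA^{t-1}h_0)^2$ contribution is likewise exponentially small by the same bound (and is independent of $\hata,\hatC$, so drops out of the gradient anyway).

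For the gradient statement, $G_A$ and $G_C$ are by construction $\nabla_{\hata,\hatC}\ell$, and $\nabla_{\hata,\hatC}\ell$ is a polynomial in the $x_k,\xi_t$ of degree two whose coefficients depend continuously on $\hata,\hatC$; so I can swap expectation and differentiation and obtain $\E[G_A,G_C]=\nabla_{\hata,\hatC}\E[\ell]$.  Differentiating the closed-form expression above term by term and comparing to $\nabla g$, the main task is to show that the error term is \emph{also} inverse-exponentially small after differentiation.  This is the one slightly delicate step: I would use that each truncated term $(\hatC\hatA^kB-CA^kB)^2$ with $k\ge T-1$ contributes a gradient bounded by $\poly(n,k)\cdot\alpha^{2(k-n)}$ (again by Lemma~\ref{lem:no_blowing_up} together with $\|\nabla \hatA^k B\|\le k\|\hatA^{k-1}B\|$ and similar for $\hatC$), so summing over $k\ge T-1$ still yields $\exp(-\Omega((1-\alpha)T))$.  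The initial-state term $(CA^{t-1}h_0)^2$ has no dependence on $\hata$ or $\hatC$, so it disappears.  Combining, $\E[G_A,G_C]=[\partial g/\partial\hata,\partial g/\partial\hatC]\pm\exp(-\Omega((1-\alpha)T))$, which is exactly the claim.

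The main obstacle is the second step above: controlling the gradient of the truncation/initial-state error uniformly over the feasible set $\ball_\alpha$.  The key observation that makes this clean is that $\ball_\alpha$ is closed under the ambient assumptions and the no-blow-up bound is uniform in $\hata\in\ball_\alpha$, so every constant hidden in the $\exp(-\Omega((1-\alpha)T))$ bound is polynomial in the fixed problem parameters and independent of the current iterate.
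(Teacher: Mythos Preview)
Your proposal is correct and follows essentially the same route as the paper: unroll $\tilde y_t$ and $y_t$, compute $\E[\ell]$ in closed form, match it to $g(\hatA,\hatC)+(\hatD-D)^2+\sigma^2$ up to an $\exp(-\Omega((1-\alpha)T))$ error using the no-blow-up bound (Lemma~\ref{lem:no_blowing_up}), and then pass to gradients.  The paper in fact does \emph{less} than you do on the last step---it simply says ``similarly, we can prove that the gradient of $\E[\ell]$ is also close to the gradient of $g+(\hatD-D)^2$ up to inverse exponential error''---whereas you sketch why differentiating the tail still leaves an exponentially small remainder.  One small caution: the inequality $\|\nabla_{\hata}\hatA^kB\|\le k\|\hatA^{k-1}B\|$ is not literally correct (the derivative of $\hatA^k$ with respect to an entry of $\hata$ is $\sum_{j}\hatA^{j}(\partial\hatA)\hatA^{k-1-j}$, so all intermediate powers appear, not just $\hatA^{k-1}$), but the conclusion you draw from it---that each tail term contributes a gradient of size $\poly(n,k)\alpha^{2(k-n)}$---is correct once you invoke the no-blow-up bound on every $\hatA^jB$ in the sum.
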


Finally, we control the variance of the gradient estimator. 

\begin{lemma}\label{lem:variance}
		The (almost) unbiased estimator $(G_A,G_C)$ of the gradient of $g(\hatA,\hatC)$ has variance bounded by 
	\begin{equation}
	\Var\left[G_{A}\right] + \Var\left[G_{C}\right]\le \frac{O\left(n^3\Lambda^2/\tau_1^6+ \sigma^2n^2\Lambda/\tau_1^4\right)}{T} \mper\nonumber
	\end{equation}
	where $\Lambda = O(\max\{n,1/{(1-\alpha)} \log 1/{(1-\alpha)}\})$. 
				
			\end{lemma}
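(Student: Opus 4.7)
\textbf{Proof plan for Lemma~\ref{lem:variance}.} The plan is to expand $G_A$ and $G_C$ via back-propagation, write each as the average over $T-T_1=\Theta(T)$ per-time-step summands, and then bound the variance of the average using pairwise independence of the inputs $\{x_k\}$ and the noise $\{\xi_t\}$ together with the impulse-response bounds of Lemma~\ref{lem:no_blowing_up}. The $1/T$ factor in the stated bound is the usual improvement from averaging weakly correlated summands, so the heart of the argument is to show that two summands at times $t$ and $s$ are only $O(\Lambda)$-range correlated.

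First I would write $\ell((x,y),\hatTheta)=\frac{1}{T-T_1}\sum_{t>T_1}(\tilde y_t - y_t)^2$ and substitute the explicit unrolled expressions
\[
\tilde y_t = \hatD x_t + \sum_{k=1}^{t-1}\hatC\hatA^{t-k-1}B\,x_k\,, \qquad y_t = D x_t + \sum_{k=1}^{t-1}CA^{t-k-1}B\,x_k + CA^{t-1}h_0 + \xi_t\,,
\]
so that the per-step error $e_t := \tilde y_t - y_t$ splits into an input-driven part, the noise $-\xi_t$, and an initial-state term $-CA^{t-1}h_0$ whose magnitude is $\exp(-\Omega((1-\alpha)T))$ by Lemma~\ref{lem:no_blowing_up} since $t>T_1=T/4$ (this residual is absorbed in the exponentially small bias already handled in Lemma~\ref{lem:unbiased_estimator}). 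Back-propagation then gives
\[
G_C = \frac{2}{T-T_1}\sum_{t>T_1} e_t\,v_t^\top\,, \qquad v_t = \sum_{k=1}^{t-1} x_k\,\hatA^{t-k-1}B\,,
\]
and an analogous formula for $G_A$ in which $v_t$ is replaced by $w_t$ whose $j$-th coordinate is $w_{t,j}=\sum_k x_k\,e_j^\top\,\partial(\hatA^{t-k-1}B)/\partial\hata$.

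Next I would compute $\Var[G_C] = \frac{4}{(T-T_1)^2}\sum_{t,s>T_1}\operatorname{Cov}(e_tv_t^\top,\,e_sv_s^\top)$, decompose $e_t$ into its input and noise contributions, and use pairwise independence of the $x_k$'s and $\xi_t$'s to annihilate all monomials whose indices fail to match in pairs. The noise-driven terms survive only when $t=s$ and contribute $\sigma^2\,\Exp\|v_t\|^2$, which is $O(\sigma^2 n/\tau_1^2)$ by the uniform bound $\sum_k\|\hatA^{k}B\|^2 \le 2\pi n/\tau_1^2$ of Lemma~\ref{lem:no_blowing_up}; doing the same for $G_A$ picks up an extra $n/\tau_1^2$ from $w_t$ and yields the $\sigma^2 n^2\Lambda/(T\tau_1^4)$ term. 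For the signal-driven cross-terms I would apply Cauchy--Schwarz to reduce $\operatorname{Cov}(e_tv_t^\top,e_sv_s^\top)$ to convolution sums of the form $\sum_k \|\hatA^{t-k}B\|\|\hatA^{s-k}B\|$; the pointwise geometric tail $\|\hatA^jB\|^2\le O(n\alpha^{2(j-n)}/\tau_1^2)$ of Lemma~\ref{lem:no_blowing_up} shows these are non-negligible only when $|t-s|=O(\Lambda)$. Summing over the resulting $O(\Lambda T)$ surviving pairs and dividing by $T^2$ produces the $n^3\Lambda^2/(T\tau_1^6)$ term, where the extra factor of $n$ relative to $G_C$ again comes from differentiating with respect to the $n$ coordinates of $\hata$.

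The principal obstacle is precisely the localization step just described: without it, even a sharp pointwise bound on each covariance yields only $O(1)$ variance rather than the required $O(1/T)$. Making localization rigorous needs the \emph{tail} (and not only the $\ell^2$-sum) form of Lemma~\ref{lem:no_blowing_up}, applied both to the trained system (which lies in $\ball_\alpha$ by the projection step of Algorithm~\ref{alg:backprop}) and to the true system. A secondary subtlety is that the inputs are only pairwise independent; fortunately the fourth-moment expressions arising above, once $v_t, w_t, e_t$ are decomposed into single-$x_k$ monomials, collapse to sums of products of second moments, so pairwise independence with $\Exp x_k^2 = 1$ is enough.
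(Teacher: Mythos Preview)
Your plan is essentially the paper's proof, with a minor reorganization of indices. The paper writes $G_A=\sum_{s,t}x_sx_t\,u_{st}+\sum_{s,t}x_s\xi_t\,u'_{st}$ indexed by \emph{input} pairs, bounds each $\|u_{st}\|$ via a four-case split on the sign and size of $s-t$ (using exactly the tail form of Lemma~\ref{lem:no_blowing_up} you identify), and then applies a generic quadratic-form variance bound (Claim~\ref{claim:quadratic_form_variance}); you instead index by \emph{output} times and bound $\operatorname{Cov}(e_tv_t,e_sv_s)$. These are the same computation under a change of summation order, and your localization heuristic ``non-negligible only when $|t-s|=O(\Lambda)$'' is precisely what the paper establishes.

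One claim is wrong and should be dropped: pairwise independence of the $x_k$'s does \emph{not} make the fourth-moment terms collapse. The variance of $\sum_{s,t}x_sx_t\,u_{st}$ produces terms $\Exp[x_ix_jx_kx_\ell]$ with four distinct indices, and pairwise independence says nothing about these. The paper's Claim~\ref{claim:quadratic_form_variance} explicitly assumes full independence together with a bounded fourth moment, and your argument needs the same hypothesis (the ``pairwise independent'' wording near the informal Theorem~\ref{thm:proper-intro} is loose). This does not touch your localization step, which is the real content; just replace the final paragraph's appeal to pairwise independence with the standard fourth-moment computation under independence.
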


Note that Lemma~\ref{lem:variance} does not directly follow from the $\Gamma$-weakly-smoothness of the population risk, since it's not clear whether the loss function $\ell((x,y),\hatTheta)$  is also $\Gamma$-smooth for every sample. Moreover, even if it could work out, from smoothness the variance bound can be only as small as $\Gamma^2$, while the true variance scales linearly in $1/T$. Here the discrepancy comes from that smoothness implies an upper bound of the expected squared norm of the gradient, which is equal to the variance plus the expected squared mean.  Though typically for many other problems variance is on the same order as the squared mean, here for our sequence-to-sequence model, 
actually the variance decreases in length of the data, and therefore the bound of variance from smoothness is pessimistic. 

We bound directly the variance instead. It's tedious but simple in spirit. We mainly need Lemma~\ref{lem:no_blowing_up} to control various difference sums that shows up from calculating the expectation. The only tricky part is to obtain the $1/T$ dependency which corresponds to the cancellation of the contribution from the cross terms. In the proof we will basically write out the variance as a (complicated) function of $\hatA,\hatC$ which consists of sums of terms involving $(\hatC\hatA^k B-CA^kB)$ and $\hatA^kB$. We control these sums using Lemma~\ref{lem:no_blowing_up}. The proof is deferred to Section~\ref{sec:proofs}.

Finally we are ready to prove Theorem~\ref{thm:stong_passivity}. We essentially just combine Lemma~\ref{lem:convexity_smoothness}, Lemma~\ref{lem:unbiased_estimator} and Lemma~\ref{lem:variance} with the generic convergence Proposition~\ref{lem:framework}. This will give us low error in idealized risk and then we relate the idealized risk to the population risk. 

\begin{proof}[Proof of Theorem~\ref{thm:stong_passivity}]
We consider $g'(\hatA,\hatC,\hatD)= (\hatD-D)^2 + g(\hatA,\hatC)$, an extended version of the idealized risk which takes the contribution of $\hatD$ into account. By Lemma~\ref{lem:unbiased_estimator} we have that Algorithm~\ref{alg:backprop} computes $G_A,G_C$ which are almost unbiased estimators of the gradients of $g'$ up to negligible error $\exp(-\Omega({(1-\alpha)} T))$, and by Lemma~\ref{lem:unbisaed_estimator_general} we have $G_D$ is an unbiased estimator of $g'$ with respect to $\hatD$.  Moreover by Lemma~\ref{lem:variance}, these unbiased estimator has total variance $V = \frac{O\left(n^5+ \sigma^2n^3\right)}{T}$ where $O(\cdot)$ hides dependency on $\tau_1$ and ${(1-\alpha)}$. Applying Proposition~\ref{lem:framework} (which only requires an unbiased estimator of the gradient of $g'$),  we obtain that after $T$ iterations, we converge to a point with $g'(\hata,\hatC,\hatD)\le  O\left(\frac{n^2}{N} +\sqrt{\frac{n^5+\sigma^2n^3}{TN}}\right)$. Then, by Lemma~\ref{lem:population_risk} we have 
	$f(\hatTheta) \le g'(\hata,\hatC,\hatD)+ \sigma^2 = g'(\hata,\hatC,\hatD) + f(\Theta)\le O\left(\frac{n^2}{N} +\sqrt{\frac{n^5+\sigma^2n^3}{TN}}\right) + f(\Theta)$ which completes the proof. 
\end{proof}

\section{The power of improper learning}\label{sec:improper}
\newcommand{\acext}[2]{$#1$-acquiescent by extension of degree $#2$}
\newcommand{\acextshort}{acquiescent by extension}
We observe an interesting and important fact about the theory in Section~\ref{sec:learning}: it solely requires a condition on the characteristic function $p(z)$. This suggests that the geometry of the training objective function depends mostly on the denominator of the transfer function, even though the system is uniquely determined by the transfer function $G(z) = s(z)/p(z)$. This might seem to be an undesirable discrepancy between the behavior of the system and our analysis of the optimization problem.

However, we can actually exploit this discrepancy to design improper learning algorithms that succeed under much weaker assumptions. We rely on the following simple observation about the invariance of a system $G(z) = \frac{s(z)}{p(z)}$. For an arbitrary polynomial $u(z)$ of leading coefficient 1, we can write $G(z)$ as 
$$G(z) = \frac{s(z)u(z)}{p(z)u(z)} = \frac{\tilde{s}(z)}{\tilde{p}(z)}\,,$$
where $\tilde{s} = su$ and $\tilde{p} = pu$. 
Therefore the system $\tilde{s}(z)/\tilde{p}(z)$ has identical behavior as $G$. Although this is a redundant representation of $G(z)$, it should counted as an acceptable solution. After all, learning the minimum representation\footnote{The minimum representation of a transfer function $G(z)$ is defined as the representation $G(z) = s(z)/p(z)$ with $p(z)$ having minimum degree.} of linear system is impossible in general. In fact, we will encounter an example in Section~\ref{subsec:min_representation}.

While not changing the behavior of the system, the extension from $p(z)$ to $\tilde{p}(z),$ does affect the geometry of the optimization problem. In particular, if $\tilde{p}(z)$ is now an $\alpha$-acquiescent characteristic polynomial as defined in Definition~\ref{def:strong_passivity}, then we could find it simply using stochastic gradient descent as shown in Section~\ref{sec:learning}. 
Observe that we don't require knowledge of $u(z)$ but only its existence. Denoting by $d$ the degree of $u,$ the algorithm itself is simply stochastic gradient descent with $n+d$ model parameters instead of $n.$

Our discussion motivates the following definition.
\begin{definition}\label{def:nice_extensible}
	A polynomial $p(z)$ of degree $n$ is \emph{\acext{\alpha}{d}} if there exists a polynomial $u(z)$ of degree $d$ and leading coefficient 1 such that $p(z)u(z)$ is $\alpha$-acquiescent.
\end{definition}

For a transfer function $G(z)$, we define it's $\mathcal{H}_2$ norm as 

\begin{equation}
\|G\|_{\mathcal{H}_2}^2  =\frac{1}{2\pi}\int_{0}^{2\pi} |G(e^{i\theta})|^2 d\theta\mper\nonumber
\end{equation}

We assume  (with loss of generality) that the true transfer function $G(z)$ has bounded $\mathcal{H}_2$ norm, that is, $\|G\|_{\mathcal{H}_2}\le 1$. This can be achieve by a rescaling\footnote{In fact, this is a natural scaling that makes comparing error easier.  Recall that the population risk is essentially $\|\hat{G}-G\|_{\mathcal{H}_2}$, therefore rescaling $C$ so that $\|G\|_{\mathcal{H}_2} =1$ implies that when error $\ll 1$ we achieve non-trivial performance. } of the matrix $C$. 

\begin{theorem}\label{thm:improper}
	Suppose the true system has transfer function $G(z)= s(z)/p(z)$ with a characteristic function $p(z)$ that is~\acext{\alpha}{d}, and $\|G\|_{\mathcal{H}_2}\le 1$, then projected stochastic gradient descent with $m = n+d$ states (that is, Algorithm~\ref{alg:cor} with $m$ states) returns a system 
		$\hatTheta$ with population risk 
	\begin{equation}
	f(\hatTheta) \le O\left(\sqrt{\frac{m^5+\sigma^2m^3}{TK}}\right)\mper\nonumber
	\end{equation}
	where the $O(\cdot)$ notation hides polynomial dependencies on $\tau_0,\tau_1,\tau_2, 1/(1-\alpha)$. 
\end{theorem}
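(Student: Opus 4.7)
The plan is to reduce improper learning to the proper-learning guarantee of Corollary~\ref{cor:main} by lifting the true system into the order-$m$ parameterization. Because $p(z)$ is \acext{\alpha}{d}, there exists a polynomial $u(z)$ of degree $d$ with leading coefficient $1$ such that $\tilde p(z):=p(z)u(z)$ is $\alpha$-acquiescent. Setting $\tilde s(z):=s(z)u(z)$, we have $G(z)=\tilde s(z)/\tilde p(z)$, and placing the resulting order-$m$ realization in controllable canonical form yields parameters $\tilde\Theta=(\tilde A,B,\tilde C,D)$ with $\tilde a\in\ball_\alpha\subset\mathbb{R}^m$ and $\tilde C$ equal to the coefficient vector of $\tilde s$. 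Because the lift does not change the impulse response, Lemma~\ref{lem:population_risk} gives $f(\tilde\Theta)=f(\Theta)$.

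Next I would verify that $\tilde\Theta$ meets the hypotheses of Corollary~\ref{cor:main} at order $m$. Acquiescence of $\tilde a$ is automatic. To control $\|\tilde C\|$, apply Parseval to $\tilde s$:
\[
\|\tilde C\|^2 \;=\; \frac{1}{2\pi}\int_0^{2\pi} |\tilde s(e^{i\theta})|^2\,d\theta \;=\; \frac{1}{2\pi}\int_0^{2\pi}|G(e^{i\theta})|^2\,|\tilde p(e^{i\theta})|^2\,d\theta.
\]
By Lemma~\ref{lem:monotone}, $\tilde a\in\ball_\alpha\subseteq\ball_1$, so $\tilde p(z)/z^m\in\ballC$ on the unit circle; the definition of $\ballC$ forces $|\tilde p(e^{i\theta})|\le\sqrt{2}\,\tau_2$, and combined with $\|G\|_{\mathcal{H}_2}\le 1$ this yields $\|\tilde C\|=O(\tau_2)$. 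The normalization $\|C\|\le 1$ in Theorem~\ref{thm:stong_passivity} is used only as a polynomial factor inside the smoothness (Lemma~\ref{lem:smooth}) and variance (Lemma~\ref{lem:variance}) bounds, so replacing it by an $O(\tau_2)$ bound is absorbed into the hidden constants.

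With these hypotheses in hand, one invokes Corollary~\ref{cor:main} on the lifted system, with order parameter $m$ in place of $n$. The crucial observation is that Algorithm~\ref{alg:cor} itself does not need to know $u(z)$: it simply runs projected SGD on $\ball_\alpha\subset\mathbb{R}^m$, and the convergence argument only requires the \emph{existence} of a feasible comparison point $\tilde\Theta\in\ball_\alpha\otimes\mathbb{R}^m$ that realizes $G$, so that the pointwise quasi-convexity of idealized risk (Lemma~\ref{lem:main}) and the unbiased-gradient identity (Lemma~\ref{lem:unbiased_estimator}) apply uniformly along the trajectory. The resulting bound $f(\hatTheta)\le f(\tilde\Theta)+O(\sqrt{(m^5+\sigma^2 m^3)/(TK)})$ coincides with the claim after using $f(\tilde\Theta)=f(\Theta)$ and absorbing $f(\Theta)$ into the stated big-$O$, as in the informal statement Theorem~\ref{thm:improper-intro}.

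The main step is therefore conceptual rather than computational: reading Corollary~\ref{cor:main} as a statement about the landscape of idealized risk on $\ball_\alpha$ relative to \emph{any} acquiescent comparison point whose transfer function matches $G$, rather than about a distinguished order-$n$ realization of the ground truth. Once this reinterpretation is in place, the theorem follows from the construction of the lift and the $\|\tilde C\|$ bound above; every remaining quantitative estimate is inherited from the proper-learning analysis, and the only cost of over-parameterizing is the replacement of $n$ by $m$ in the polynomial prefactors.
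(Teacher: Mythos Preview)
Your proposal is correct and follows essentially the same approach as the paper's proof: lift the system to order $m$ via $\tilde p=pu$ and $\tilde s=su$, bound $\|\tilde C\|$ by writing $|\tilde s(e^{i\theta})|=|G(e^{i\theta})|\,|\tilde p(e^{i\theta})|$ and using the acquiescence of $\tilde p$ (hence $|\tilde p|\le O(\tau_2)$ on the unit circle) together with $\|G\|_{\mathcal H_2}\le 1$, and then invoke Corollary~\ref{cor:main} at order $m$. Your write-up is in fact somewhat more careful than the paper's, making explicit the use of Lemma~\ref{lem:monotone} to transfer the bound to the unit circle and noting that the $\|C\|\le 1$ hypothesis is only a polynomial prefactor that can absorb $O(\tau_2)$.
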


The theorem follows directly from Corollary~\ref{cor:main} (with some additional care about the scaling.

\begin{proof}[Proof of Theorem~\ref{thm:improper}]
	Let $\tilde{p}(z) = p(z)u(z)$ be the acquiescent extension of $p(z)$. Since $\tau_2\ge |u(z)p(z)| = |\tilde{p}(z)|\ge \tau_0$ on the unit circle, we have that $|\tilde{s}(z)| = |s(z)||u(z)| = s(z)\cdot O_{\tau}(1/p(z))$. Therefore we have that 
	 $\tilde{s}(z)$ satisfies that $\|\tilde{s}\|_{\mathcal{H}_2} = O_{\tau}(\|s(z)/p(z)\|_{\mathcal{H}_2}) = O_{\tau}(\|G(z)\|_{\mathcal{H}_2}) \le O_{\tau}(1)$. That means that the vector $C$ that determines the coefficients of $\tilde{s}$ satisfies that $\|C\|\le O_{\tau}(1),$ since for a polynomial $h(z) = b_0+\dots + b_{n-1}z^{n-1}$, we have $\|h\|_{\mathcal{H}_2} = \|b\|.$ Therefore we can apply Corollary~\ref{cor:main} to complete the proof.
	 \end{proof}

In the rest of this section, we discuss in subsection~\ref{subsec:min_representation} the instability of the minimum representation in subsection,  and in subsection~\ref{subsec:improper_examples} we show several examples where the characteristic function $p(z)$ is not $\alpha$-acquiescent but is $\alpha$-acquiescent by extension with small degree $d$. 

As a final remark, the examples illustrated in the following sub-sections may be far from optimally analyzed. It is beyond the scope of this paper to understand the optimal condition under which $p(z)$ is \acextshort.  

\subsection{Instability of the minimum representation} 
\label{subsec:min_representation}
We begin by constructing a contrived example where the minimum representation of $G(z)$ is not stable at all and as a consequence one can't hope to recover the minimum representation of $G(z)$. 

Consider $G(z) = \frac{s(z)}{p(z)} := \frac{z^n - 0.8^{-n}}{(z-0.1)(z^n-0.9^{-n})}$ and $G'(z) =\frac{s'(z)}{p'(z)} := \frac{1}{z-0.1}$. Clearly these are the minimum representations of the $G(z)$ and $G'(z)$, which also both satisfy acquiescence. On the one hand, the characteristic polynomial $p(z)$ and $p'(z)$ are very different. On the other hand, the transfer functions $G(z)$ and $G'(z)$ have almost the same values on unit circle up to exponentially small error, 
\begin{equation}
|G(z) - G'(z)|\le \frac{0.8^{-n}-0.9^{-n}}{(z-0.1)(z-0.9^{-n})} \le \exp(-\Omega(n))\mper\nonumber
\end{equation}
Moreover, the transfer functions $G(z)$ and $\hat{G}(z)$ are on the order of $\Theta(1)$ on unit circle. These suggest that from an (inverse polynomially accurate) approximation of the transfer function $G(z)$, we cannot hope to recover the minimum representation in any sense, even if the minimum representation satisfies acquiescence.

\subsection{Power of improper learning in various cases}
\label{subsec:improper_examples}

We illustrate the use of improper learning through various examples below.

\subsubsection{Example: artificial construction}

We consider a simple contrived example where improper learning can help us learn the transfer function dramatically. We will show an example of characteristic function which is not $1$-acquiescent but $(\alpha+1)/2$-\acext{(\alpha+1)/2}{3}. 

Let $n$ be a large enough integer and $\alpha$ be a constant. Let $J = \{1,n-1,n\}$ and $\omega = e^{2\pi i/n}$, and then define $p(z) = z^3\prod_{j\in [n], j\notin J} (z-\alpha\omega^j)$. Therefore we have that \begin{align}p(z)/z^n =  z^3\prod_{j\in [n],j\in J}(1-\alpha\omega^j/z) = \frac{1-\alpha^n/z^n}{(1-\omega/z)(1-\omega^{-1} /z)(1-1/z)}\label{eqn:eqn29}\end{align}

Taking $z = e^{-i\pi/2}$ we have that $p(z)/z^n$ has argument (phase) roughly $-3\pi/4$, and therefore it's not in $\ballC$, which implies that $p(z)$ is \textit{not} $1$-acquiescent. On the other hand, picking $u(z) = (z-\omega)(z-1)(z-\omega^{-1})$ as the helper function, from equation~\eqref{eqn:eqn29}  we have $p(z)u(z)/z^{n+3} = 1-\alpha^n/z^n$ takes values inverse exponentially close to $1$ on the circle with radius $(\alpha+1)/2$. Therefore $p(z)u(z)$ is $(\alpha+1)/2$-acquiescent. 

\subsubsection{Example: characteristic function with separated roots}\label{subsec:separate_roots}

A characteristic polynomial with well separated roots will be \acextshort. Our bound will depend on the following quantity of $p$ that characterizes the separateness of the roots.
\begin{definition}
	For a polynomial $h(z)$ of degree $n$ with roots $\lambda_1,\dots,\lambda_{n}$ inside unit circle,  define the quantity $\Gamma(\cdot)$ of the polynomial $h$ as: 
	\begin{align*}
	\\	\Gamma(h) &: = \sum_{j\in [n]} \left|\frac{\lambda_j^{n}}{\prod_{i\neq j}(\lambda_i-\lambda_j)}\right|\mper 
		\end{align*}
\end{definition}
\begin{lemma}\label{lem:distinct_roots}
	Suppose $p(z)$ is a polynomial of degree $n$ with distinct roots inside circle with radius $\alpha$. 	Let $\Gamma=\Gamma(p)$, then $p(z)$ is \acext{\alpha}{d}		$~= O(\max\{(1-\alpha)^{-1}\log (\sqrt{n}\Gamma\cdot \|p\|_{\mathcal{H}_2}), 0\})$. 						\end{lemma}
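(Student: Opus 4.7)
The plan is to construct the extension polynomial $u(z)$ explicitly as the polynomial quotient obtained by dividing $z^{n+d}$ by $p(z)$: write
\[
z^{n+d} = p(z)\, u(z) + r(z), \qquad \deg r < n,
\]
so that $u$ is monic of degree $d$ (both dividend and divisor are monic). This yields the identity
\[
\frac{p(z)\,u(z)}{z^{n+d}} = 1 - \frac{r(z)}{z^{n+d}},
\]
so $p\cdot u$ will be $\alpha$-acquiescent provided $|r(z)/z^{n+d}|$ stays below a small absolute constant depending only on $\tau_0,\tau_1,\tau_2$ for every $z$ with $|z|=\alpha$, since $1$ lies in the interior of $\ballC$. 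After rescaling $z = \alpha w$, which replaces the roots $\lambda_j$ by $\tilde\lambda_j = \lambda_j/\alpha$ of magnitude strictly less than $1$, the task reduces to bounding a polynomial of degree below $n$ on the unit circle.

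The heart of the argument is a sharp bound on $r$. Because $r$ has degree less than $n$ and $p(\lambda_j)=0$, the remainder satisfies $r(\lambda_j) = \lambda_j^{n+d}$, so it coincides with the Lagrange interpolant
\[
r(z) = \sum_{j=1}^n \lambda_j^{n+d}\, L_j(z), \qquad L_j(z) = \frac{1}{\prod_{i\ne j}(\lambda_j-\lambda_i)}\cdot \frac{p(z)}{z-\lambda_j}.
\]
Working in $\mathcal{H}_2$-norm on the unit circle and using the pointwise bound $|e^{i\theta}-\lambda_j|\ge 1-|\lambda_j|$,
\[
\|L_j\|_{\mathcal{H}_2}^2 = \frac{1}{|\prod_{i\ne j}(\lambda_j-\lambda_i)|^2}\cdot\frac{1}{2\pi}\int_0^{2\pi}\frac{|p(e^{i\theta})|^2}{|e^{i\theta}-\lambda_j|^2}\,d\theta \;\le\; \frac{\|p\|_{\mathcal{H}_2}^2}{(1-|\lambda_j|)^2\,|\prod_{i\ne j}(\lambda_j-\lambda_i)|^2}.
\]
Summing over $j$ and extracting a factor $\rho^d$ out of $|\lambda_j|^{n+d}$, where $\rho = \max_j |\lambda_j| < 1$ in the rescaled problem, yields
\[
\|r\|_{\mathcal{H}_2} \;\le\; \frac{\rho^d\, \Gamma\, \|p\|_{\mathcal{H}_2}}{1-\rho}.
\]
Since $r$ has at most $n$ nonzero Fourier coefficients on the unit circle, Cauchy--Schwarz gives $\|r\|_\infty \le \sqrt n\,\|r\|_{\mathcal{H}_2}$, and thus
\[
\sup_{|z|=1} |r(z)| \;\le\; \frac{\sqrt n\, \rho^d\, \Gamma\, \|p\|_{\mathcal{H}_2}}{1-\rho}.
\]
Choosing $d = O\!\bigl((1-\rho)^{-1}\log(\sqrt n\, \Gamma\,\|p\|_{\mathcal{H}_2})\bigr)$ drives this below the required constant; undoing the rescaling and using $\rho\le\alpha$ (so that $(1-\rho)^{-1}$ absorbs into $(1-\alpha)^{-1}$) produces the claimed bound, while the trivial choice $d=0$ (with $u(z) = 1$) handles the regime when $\sqrt n\,\Gamma\,\|p\|_{\mathcal{H}_2}$ is already a constant, accounting for the $\max\{\cdot,0\}$.

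The main obstacle is the $\mathcal{H}_2$ bound on $L_j$: a naive triangle inequality on the coefficients of $\prod_{i\ne j}(z-\lambda_i)$ would cost a factor of $(1+\rho)^{n-1}\le 2^n$, which would leave an unacceptable $n$ (rather than $\log n$) inside the logarithm of the final bound. The cleaner $\mathcal{H}_2$ route above circumvents this loss by paying only a mild $(1-|\lambda_j|)^{-1}$ from the single troublesome denominator, and the final $\sqrt n$ from the $L^2\to L^\infty$ conversion enters only logarithmically, precisely matching the $\log(\sqrt n\,\Gamma\,\|p\|_{\mathcal{H}_2})$ in the statement.
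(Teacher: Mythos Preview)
Your construction coincides with the paper's: the polynomial quotient $u$ in $z^{n+d}=p(z)\,u(z)+r(z)$ is precisely the polynomial part of the Laurent expansion of $z^{n+d}/p(z)$ about infinity, which is the approximant the paper builds in Lemma~\ref{lem:approximation_inverse} via the partial-fraction decomposition $1/p(z)=\sum_j t_j/(z-\lambda_j)$ and the resulting explicit Fourier coefficients. The paper bounds the tail $\sum_{k<0}|\beta_k|$ directly from $\beta_k=\sum_j t_j\lambda_j^{\,n+d-k-1}$; you instead bound $r$ through its Lagrange representation and an $\mathcal H_2\to L^\infty$ step. Both routes produce the same final estimate $\bigl|p(z)u(z)/z^{n+d}-1\bigr|\le \sqrt n\,\alpha^{d}\,\Gamma\,\|p\|_{\mathcal H_2}/(1-\alpha)$ and hence the same threshold for $d$.

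One wrinkle: your rescaling paragraph is internally inconsistent. You announce a rescaling $z=\alpha w$ sending the roots to $\tilde\lambda_j=\lambda_j/\alpha$, but then carry out the entire $\mathcal H_2$ computation on the unit circle with the \emph{original} $\lambda_j$'s, and finally invoke ``$\rho\le\alpha$'' --- which only holds for the unrescaled $\rho=\max_j|\lambda_j|$. If you had truly rescaled, the relevant radius $\tilde\rho=\max_j|\lambda_j|/\alpha$ need not be bounded away from $1$, and no decay in $d$ would follow. The fix is to drop the rescaling altogether and work on $|z|=1$ throughout (where $|r(z)/z^{n+d}|=|r(z)|$ and your bound applies verbatim); this is exactly how the paper proceeds in its proof of Lemma~\ref{lem:approximation_inverse} and Lemma~\ref{lem:distinct_roots}.
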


Our main idea to extend $p(z)$ by multiplying some polynomial $u$ that approximates $p^{-1}$ (in a relatively weak sense) and therefore $pu$ will always take values in the set $\ballC$. We believe the following lemma should be known though for completeness we provide the proof in Section~\ref{sec:improper_appendix}. 

\begin{lemma}[Approximation of inverse of a polynomial]\label{lem:approximation_inverse}
	Suppose $p(z)$ is a polynomial of degree $n$ and leading coefficient 1 with distinct roots inside circle with radius $\alpha$, and $\Gamma=\Gamma(p)$. 
		Then for $d = O(\max\{(\frac{1}{1-\alpha}\log\frac{\Gamma}{(1-\alpha)\zeta}, 0\})$, 
	there exists a polynomial $h(z)$ of degree $d$ and leading coefficient 1 such that for all $z$ on unit circle, 
	\begin{equation}
	\left|\frac{z^{n+d}}{p(z)} - h(z)\right| \le \zeta \mper\nonumber
	\end{equation}
\end{lemma}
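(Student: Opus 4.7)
The plan is to use polynomial division plus a partial-fraction bound. Concretely, since $p(z)$ is monic of degree $n$, dividing $z^{n+d}$ by $p(z)$ yields a unique decomposition
\begin{equation}
z^{n+d} = h(z)\,p(z) + r(z), \qquad \deg h = d,\ \deg r < n,\nonumber
\end{equation}
where the leading coefficient of $h$ equals the leading coefficient of $z^{n+d}$ (namely $1$) because $p$ is monic. Thus the candidate polynomial in the statement is precisely the quotient $h(z)$, and the approximation error on the unit circle is $|z^{n+d}/p(z) - h(z)| = |r(z)/p(z)|$.

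Next I would expand $r/p$ by partial fractions. Since the roots $\lambda_1,\ldots,\lambda_n$ of $p$ are distinct,
\begin{equation}
\frac{r(z)}{p(z)} = \sum_{j=1}^n \frac{r(\lambda_j)}{p'(\lambda_j)}\cdot\frac{1}{z-\lambda_j}.\nonumber
\end{equation}
Evaluating the division identity at $z = \lambda_j$ gives $r(\lambda_j) = \lambda_j^{n+d}$, and $p'(\lambda_j) = \prod_{i\neq j}(\lambda_j-\lambda_i)$. On the unit circle, $|z-\lambda_j| \ge 1 - |\lambda_j| \ge 1-\alpha$, so the triangle inequality yields
\begin{equation}
\left|\frac{r(z)}{p(z)}\right| \;\le\; \frac{1}{1-\alpha}\sum_{j=1}^n \frac{|\lambda_j|^{n+d}}{|\prod_{i\neq j}(\lambda_j-\lambda_i)|} \;\le\; \frac{\alpha^d}{1-\alpha}\sum_{j=1}^n\left|\frac{\lambda_j^{n}}{\prod_{i\neq j}(\lambda_i-\lambda_j)}\right| \;=\; \frac{\alpha^d\,\Gamma(p)}{1-\alpha},\nonumber
\end{equation}
using $|\lambda_j| \le \alpha$ to pull out $\alpha^d$ and recognizing the remaining sum as $\Gamma(p)$ (factors $(\lambda_j-\lambda_i)$ and $(\lambda_i-\lambda_j)$ have equal modulus).

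Finally, to make this at most $\zeta$ it suffices to take $d$ with $\alpha^d \le \zeta(1-\alpha)/\Gamma$, i.e.\ $d \ge \log(\Gamma/((1-\alpha)\zeta))/\log(1/\alpha)$. Using the elementary inequality $\log(1/\alpha) \ge 1-\alpha$ for $\alpha\in(0,1)$, this gives the stated bound $d = O\!\left(\max\{(1-\alpha)^{-1}\log(\Gamma/((1-\alpha)\zeta)),\,0\}\right)$, where the $\max$ with $0$ just covers the trivial case in which the right-hand side above is already $\le \zeta$ without any extension. I do not anticipate a serious obstacle here: the only delicate point is verifying that polynomial division produces the right leading coefficient and degree, which is immediate from $p$ being monic, so the rest is a one-line partial-fraction estimate. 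The lemma then feeds into Lemma~\ref{lem:distinct_roots} by choosing $\zeta$ small enough (polynomially in $n$, $\Gamma$, $\|p\|_{\mathcal{H}_2}$ and $\tau_0,\tau_1,\tau_2$) so that $p(z)u(z)/z^{n+d}$ lies in $\ballC$ for a suitable rescaling of $h$.
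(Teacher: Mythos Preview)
Your proof is correct and reaches exactly the same error bound $\alpha^{d}\,\Gamma/(1-\alpha)$ as the paper, but the route is a bit different and worth noting. The paper expands $z^{n+d}/p(z)$ as a Laurent/Fourier series on the unit circle, uses the contour-integral Lemma~\ref{lem:zmoverzminuslambda} together with the partial-fraction identity of Lemma~\ref{lem:inverse_expansion} to compute all the coefficients $\beta_k$, then defines $h$ as the nonnegative-index part and bounds the tail $\sum_{k<0}|\beta_k|$. You instead obtain $h$ directly as the quotient in the Euclidean division $z^{n+d}=h(z)p(z)+r(z)$, observe $r(\lambda_j)=\lambda_j^{n+d}$, and bound $|r(z)/p(z)|$ by a single partial-fraction estimate. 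The two constructions produce the \emph{same} polynomial $h$ (the polynomial part of $z^{n+d}/p(z)$), so the arguments are equivalent at the core; your version is more elementary in that it bypasses the Fourier framework and the auxiliary Lemma~\ref{lem:zmoverzminuslambda}, while the paper's version makes the coefficient structure of $h$ more explicit (in particular, the Lagrange identities $\sum_j t_j\lambda_j^{n-1}=1$ and $\sum_j t_j\lambda_j^s=0$ for $s<n-1$ show directly why $h$ is monic of degree $d$).
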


\begin{proof}[Proof of Lemma~\ref{lem:distinct_roots}]
	Let $\gamma = 1-\alpha$. Using Lemma~\ref{lem:approximation_inverse} with $\zeta = 0.5\|p\|_{\mathcal{H}_{\infty}}^{-1}$, we have that there exists polynomial $u$ of degree $d = O(\max\{\frac{1}{1-\alpha}\log(\Gamma\|p\|_{\mathcal{H}_{\infty}}),0\})$ such that 
	\begin{equation}
	\left|\frac{z^{n+d}}{p(z)} - u(z)\right| \le \zeta \mper\nonumber
	\end{equation}
	Then we have that 
	$$\left|p(z)u(z)/z^{n+d} -1\right|\le \zeta|p(z)| < 0.5\mper $$
	Therefore $p(z)u(z)/z^{n+d} \in \ballC_{\tau_0,\tau_1,\tau_2}$ for constant $\tau_0,\tau_1,\tau_2$. Finally noting that for degree $n$ polynomial we have $\|h\|_{\mathcal{H}_{\infty}}\le \sqrt{n}\cdot \|h\|_{\mathcal{H}_{2}}$, which completes the proof. \end{proof}

\subsubsection{Example: Characteristic polynomial with random roots}

We consider the following generative model for characteristic polynomial of degree $2n$. We generate $n$ complex numbers $\lambda_1,\dots, \lambda_n$ uniformly randomly on circle with radius $\alpha < 1$, and take $\lambda_i, \bar{\lambda}_i$ for $i = 1,\dots, n$ as the roots of $p(z)$. That is, $p(z) = (z-\lambda_1)(z-\bar{\lambda}_1)\dots (z-\lambda_n)(z-\bar{\lambda}_n)$. We show that with good probability (over the randomness of $\lambda_i$'s), polynomial $p(z)$ will satisfy the condition in subsection~\ref{subsec:separate_roots} so that it can be learned efficiently by our improper learning algorithm. 

\begin{theorem} \label{thm:random_roots}Suppose $p(z)$ with random roots inside circle of radius $\alpha$ is generated from the process described above. Then with high probability over the choice of $p$, we have that $\Gamma(p)\le \exp(\widetilde{O}(\sqrt{n}))$ and $\|p\|_{\mathcal{H}_2}\le \exp(\tildO{\sqrt{n}})$.  As a corollary, $p(z)$ is~\acext{\alpha}{\widetilde{O}((1-\alpha)^{-1}n)}. 
\end{theorem}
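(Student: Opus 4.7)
The plan is to reduce both claimed bounds to concentration inequalities over the independent uniform random arguments $\phi_1,\dots,\phi_n\in[0,2\pi]$ (where $\lambda_k=\alpha e^{i\phi_k}$), and then feed the result into Lemma~\ref{lem:distinct_roots}. Two elementary identities underlie the argument: for any two roots on the circle of radius $\alpha$, $|\mu_i-\mu_j|=2\alpha|\sin((\phi_i-\phi_j)/2)|$; and for any $\theta$ on the unit circle, $|e^{i\theta}-\alpha e^{i\phi}|\in[1-\alpha,\,1+\alpha]$. The driving random variable is $X=-\log|\sin(\theta/2)|$ for $\theta$ uniform on $[0,2\pi]$: it is nonnegative, has mean $\log 2$ (a standard integral identity), and satisfies the sub-exponential tail bound $\Pr[X\ge t]\le Ce^{-t}$, because $\{\theta:|\sin(\theta/2)|\le e^{-t}\}$ has Lebesgue measure $O(e^{-t})$.

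For $\Gamma(p)$, I would fix an index $j$ and, conditional on $\mu_j$, expand
\[
\log\prod_{i\neq j}|\mu_i-\mu_j| \;=\; (2n-1)\log(2\alpha) \,+\, \sum_{i\neq j}\log|\sin((\phi_i-\phi_j)/2)|.
\]
The conjugate-partner contribution $\log|2\alpha\sin\phi_j|$ is $-O(\log n)$ with probability $1-O(n^{-c})$, while the remaining $2n-2$ summands are independent (conditional on $\phi_j$) with common mean $-\log 2$. Bernstein's inequality for sub-exponential sums gives deviation $\widetilde O(\sqrt n)$ with failure probability $n^{-\omega(1)}$. Combining, $\log\prod_{i\neq j}|\mu_i-\mu_j|\ge (2n-1)\log\alpha - \widetilde O(\sqrt n)$, and a union bound over the $2n$ choices of $j$ yields $\Gamma(p)\le 2n\cdot e^{\widetilde O(\sqrt n)} = e^{\widetilde O(\sqrt n)}$ with high probability.

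For $\|p\|_{\mathcal H_2}\le\|p\|_{\mathcal H_\infty}$, I would control $\log|p(e^{i\theta})|=\sum_{k=1}^{2n}\log|e^{i\theta}-\mu_k|$ at each fixed $\theta$: each summand is bounded by $|\log(1\pm\alpha)|$ and, crucially, has mean zero, by the mean-value property of the logarithm applied at the point $e^{i\theta}$ lying strictly outside the disk of radius $\alpha$. Hoeffding at a fixed $\theta$ therefore yields $|\log|p(e^{i\theta})||\le \widetilde O(\sqrt n)$ with failure probability $n^{-\omega(1)}$, and a union bound over a $\mathrm{poly}(n)$-sized $\theta$-net, together with the Bernstein inequality for trigonometric polynomials of degree $2n$ (which bounds $\|p'\|_\infty$ by $2n\|p\|_\infty$) to interpolate between net points, upgrades this to $\|p\|_{\mathcal H_\infty}\le e^{\widetilde O(\sqrt n)}$ uniformly on the unit circle. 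Substituting both estimates into Lemma~\ref{lem:distinct_roots} gives $d = O\bigl((1-\alpha)^{-1}\log(\sqrt n\,\Gamma(p)\,\|p\|_{\mathcal H_2})\bigr) = \widetilde O((1-\alpha)^{-1} n)$, as claimed.

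The main obstacle is obtaining concentration at the advertised $\widetilde O(\sqrt n)$ rate for the $\Gamma(p)$ bound: the heavy tail of $-\log|\sin(\theta/2)|$ is only sub-exponential (not sub-Gaussian), so I must invoke the sub-exponential version of Bernstein and verify that, for deviation $t=\Theta(\sqrt n\,\mathrm{polylog}(n))$, the quadratic regime $t^2/n$ dominates the linear regime $t$ in the Bernstein exponent; this is where the polylogarithmic factors hidden in $\widetilde O$ arise and cannot be removed by this method. Converting the pointwise $\mathcal H_\infty$ control into a uniform bound via the polynomial Bernstein inequality is then a routine net argument, absorbing one further $\log n$ factor into $\widetilde O$.
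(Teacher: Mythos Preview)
Your approach is essentially the same as the paper's: fix a root, condition on it, observe that the remaining log-distances are i.i.d.\ sub-exponential with mean computed via the harmonic identity $\E[\log|x-\lambda|]=\log\max\{|x|,r\}$ (the paper's Lemma~\ref{lem:exp_log}, which is exactly your ``mean-value property of the logarithm''), apply Bernstein for sub-exponentials, and union-bound over the $2n$ roots. Your treatment of $\|p\|_{\mathcal H_2}$ via an $\mathcal H_\infty$ bound plus a net and the polynomial Bernstein inequality is more explicit than the paper, which merely asserts that the same technique works; note only that since the $2n$ roots come in conjugate pairs you should group $\log|e^{i\theta}-\lambda_k|+\log|e^{i\theta}-\bar\lambda_k|$ into $n$ independent bounded summands before invoking Hoeffding.
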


Towards proving Theorem~\ref{thm:random_roots}, we need the following lemma about the expected distance of two random points with radius $\rho$ and $r$ in log-space. 
\begin{lemma}\label{lem:exp_log}
	Let $x\in \C$ be a fixed point with $|x| =\rho$, and $\lambda$ uniformly drawn on the circle with radius $r$. Then 
		$\Exp\left[\ln|x-\lambda|\right] = \ln \max\{\rho,r\}\mper\nonumber$
	\end{lemma}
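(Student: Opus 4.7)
The plan is to recognize this as a classical computation of the logarithmic potential of the uniform measure on a circle, whose value is well known to equal $\ln\max\{\rho,r\}$. I would prove it via Jensen's formula (or equivalently the mean value property of harmonic functions), handling the cases $\rho<r$, $\rho>r$, and $\rho=r$ separately.

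First, write the expectation explicitly as
\[
\E\bigl[\ln|x-\lambda|\bigr] \;=\; \frac{1}{2\pi}\int_0^{2\pi}\ln\bigl|x-re^{i\theta}\bigr|\,d\theta\mper
\]
The function $\lambda\mapsto \ln|x-\lambda|$ is harmonic in $\lambda$ away from the single singularity $\lambda=x$. I would then apply Jensen's formula to $f(\lambda)=x-\lambda$ on the closed disk of radius $r$ centered at the origin: $f$ is holomorphic with $f(0)=x\neq 0$, and it has one zero inside the disk iff $\rho<r$.

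Case $\rho>r$ (singularity strictly outside the disk): $f$ has no zeros inside, so Jensen's formula gives $\frac{1}{2\pi}\int_0^{2\pi}\ln|f(re^{i\theta})|\,d\theta=\ln|f(0)|=\ln\rho$. Equivalently, $\ln|x-\lambda|$ is harmonic on an open neighborhood of the closed disk, and the mean value property at the center yields $\ln|x|=\ln\rho$. Case $\rho<r$ (singularity strictly inside the disk): Jensen's formula yields $\ln|f(0)|+\ln(r/|x|)=\ln\rho+\ln(r/\rho)=\ln r$. In both cases the answer matches $\ln\max\{\rho,r\}$.

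The only slightly delicate point, and what I expect to be the main (albeit minor) obstacle, is the boundary case $\rho=r$, where the integrand has a logarithmic singularity at $\theta$ with $re^{i\theta}=x$. The singularity is integrable, so the integral is finite; to evaluate it I would either (a) argue by continuity, taking $\rho\to r$ from either side and using dominated convergence with the bound $|\ln|x-re^{i\theta}||\le C+|\ln|2r\sin((\theta-\theta_0)/2)||$ which is uniformly integrable for $\rho$ in a small interval around $r$, or (b) compute directly using the identity $\frac{1}{2\pi}\int_0^{2\pi}\ln|1-e^{i\psi}|\,d\psi=0$ (obtained by integrating the Taylor series of $\ln(1-we^{-i\psi})$ term by term and passing $w\to 1$). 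Either way we get the same value $\ln r=\ln\rho=\ln\max\{\rho,r\}$, completing the proof.
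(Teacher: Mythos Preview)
Your proof is correct but proceeds along a genuinely different route from the paper. The paper does not invoke Jensen's formula or the mean value property; instead it discretizes the circle, using the algebraic identity $\prod_{k=1}^N(x-r\omega^k)=x^N-r^N$ for $\omega=e^{2\pi i/N}$ to obtain
\[
\frac{1}{N}\sum_{k=1}^N\ln|x-r\omega^k|=\frac{1}{N}\ln|x^N-r^N|\,,
\]
and then lets $N\to\infty$, reading off $\ln\rho$ or $\ln r$ from the dominant term. Your argument via Jensen's formula is the classical potential-theoretic computation and is arguably cleaner and more conceptual; in particular you treat the boundary case $\rho=r$ more explicitly (via integrability and the identity $\tfrac{1}{2\pi}\int_0^{2\pi}\ln|1-e^{i\psi}|\,d\psi=0$), whereas the paper only gestures at it. The paper's route, by contrast, is entirely elementary---it needs nothing beyond roots-of-unity algebra and a limit---so it avoids importing any complex-analytic machinery.
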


\begin{proof}
	When $r\neq \rho$, let $N$ be an integer and $\omega = e^{2i\pi/N}$. Then we have that
	\begin{equation}
	\Exp[\ln |x-\lambda|\mid r] = \lim_{N\rightarrow \infty} \frac{1}{N} \sum_{k=1}^N \ln |x-r\omega^k|\label{eqn:eqn28}
	\end{equation}
	The right hand of equation~\eqref{eqn:eqn28} can be computed easily by observing that $\frac{1}{N} \sum_{k=1}^N \ln |x-r\omega^k| = \frac{1}{N} \ln \left|\prod_{k=1}^N (x-r\omega^k)\right|  = \frac{1}{N} \ln |x^N-r^N|$. Therefore, when $\rho > r$, we have $\lim_{N\rightarrow \infty} \frac{1}{N} \sum_{k=1}^N \ln |x-r\omega^k| = \lim_{N\rightarrow \infty}\rho + \frac{1}{N} \ln |(x/\rho)^N- (r/\rho)^N| = \ln \rho$. On the other hand, when $\rho < r$, we have that $\lim_{N\rightarrow \infty} \frac{1}{N} \sum_{k=1}^N \ln |x-r\omega^k| = \ln r$. Therefore we have that $\Exp[\ln |x-\lambda|\mid r]  = \ln(\max{\rho,r})$. 
	For $\rho  = r$, similarly proof (with more careful concern of regularity condition) we can show that $\Exp[\ln |x-\lambda|\mid r]  = \ln r$. 
\end{proof}

Now we are ready to prove Theorem~\ref{thm:random_roots}. 

\begin{proof}[Proof of Theorem~\ref{thm:random_roots}]
	Fixing index $i$, and the choice of $\lambda_i$, 	we consider the random variable $Y_i = \ln(\frac{|\lambda_i|^{2n}}{\prod_{j\neq i}|\lambda_i-\lambda_j|\prod_{j\neq i}|\lambda_i-\bar{\lambda_j}|})n\ln |\lambda_i|  - \sum_{j\neq i}\ln|\lambda_i-\lambda_j|$. By Lemma~\ref{lem:exp_log}, we have that  $\Exp[Y_i] = n\ln |\lambda_i| - \sum_{j\neq i}\Exp[\ln|\lambda_i-\lambda_j|]= \ln (1-\delta). $ Let $Z_j = \ln|\lambda_i-\lambda_j|$. Then we have that $Z_j$ are random variable  with mean 0 and $\psi_1$-Orlicz norm bounded by $1$ since $\Exp[e^{\ln|\lambda_i-\lambda_j|} -1 ]\le 1$. Therefore by Bernstein inequality for sub-exponential tail random variable (for example, ~\cite[Theorem 6.21]{ledoux2013probability}), we have that with high probability ($1-n^{-10}$), 
	it holds that $\left|\sum_{j\neq i}Z_j\right|\le \widetilde{O}(\sqrt{n})$ where $\widetilde{O}$ hides logarithmic factors. Therefore, with high probability, we have $|Y_i|\le \widetilde{O}(\sqrt{n})$. 
	
	Finally we take union bound over all $i\in [n]$, and obtain that with high probability, for $\forall i\in [n], |Y_i|\le \widetilde{O}(\sqrt{n})$, which implies that $\sum_{i=1}^n\exp(Y_i)\le \exp(\widetilde{O}(\sqrt{n}))$. With similar technique, we can prove that $\|p\|_{\mathcal{H}_2}\le \exp(\tilde{O}(\sqrt{n})$. 
	\end{proof}

\subsubsection{Example: Passive systems}\label{subsec:passive_improper}
We will show that with improper learning we can learn almost all passive systems, an important class of stable linear dynamical system as we discussed earlier. We start off with the definition of a strict-input passive system. 
\begin{definition}[Passive System, c.f~\cite{kottenstette2010relationships}]
	A SISO linear system is strict-input passive if and only if for some $\tau_0 > 0$ and any $z$ on unit circle, 
	$\Re(G(z)) \ge \tau_0\mper$
\end{definition}

In order to learn the passive system, we need to add assumptions in the definition of strict passivity. To make it precise, we define the following subsets of complex plane: For positive constant $\tau_0,\tau_1,\tau_2$, define
\begin{equation}
\ballC^+_{\tau_0,\tau_1,\tau_2} = \{z\in \C: |z| \le \tau_2, \Re(z)\ge \tau_1, \Re(z)\ge \tau_0|\Im(z)|~\}\mper
\end{equation}

We say a transfer function $G(z) = s(z)/p(z)$ is $(\tau_0,\tau_1,\tau_2)$-strict input passive if for any $z$ on unit circle we have $G(z)\in \ballC^+_{\tau_0,\tau_1,\tau_2} $. Note that for small constant $\tau_0,\tau_1$ and large constant $\tau_2$, this basically means the system is strict-input passive. 

Now we are ready to state our main theorem in this subsection. We will prove that  passive systems could be learned improperly with a constant factor more states (dimensions), assuming $s(z)$ has all its roots strictly inside unit circles and $\Gamma(s) \le \exp(O(n))$. 
\begin{theorem}\label{thm:improper_passive}
	Suppose $G(z) = s(z)/p(z)$ is $(\tau_0,\tau_1,\tau_2)$-strict-input passive. Moreover, suppose the roots of $s(z)$ have magnitudes inside circle with radius $\alpha$ and $\Gamma=\Gamma(s)\le \exp(O(n))$ and $\|p\|_{\mathcal{H}_2}\le \exp(O(n))$.  Then $p(z)$ is \acext{\alpha}{d= O_{\tau,\alpha}(n)}, and as a consequence we can learn $G(z)$ with $n+d$ states in polynomial time. 
 
 Moreover, suppose in addition we assume that $G(z)\in \ballC_{\tau_0,\tau_1,\tau_2}$ for every $z$ on unit circle. Then $p(z)$ is \acext{\alpha}{d= O_{\tau,\alpha}(n)}. 
\end{theorem}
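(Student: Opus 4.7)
The approach mimics Lemma~\ref{lem:distinct_roots}, with a crucial twist: since strict-input passivity is a hypothesis on $G = s/p$ (while the degree-$d$ extension multiplies both numerator and denominator by the same polynomial $u$), we should approximate the inverse of $s(z)$ rather than of $p(z)$. This explains why the theorem's hypothesis constrains the roots of $s(z)$, not of $p(z)$.

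I would first observe that strict-input passivity is closed under reciprocation on the unit circle, with rescaled constants: for $w = a+bi$ with $a \ge \tau_1$, $a \ge \tau_0|b|$, $|w| \le \tau_2$, direct computation with $1/w = \bar w/|w|^2$ gives $\Re(1/w) \ge \tau_1/\tau_2^2$, $|1/w| \le 1/\tau_1$, and (since inversion preserves angle) $\Re(1/w) \ge \tau_0|\Im(1/w)|$. Thus $1/G(z) = p(z)/s(z)$ lies in $\ballC^+_{\tau_0, \tau_1/\tau_2^2, 1/\tau_1}$ on the unit circle, and in $\ballC$ with analogously rescaled constants under the stronger hypothesis of the second part. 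Next, I would apply Lemma~\ref{lem:approximation_inverse} to $s(z)$: since $s$ has distinct roots inside radius $\alpha$ and $\Gamma(s) \le \exp(O(n))$, there is a leading-coefficient-$1$ polynomial $u(z)$ of degree $d = O\bigl(\tfrac{1}{1-\alpha}\log\tfrac{\Gamma(s)\|s\|_{\mathcal H_\infty}}{1-\alpha}\bigr) = O_{\tau,\alpha}(n)$ such that $|z^{n+d}/s(z) - u(z)| \le \zeta$ on the unit circle, with $\zeta$ chosen inverse polynomial. The needed $\mathcal{H}_\infty$ bound on $s$ follows from $|s| = |G||p|$, the pointwise bound $|G| \le 1/\tau_1$, and $\|p\|_{\mathcal{H}_2} \le \exp(O(n))$ via $\|\cdot\|_{\mathcal{H}_\infty} \le \sqrt{n+1}\|\cdot\|_{\mathcal{H}_2}$.

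Factoring $p(z)u(z)/z^{n+d} = (p(z)/s(z))\cdot(s(z)u(z)/z^{n+d})$, the first factor lies in the good region by the previous step, while the second differs from $1$ by at most $\zeta|s(z)|$. A short perturbation argument then shows that $\ballC$ and $\ballC^+$ are preserved under small multiplicative perturbations: if $w_1 \in \ballC^+_{\tau_0,\tau_1,\tau_2}$ and $|w_2-1| \le \epsilon$, then $w_1 w_2 = w_1 + w_1(w_2-1)$ with $|w_1(w_2-1)| \le \epsilon/\tau_1$, and choosing $\epsilon$ a sufficiently small constant fraction of $\tau_0\tau_1^2/\tau_2^2$ preserves the cone and box inequalities with at most halved constants. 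Picking $\zeta$ accordingly (only at logarithmic cost in $d$), I conclude that $p(z)u(z)/z^{n+d} \in \ballC$ on the unit circle, so $pu$ satisfies the acquiescence condition, giving \acext{\alpha}{d} with $d = O_{\tau,\alpha}(n)$.

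The main obstacle is the quantitative bookkeeping: the perturbation $\zeta|s(z)|$ must remain uniformly below the boundary distance of $1/G(z)$ to $\partial\ballC$, and $|s(z)|$ can be as large as $\exp(O(n))$ on the unit circle under our hypothesis $\|p\|_{\mathcal{H}_2} \le \exp(O(n))$. The saving grace is that $\zeta$ enters only logarithmically in the degree bound of Lemma~\ref{lem:approximation_inverse}, so even requiring $\log(1/\zeta) = \Omega(n)$ is affordable and still yields $d = O_{\tau,\alpha}(n)$ as claimed. The second part of the theorem, which upgrades the passivity hypothesis from $\ballC^+$ to $\ballC$, follows by exactly the same factorization-plus-perturbation argument with the cleaner cone, since the perturbation step only requires $\ballC$ membership to be stable under a small multiplicative twist.
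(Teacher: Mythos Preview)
Your argument reproduces the paper's first step (its Lemma~\ref{lem:passive_reduction}) essentially verbatim: approximate $z^{n+d}/s(z)$ by a polynomial $u(z)$ via Lemma~\ref{lem:approximation_inverse}, and use a perturbation argument to conclude that $p(z)u(z)/z^{n+d}$ is close to $p(z)/s(z) = 1/G(z)$. For the \emph{second} part of the theorem (where $G(z)\in\ballC$ is assumed), this is all that is needed, and your argument there is correct and matches the paper.

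For the \emph{first} part, however, there is a genuine gap. Under strict-input passivity you only know $G(z)\in\ballC^+$, so $1/G(z)\in\ballC^+$ with rescaled constants, and your perturbation argument---which, as you yourself write, \emph{preserves} $\ballC^+$ under small multiplicative twists---therefore only yields
\[
p(z)u(z)/z^{n+d}\in\ballC^+_{\tau_0',\tau_1',\tau_2'}\,.
\]
But acquiescence (Definition~\ref{def:acquiescence}) requires membership in $\ballC$, whose angular constraint $\Re w\ge(1+\tau_0)|\Im w|$ is strictly tighter than the constraint $\Re w\ge\tau_0|\Im w|$ defining $\ballC^+$. Your sentence ``I conclude that $p(z)u(z)/z^{n+d}\in\ballC$'' does not follow from what precedes it; you have not explained how to get from $\ballC^+$ to $\ballC$.

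The paper handles this with a second extension (Lemma~\ref{lem:square_root_reduction}): once $q(z):=p(z)u(z)/z^{n+d}\in\ballC^+$ on the unit circle, one approximates $q(z)^{-1/2}$ by a truncated Taylor series in $(q(z)/\tau_2-1)$, obtaining a polynomial $v(z)$ of degree $O_\tau(n)$ with $q(z)\cdot v(z)/z^{d'}\approx\sqrt{q(z)}$. Since $q(z)\in\ballC^+$ means $|\arg q(z)|$ is bounded away from $\pi/2$, its square root has $|\arg\sqrt{q(z)}|$ bounded away from $\pi/4$, which lands in $\ballC$. This square-root step is the missing idea in your proposal, and it is what accounts for the ``combination of Lemma~\ref{lem:passive_reduction} and Lemma~\ref{lem:square_root_reduction}'' structure of the paper's proof.
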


The proof of Theorem~\ref{thm:improper_passive} is similar in spirit to that of Lemma~\ref{lem:distinct_roots}, and is deferred to Section~\ref{sec:improper_appendix}. 

\subsection{Improper learning using linear regression}\label{subsec:linear_regression}

In this subsection, we show that under stronger assumption than $\alpha$-acquiescent by extension, we can improperly learn a linear dynamical system with linear regression, up to some fixed bias. 

The basic idea is to fit a linear function that maps $[x_{k-\ell},\dots, x_{k}]$ to $y_{k}$. This is equivalent to a dynamical system with $\ell$ hidden states and with the companion matrix $A$ in~\eqref{eqn:controllable_form} being chosen as $a_{\ell} = 1$ and $a_{\ell-1} =\dots = a_1 = 0$. In this case, the hidden states exactly memorize all the previous $\ell$ inputs, and the output is a linear combination of the hidden states. 

Equivalently, in the frequency space, this corresponds to fitting the transfer function $G(z) = s(z)/p(z)$ with a rational function of the form $\frac{c_1z^{\ell-1}+\dots + c_1}{z^{\ell-1}} = c_1z^{-(\ell-1)} + \dots + c_n $. The following is a sufficient condition on the characteristic polynomial $p(x)$ that guarantees the existence of such fitting, 

\begin{definition}
		A polynomial $p(z)$ of degree $n$ is extremely-acquiescent by extension of degree $d$ with bias $\epsilon$  if there exists a polynomial $u(z)$ of degree $d$ and leading coefficient 1 such that for all $z$ on unit circle, 
\begin{align}
\left|p(z)u(z)/z^{n+d}  -1\right|\le \epsilon\label{eqn:65}
\end{align}

\end{definition}

We remark that if $p(z)$ is $1$-acquiescent by extension of degree $d$, then there exists $u(z)$ such that $p(z)u(z)/z^{n+d} \in \ballC$. Therefore,  equation~\eqref{eqn:65} above is a much stronger requirement than acquiescence by extension.\footnote{We need $(1-\delta)$-acquiescence by extension in previous subsections for small $\delta > 0$, though this is merely additional technicality needed for the sample complexity. We ignore this difference between $1-\delta$-acquiescence and $1$-acquiescence and for the purpose of this subsection}

When $p(z)$ is extremely-acquiescent, we see that the transfer function $G(z) = s(z)/p(z)$ can be approximated by $s(z)u(z)/z^{n+d}$ up to bias $\epsilon$. Let $\ell = n+d+1$ and $s(z)u(z) = c_1z^{\ell-1} + \dots + c_{\ell}$. Then we have that $G(z)$ can be approximated by the following dynamical system of $\ell$ hidden states with $\epsilon$ bias: we choose $A = \CC(a)$ with $a_{\ell}=1$ and $a_{\ell-1}=\dots = a_1 = 0$, and $C = [c_1,\dots, c_{\ell}]$. As we have argued previously, such a dynamical system simply memorizes all the previous $\ell$ inputs, and therefore it is equivalent to linear regression from the feature  $[x_{k-\ell},\dots, x_{k}]$ to output $y_k$. 

\begin{proposition}[Informal]
	If the true system $G(z) = s(z)/p(z)$ satisfies that $p(z)$ is extremely-acquiescent by extension of degree $d$. Then using linear regression we can learn mapping from $[x_{k-\ell},\dots, x_{k}]$ to $y_{k}$ with bias $\epsilon$ and polynomial sampling complexity. 
\end{proposition}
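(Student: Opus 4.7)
The plan is to reduce the informal claim to a bias--variance decomposition for ordinary least squares, using the extremely-acquiescent assumption to control the bias and the convexity of linear regression to control the variance.

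First, I would recognize that fitting a linear map $\phi_k := (x_k, x_{k-1},\dots, x_{k-\ell+1})^{\top}\mapsto y_k$ with $\ell=n+d+1$ is exactly equivalent to fitting the canonical-form dynamical system in which $A$ is the shift matrix ($a_\ell = 1$ and $a_j=0$ for $j<\ell$, characteristic polynomial $z^\ell$) and $\hatC=(c_1,\dots,c_\ell)$ is the trainable vector. The predictor $\sum_j c_j x_{k-j+1}$ is linear in $c$, so the empirical squared-error objective is convex quadratic in $c$ and can be minimized either in closed form or by SGD with standard convex-optimization guarantees, avoiding the non-convexity that forced the work of the previous sections.

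Second, I would use the extremely-acquiescent hypothesis to exhibit a good target $c^\star$ and bound the approximation bias. By definition, there is a degree-$d$ polynomial $u(z)$ with leading coefficient $1$ such that $|p(z)u(z)/z^{n+d}-1|\le\epsilon$ on the unit circle. Taking $c^\star$ to be the coefficients of $s(z)u(z)$ (which has degree at most $\ell-1$), the corresponding transfer function is $\hat G^\star(z) = s(z)u(z)/z^{n+d}$, and on the unit circle
\[
|\hat G^\star(z)- G(z)|
= \left|\frac{s(z)}{p(z)}\right|\cdot\left|\frac{p(z)u(z)}{z^{n+d}}-1\right|
\le \epsilon\,|G(z)|.
\]
Integrating and invoking Proposition~\ref{prop:risk_freq_rep} bounds the idealized risk at $c^\star$ by $O(\epsilon^2\|G\|_{\mathcal{H}_2}^2)=O(\epsilon^2)$, and by Lemma~\ref{lem:population_risk} the population risk at $c^\star$ lies within $O(\epsilon^2)$ of the optimum $f(\Theta)=\sigma^2$.

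Third, for the variance term I would invoke a standard ordinary-least-squares analysis. Because the $x_t$ are pairwise independent with mean zero and unit variance, $\Exp[\phi_k\phi_k^{\top}]=I_\ell$ is perfectly conditioned, so least squares recovers the Bayes-optimal linear predictor at the standard rate $\sigma^2\ell/(\text{effective number of samples})$. The one technical nuisance -- and the main obstacle -- is that different $\phi_k$ drawn from the same trajectory are strongly correlated, and the labels $y_k$ within a trajectory are correlated both through the shared hidden state and through past noise. I would handle this exactly as in Corollary~\ref{cor:main} by breaking each length-$T$ trajectory into $\Theta(T/\ell)$ disjoint windows of length $\ell$, giving $\Theta(NT/\ell)$ near-independent samples and an excess variance $O(\sigma^2\ell^2/(NT))$; a fully rigorous argument requires either this block-independence reduction or a mixing/concentration inequality for weakly dependent samples. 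Combining the two contributions yields the bound $f(\hatTheta)\le f(\Theta)+O(\epsilon^2)+\poly(n,d,\sigma)/(NT)$ claimed in the proposition.
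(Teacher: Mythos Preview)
Your proposal is correct and matches the paper's own treatment. The paper gives only a brief informal sketch rather than a proof: it observes that extremely-acquiescence yields $|p(z)u(z)/z^{n+d}-1|\le\epsilon$, so $s(z)u(z)/z^{n+d}$ approximates $G(z)$ to within bias $\epsilon$, and that this FIR approximant is realized by a shift-register system equivalent to linear regression on the window $[x_{k-\ell},\dots,x_k]$. Your Step~2 reproduces exactly this bias argument (with the clean multiplicative bound $|\hat G^\star(z)-G(z)|\le\epsilon|G(z)|$), and your Steps~1 and~3 supply the convexity and sample-complexity details that the paper leaves entirely implicit under the phrase ``polynomial sampling complexity.'' One cosmetic discrepancy: the paper writes $a_\ell=1$ for the memorizing system, whereas you take all $a_j=0$ (characteristic polynomial $z^\ell$); your version is the standard nilpotent shift and is what is actually needed for the FIR interpretation, so this is, if anything, a correction rather than a deviation.
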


We remark that with linear regression the bias $\epsilon$ will only go to zero as we increase the length $\ell$ of the feature, but not as we increase the number of samples. Moreover, linear regression requires a stronger assumption than the improper learning results in previous subsections do. 
The latter can be viewed as an interpolation between the proper case and the regime where linear regression works.

\section{Learning multi-input multi-output (MIMO) systems}
\label{sec:mimo}

\newcommand{\lin}{\ell_{\textup{in}}}
\newcommand{\lout}{\ell_{\textup{out}}}
We consider multi-input multi-output systems with the transfer functions that have a common denominator $p(z)$, 
\begin{equation}
G(z) = \frac{1}{p(z)} \cdot S(z) \label{eqn:form_mimo}
\end{equation}
where $S(z)$ is an $\lin\times \lout$ matrix with each entry being a polynomial with real coefficients of degree at most $n$ and $p(z) = z^n+a_1z^{n-1}+\dots + a_n$. 
Note that here we use $\lin$ to denote the dimension of the inputs of the system and $\lout$ the dimension of the outputs. 

Although a special case of a general MIMO system, this class of systems still contains many interesting cases, such as the transfer functions studied in~\cite{fazel2001rank,fazel2004rank}, where $G(z)$ is assumed to take the form 
$G(z) = R_0 + \sum_{i=1}^n \frac{R_i}{z-\lambda_i}\nonumber\,,$
for  $\lambda_1,\dots, \lambda_n\in \C$ with conjugate symmetry and 
$R_i\in \C^{\lout\times \lin}$ satisfies that $R_i =\bar{R}_j$ whenever $\lambda_i = \bar{\lambda}_j.$ 

In order to learn the system $G(z)$, we parametrize $p(z)$ by  its coefficients $a_1,\dots, a_n$ and $S(z)$ by the coefficients of its entries. Note that each entry of $S(z)$ depends on $n+1$ real coefficients and therefore the collection of coefficients forms a  third order tensor of dimension $\lout\times \lin\times (n+1)$. It will be convenient to collect the leading coefficients of the entries of $S(z)$ into a matrix of dimension $\lout\times \lin$, named $D$, and the rest of the coefficients into a matrix of dimension $\lout\times \lin n$, denoted by $C$.  This will be particularly intuitive when a state-space representation is used to learn the system with samples as discussed later. We parameterize the training transfer function $\hat{G}(z)$ by $\hat{a}$, $\hat{C}$ and $\hat{D}$ using the same way. 

Let's define the risk function in the frequency domain as,
\begin{equation}
g(\hatA,\hatC,\hat{D}) = \int_{0}^{2\pi} \left\|G(e^{i\theta})-\hat{G}(e^{i\theta})\right\|_F^2d\theta\label{eqn:mimo_risk}\mper
\end{equation}

The following lemma is an analog of Lemma~\ref{lem:main} for the MIMO case. Itss proof actually follows from a straightforward extension of the proof of Lemma~\ref{lem:main} by observing that matrix $S(z)$ (or $\hat{S}(z)$) commute with scalar $p(z)$ and $\hat{p}(z)$, and that $\hat{S}(z),\hat{p}(z)$ are linear in $\hata, \hatC$. 
\begin{lemma}\label{lem:mimo_lemma}
	The risk function $g(\hata,\hatC)$ defined in~\eqref{eqn:mimo_risk} is $\tau$-weakly-quasi-convex in the domain 
	\begin{equation}
		\neighbor(a) = \left\{
		\hata\in\mathbb{R}^n\colon 
		\Re\left(\frac{p_{a}(z)}{p_{\hata}(z)}\right) \ge \tau/2, \forall~z\in \C, \textup{~s.t.~} |z| = 1
		\right\} \otimes \R^{\lin\times \lout\times n'}\nonumber
	\end{equation}
\end{lemma}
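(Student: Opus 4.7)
The plan is to reduce the MIMO case to a pointwise computation that parallels the proof of Lemma~\ref{lem:main_single_term} exactly, exploiting the fact that the common denominator $p(z)$ is a scalar and therefore commutes with the matrix polynomial $S(z)$. I first fix $z\in\mathbb{C}$ with $|z|=1$ and study the pointwise term $h(\hata,\hatC) = \|\hat{G}(z)-G(z)\|_F^2 = \|\hat{S}(z)/\hat{p}(z) - S(z)/p(z)\|_F^2$, treating $\hat{S}(z)\in\mathbb{C}^{\lout\times\lin}$ and $\hat{p}(z)\in\mathbb{C}$ as intermediate variables through which $\hatC$ and $\hata$ act linearly.

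Next I compute the (Wirtinger) partial derivatives of $h$ with respect to $\hat{S}(z)$ and $\hat{p}(z)$. These are the exact matrix-valued analogs of equations~\eqref{eqn:grads} and~\eqref{eqn:gradp}: schematically
\[
\partial h/\partial \hat{S}(z) \;\propto\; \tfrac{1}{\hat{p}(z)}\bigl(\hat{G}(z)-G(z)\bigr)^{*},\qquad
\partial h/\partial \hat{p}(z) \;\propto\; -\tfrac{1}{\hat{p}(z)^2}\bigl\langle \hat{S}(z),\,\hat{G}(z)-G(z)\bigr\rangle^{*}.
\]
Since $\hat{p}(z)$ is linear in $\hata$ and $\hat{S}(z)$ is linear in $\hatC$, the chain rule yields
\[
\inner{\partial h/\partial \hata,\,\hata-a} + \inner{\partial h/\partial \hatC,\,\hatC-C} \;=\; (\partial h/\partial \hat{p}(z))(\hat{p}(z)-p(z)) + \bigl\langle \partial h/\partial \hat{S}(z),\,\hat{S}(z)-S(z)\bigr\rangle.
\]
Substituting the gradient formulas and combining the two terms over the common denominator $\hat{p}(z)^2$, the numerator becomes $\hat{S}(z)p(z) - S(z)\hat{p}(z)$; here is the one place scalar-matrix commutativity is used, and it factors cleanly as $p(z)\hat{p}(z)\bigl(\hat{G}(z)-G(z)\bigr)$. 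Exactly as in the SISO computation, this collapses the right-hand side to
\[
2\,\Re\!\left\{\frac{p(z)}{\hat{p}(z)}\right\}\bigl\|\hat{G}(z)-G(z)\bigr\|_F^2,
\]
so on the domain $\neighbor(a)$ the pointwise term $h$ is $\tau$-weakly-quasi-convex with respect to the common minimum $(a,C)$.

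Finally, I integrate over $\theta\in[0,2\pi]$: the MIMO risk $g$ from~\eqref{eqn:mimo_risk} is a nonnegative (continuous) combination of pointwise terms of the form $\|\hat{G}(e^{i\theta})-G(e^{i\theta})\|_F^2$, each $\tau$-weakly-quasi-convex with respect to the same minimizer. Proposition~\ref{prop:QWC-sum} (which extends from finite to integral combinations by the same linearity-of-gradient argument) then yields the claimed $\tau$-weak-quasi-convexity of $g$ on $\neighbor(a)\otimes\mathbb{R}^{\lin\times\lout\times n}$. I do not anticipate a real obstacle: the only genuinely non-SISO step is absorbing matrix factors through scalar denominators, which is immediate once one writes $\hat{G}(z)-G(z) = (\hat{S}(z)p(z)-S(z)\hat{p}(z))/(p(z)\hat{p}(z))$, and the Frobenius norm plays precisely the same role that the scalar modulus played before.
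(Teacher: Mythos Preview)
Your proposal is correct and follows essentially the same approach the paper indicates: the paper does not give a full proof but states that the result ``follows from a straightforward extension of the proof of Lemma~\ref{lem:main} by observing that matrix $S(z)$ (or $\hat{S}(z)$) commute with scalar $p(z)$ and $\hat{p}(z)$, and that $\hat{S}(z),\hat{p}(z)$ are linear in $\hata,\hatC$,'' which is precisely the pointwise computation, commutativity step, and linearity-plus-chain-rule argument you carry out. Your use of Proposition~\ref{prop:QWC-sum} to pass from the pointwise terms to the integral is likewise the same mechanism as in the SISO case via Proposition~\ref{prop:risk_freq_rep}.
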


Finally, as alluded before, we use a particular state space representation for learning the system in time domain with example sequences. It is known that any transfer function of the form~\eqref{eqn:form_mimo} can be realized uniquely by the state space system of the following special case of Brunovsky normal form~\cite{Brunovsky1970}, 

\begin{equation}
\quad A\; = \;
\left[ \begin{array}{ccccc} 0 & \Id_{\lin} & 0 & \cdots & 0 \\ 0 & 0 & \Id_{\lin} & \cdots & 0 \\
\vdots & \vdots & \vdots & \ddots & \vdots \\  0 & 0 & 0 & \cdots & \Id_{\lin} \\
-a_n\Id_{\lin} & -a_{n-1}\Id_{\lin} & -a_{n-2}\Id_{\lin} & \cdots & -a_1\Id_{\lin} \end{array} \right], \quad B = \left[ \begin{array}{c} 0 \\ \vdots \\ 0 \\ \Id_{\lin} \end{array} \right],  
\label{eqn:mimo_controllable_form}
\end{equation}
and, 
\begin{equation}
C\in \R^{\lout \times n\lin }, \quad D\in \R^{\lout\times\lin}\mper\nonumber
\end{equation}
The following Theorem is a straightforward extension of Corollary~\ref{cor:main} and Theorem~\ref{thm:improper} to the MIMO case.

\begin{theorem}\label{thm:mimo}
	Suppose transfer function $G(z)$ of a MIMO system takes form~\eqref{eqn:form_mimo}, and has norm $\|G\|_{\mathcal{H}_2}\le 1$. If the common denominator $p(z)$
			is \acext{\alpha}{d} then projected stochastic gradient descent over the state space representation~\eqref{eqn:mimo_controllable_form} will return $\hatTheta$ with risk
		\begin{equation}
			f(\hatTheta) \le \frac{\poly(n+d,\sigma,\tau, (1-\alpha)^{-1})}{TN}\mper\nonumber
		\end{equation}
\end{theorem}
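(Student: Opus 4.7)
The strategy is to reduce the MIMO statement to the already-proved SISO/acquiescent pipeline via the extension trick of Section~\ref{sec:improper}, and then re-run the three-step schema (weak quasi-convexity, unbiased gradient estimator, variance bound) that underlies Corollary~\ref{cor:main} and Theorem~\ref{thm:improper}. Since $p(z)$ is \acext{\alpha}{d}, pick the polynomial $u(z)$ of degree $d$ from Definition~\ref{def:nice_extensible} so that $\tilde p(z) := p(z)u(z)$ is $\alpha$-acquiescent. Multiplying the shared scalar denominator and the matrix-valued numerator by $u(z)$ gives the equivalent representation $G(z) = \tilde S(z)/\tilde p(z)$ with $\tilde S(z) = S(z)u(z)$, which is realized by the Brunovsky normal form~\eqref{eqn:mimo_controllable_form} with $m = n+d$ hidden states. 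Exactly as in the proof of Theorem~\ref{thm:improper}, the bound $\|G\|_{\mathcal{H}_2}\le 1$ together with the acquiescence bounds $\tau_1 \le |\tilde p(z)|\le \tau_2$ on the unit circle controls $\|\tilde S\|_{\mathcal{H}_2}$, hence the Frobenius norm of the coefficient matrix $C \in \R^{\lout\times m\lin}$ is $O_\tau(1)$, so the true parameter lies in the projection set $\ball_{\alpha}\otimes \R^{\lout\times m\lin}\otimes \R^{\lout\times \lin}$.

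Next I would verify the three ingredients of Proposition~\ref{lem:framework} applied to the MIMO idealized risk~\eqref{eqn:mimo_risk}. Weak quasi-convexity on $\ball_{\alpha}\otimes \R^{\lout\times m\lin}$ is immediate from Lemma~\ref{lem:mimo_lemma} combined with the monotonicity argument used in Lemma~\ref{lem:convexity_smoothness}: the angular condition is about the scalar denominator $\tilde p(z)$, which is unchanged from the SISO setting. Weak smoothness is a termwise copy of Lemma~\ref{lem:smooth}, with $s(z)$ replaced by $\tilde S(z)$ and absolute values replaced by Frobenius norms; since $\tilde S(z)$ commutes with the scalar $\tilde p(z)$ the same chain-rule computation goes through, picking up only a factor of $\lin\lout$. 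The unbiased (up to $\exp(-\Omega((1-\alpha)T))$ bias) gradient estimator is constructed as in Lemma~\ref{lem:unbiased_estimator}: discard the first $T_1 = T/4$ outputs so that the effect of the unknown $h_0$ is inverse-exponentially small, then back-propagate through the truncated squared-Frobenius loss. Finally, the variance bound of Lemma~\ref{lem:variance} lifts once we control $\|\tilde A^k B\|_F$ via the no-blow-up property (Lemma~\ref{lem:no_blowing_up}) applied to the companion matrix of $\tilde p(z)$; the block structure $B = [0;\ldots;0;\Id_{\lin}]$ and the wider $C$ enter only as multiplicative $\poly(\lin,\lout)$ factors absorbed into the polynomial prefactor.

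Combining these three ingredients via Proposition~\ref{lem:framework} gives an idealized-risk bound, and the analogue of Lemma~\ref{lem:population_risk} (whose proof carries over verbatim, since the pairwise independence of the coordinates of $x_t$ and Claim~\ref{claim:gaussian_expectation} make the cross terms vanish just as in the scalar case) converts this to a population-risk bound. The sequence-splitting trick of Corollary~\ref{cor:main} (breaking each length-$T$ trajectory into $T/(\beta m)$ trajectories of length $\beta m$) then yields the stated rate $\poly(n+d,\sigma,\tau,(1-\alpha)^{-1})/(TN)$. The main obstacle I anticipate is purely bookkeeping: carefully propagating the matrix dimensions $\lin,\lout$ and the $\mathcal{H}_2$-norm of $G$ through the MIMO analogue of Lemma~\ref{lem:variance}, because the impulse-response sums now involve matrix-valued summands $\tilde C \tilde A^k B \in \R^{\lout\times \lin}$; beyond these factors, every step is a direct translation of the SISO argument, relying crucially on the scalarity of the common denominator so that the Fourier-domain geometry of Lemma~\ref{lem:mimo_lemma} is identical to the scalar case.
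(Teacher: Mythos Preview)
Your proposal is correct and follows essentially the same approach as the paper: the paper's proof sketch likewise observes that $A$ and $B$ in the Brunovsky form are tensor products of $\Id_{\lin}$ with $\CC(a)$ and $e_n$, so the no-blow-up property (Lemma~\ref{lem:no_blowing_up}) carries over, and that the remaining work is re-running the variance bound (Lemma~\ref{lem:variance}) with matrix norms; it then cites Corollary~\ref{cor:main} and Theorem~\ref{thm:improper} as the template for the rest, exactly as you propose via the extension $\tilde p = pu$, Lemma~\ref{lem:mimo_lemma} for quasi-convexity, and the sequence-splitting trick.
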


We note that since $A$ and $B$ are simply the tensor product of $\Id_{\lin}$ with $\CC(a)$ and $e_n$,  the no blow-up property (Lemma~\ref{lem:no_blowing_up}) for $A^kB$ still remains true. Therefore to prove Theorem~\ref{thm:mimo}, we essentially only need to run the proof of Lemma~\ref{lem:variance} with matrix notation and matrix norm. We defer the proof to the full version.

\section{Simulations}

In this section, we provide proof-of-concepts experiments on synthetic data. We will demonstrate that 
\begin{enumerate}
	\item[1)] plain SGD tends to blow up even with relatively small learning rate, especially on hard instances
	\item[2)] SGD with our projection step converges with reasonably large learning rate, and with over-parameterization the final error is competitive
	\item[3)] SGD with gradient clipping has the strongest performance in terms both of the convergence speed and the final error
\end{enumerate} 

Here gradient clipping refers to the technique of using a normalized gradient instead of the true gradient. Specifically, for some positive hyper parameter $B$, we follow the approximate gradient
\[
g_{\textup{clip}} = \begin{cases} g & \mbox{if $\|g\|\le B$}\\
B g/\|g\| & otherwise
\end{cases}\,
\]
This method is commonly applied in training recurrent neural networks~\cite{pascanu2013difficulty}. 

Bullet 1) suggests that stability is indeed a real concern. Bullet 2) corroborates our theoretical study. Finding 3) suggests the instability of SGD partly arises from the noise in the batches, and such noise is reduced by the gradient clipping. Our experiments suggest that the landscape of the objective function may be even nicer than what is predicted by our theoretical development. It remains possible that the objective has no non-global local minima, possibly even outside the convex set to which our algorithm projects.

We generate the true system with state dimension $d=20$ by randomly picking the conjugate pairs of roots of the characteristic polynomial inside the circle with radius $\rho = 0.95$ and randomly generating the vector $C$ from standard normal distribution. 
The distribution of the norm of the impulse response $r$ (defined in Section~\ref{sec:fourier}) of such systems has a  heavy-tail. When the norm of $r$ is several magnitudes larger than the median it's  difficult to learn the system. Thus we select systems with reasonable $\|r\|$ for experiments, and we observe that the difficulty of learning increases as $\|r\|$ increases. 
The inputs of the dynamical model are generated from standard normal distribution with length $T=500$. We note that we generate new fresh inputs and outputs at every iterations and therefore the training loss is equal to the test loss (in expectation.) We use initial learning rate 0.01 in the projected gradient descent and SGD with gradient clipping. We use batch size 100 for all experiments, and decay the learning rate at 200K and 250K iteration by a factor of 10 in all experiments. 
\begin{figure}[t]
\centering

\begin{subfigure}[t]{0.32\textwidth}
	\centering
	\includegraphics[width=\textwidth]{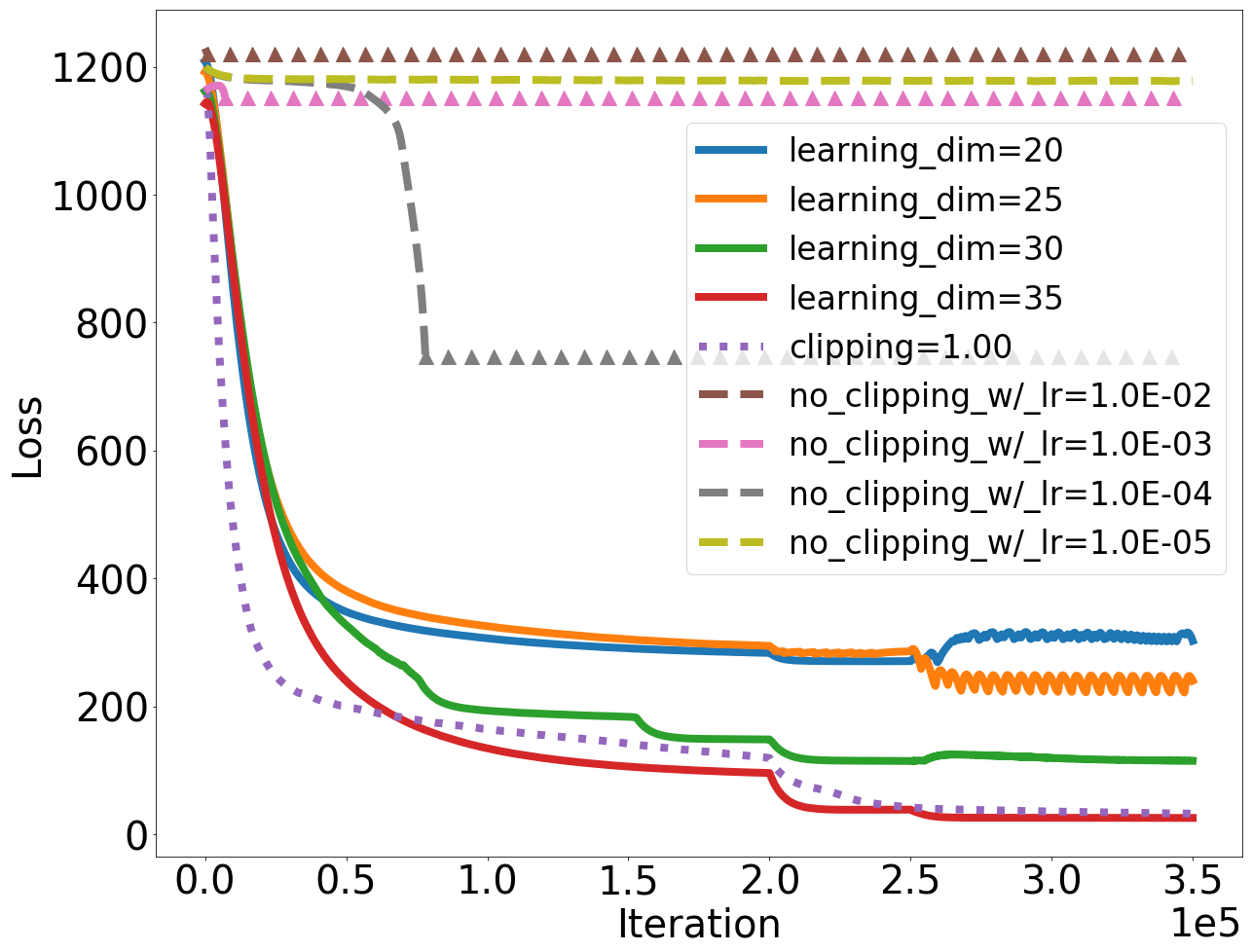}
\end{subfigure}
\begin{subfigure}[t]{0.32\textwidth}
	\centering
	\includegraphics[width=\textwidth]{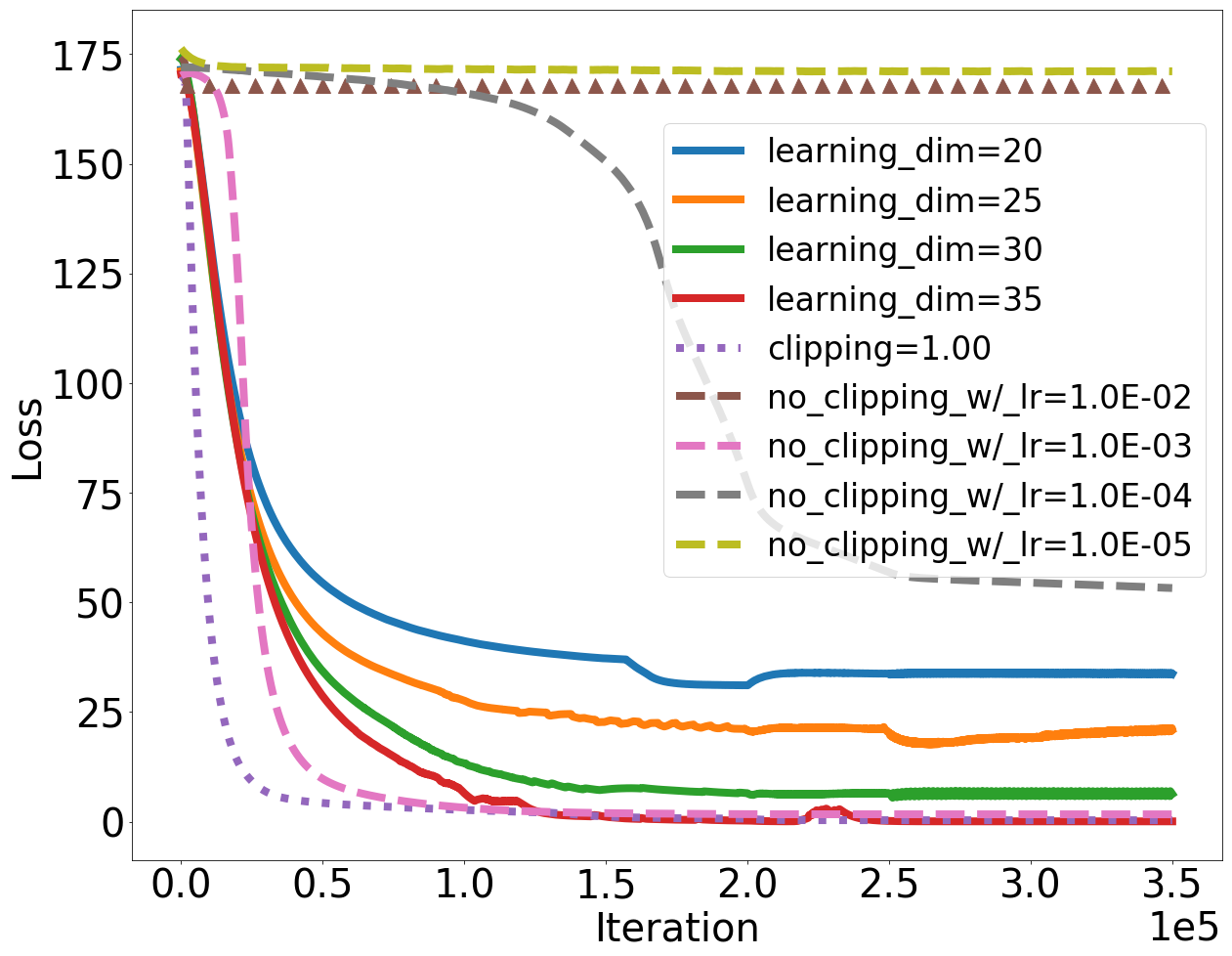}
\end{subfigure}
\begin{subfigure}[t]{0.32\textwidth}
	\centering
	\includegraphics[width=\textwidth]{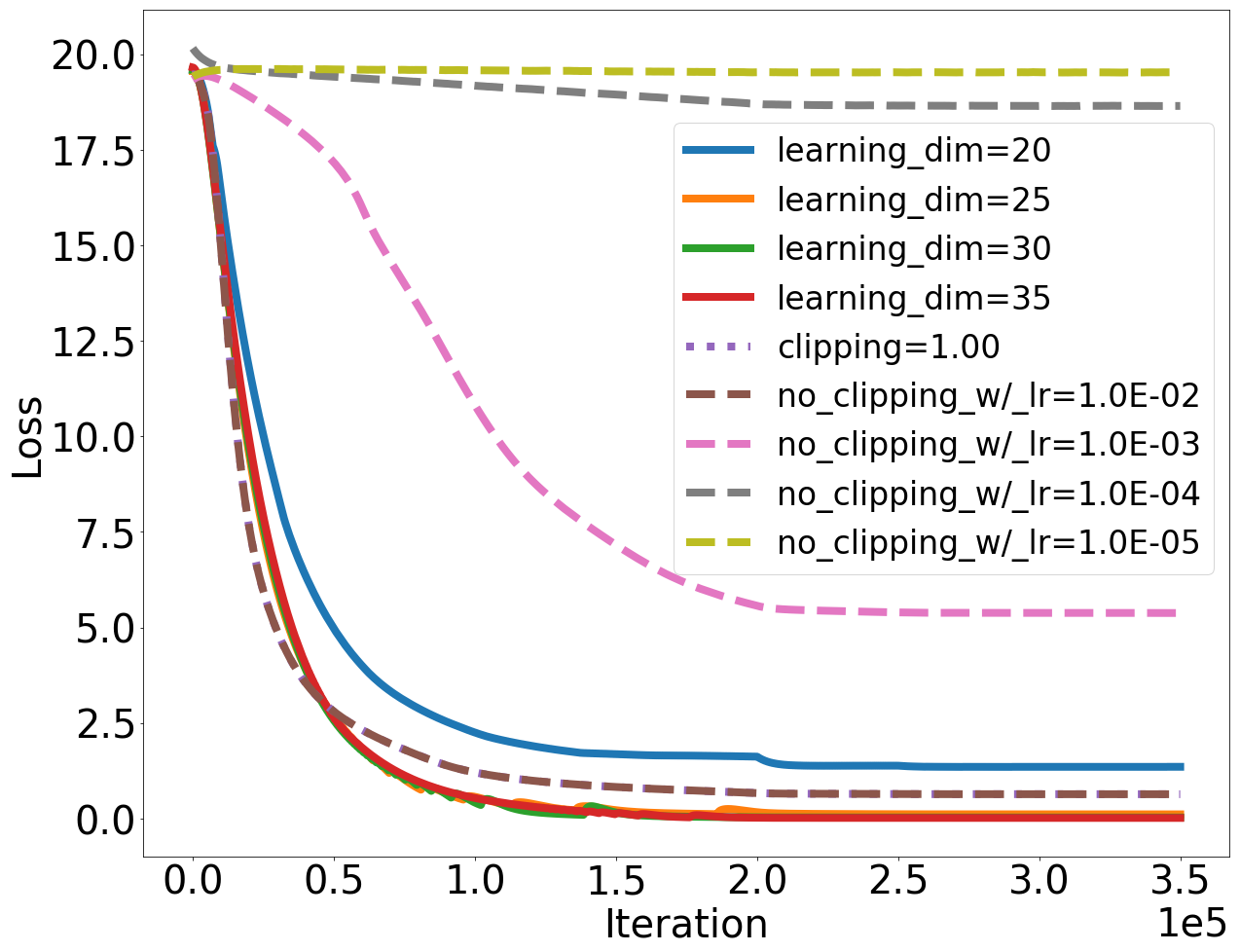}
\end{subfigure}
\caption{The performance of projected stochastic gradient descent with over-parameterization, vanilla SGD, and SGD with gradient clipping, on three different instance of dynamical systems with true state dimension = 20. The solid lines are from our proposed projected SGD with (over-parameterized) state dimension = 20, 25, 30, 35. The dot line corresponds to SGD with gradient clipped to Frobenius norm 1. The dashed lines correspond vanilla SGD and the triangle marker means the error blows up to infinity. The plot demonstrates the effect of the over-parameterization to our our algorithm. We note that the loss are different scales because the true systems in these three instances have different norms of impulse responses (which is equal to the loss of zero fitting). }
\label{fig:over}
\vskip -.5cm
\end{figure}

\section*{Acknowledgments}
We thank Amir Globerson, Alexandre Megretski, Pablo A. Parrilo, Yoram Singer, Peter Stoica, and Ruixiang Zhang for helpful discussions. We are indebted to Mark Tobenkin for pointers to relevant prior work.  We also thank Alexandre Megretski for helpful feedback, insights into passive systems and suggestions on how to organize Section~\ref{sec:fourier}.
\bibliography{ref}
\appendix
\section{Background on optimization}
\label{sec:optimization}

The proof below uses the standard analysis of gradient descent for non-smooth
objectives and demonstrates that the argument still works for \CONDNAME
functions.

\begin{proof}[Proof of Proposition~\ref{lem:framework}] 
	We start by using the
	\CONDNAME condition and then the rest follows a variant of the standard analysis of non-smooth
	projected sub-gradient descent\footnote{Although we used weak smoothness to get a slightly better bound}. We conditioned on $\thetax_k$, and have that
	\begin{align}
	\tau (f(\thetax_k) - f(\thetax^*)) & \le \nabla f(\thetax_k)^{\top} (\thetax_k-\thetax^*) = \Exp[\mfr(\thetax_k)^{\top}(\thetax_k-\thetax^*)\mid \thetax_k ]\nonumber \\
	& = \Exp\left[\frac{1}{\eta} (\thetax_k -w_{k+1}) (\thetax_k-\thetax^*)\mid \thetax_k\right] \nonumber\\
	& = \frac{1}{\eta} \left(\Exp\left[\|\thetax_k-w_{k+1}\|^2 \mid \thetax_k\right]+ \|\thetax_k - \thetax^*\|^2 - \Exp\left[\|w_{k+1}-\thetax^*\|^2\mid \thetax_k\right]\right)\nonumber \\
	& = \eta \Exp\left[\|\mfr(\thetax_k)\|^2 \right]+ \frac{1}{\eta}\left(\|\thetax_k - \thetax^*\|^2 - \Exp\left[\|w_{k+1}-\thetax^*\|^2\mid \thetax_k\right]\right) \label{eqn:eqn3}
	\end{align}
	where the first inequality uses \CONDNAME and the rest of lines are simply algebraic manipulations. Since $\thetax_{k+1}$ is the projection of $w_{k+1}$ to $\ball$ and $\thetax^*$ belongs to $\ball$, we have $\|w_{k+1}-\thetax^*\|\ge \|\thetax_{k+1}-\thetax^*\|$. Together with~\eqref{eqn:eqn3}, and $$\Exp\left[\|\mfr(\thetax_k)\|^2\right] = \|\nabla f(\thetax_k)\|^2 + \Var[\mfr(\thetax_k)] \le \Gamma (f(\thetax_k)-f(\thetax^*)) + V, $$ we obtain that
	\begin{align*}
	\tau (f(\thetax_k) - f(\thetax^*))
	& \le \eta \Gamma(f(\thetax_k) - f(\thetax^*))+ \eta V + \frac{1}{\eta}\left(\|\thetax_k - \thetax^*\|^2 - \Exp\left[\|\thetax_{k+1}-\thetax^*\|^2\mid \thetax_k\right]\right)\mper
	\end{align*}
	Taking expectation over all the randomness and summing over $k$ we obtain that
	\begin{align*}
	\sum_{k=0}^{K-1} \Exp\left[f(\thetax_k) - f(\thetax^*)\right] &\le \frac{1}{\tau-\eta\Gamma}\left(\eta KV + \frac{1}{\eta}\|\thetax_0-\thetax^*\|^2\right)
	\le \frac{1}{\tau-\eta\Gamma}\left(\eta KV + \frac{1}{\eta}R^2\right)\mper
	\end{align*}
	where we use the assumption that $\|\thetax_0-\thetax^*\|\le R$. Suppose $K \ge \frac{4R^2\Gamma^2}{ V\tau^2}$, then we take $\eta = \frac{R}{\sqrt{VK}}$. Therefore we have that $\tau - \eta \Gamma \ge \tau/2$ and therefore 
		\begin{equation}
	\sum_{k=0}^{K-1} \Exp\left[f(\thetax_k) - f(\thetax^*)\right] \le \frac{4R\sqrt{V}\sqrt{K}}{\tau} \mper\label{eqn:eqn9}
	\end{equation}
	
	On the other hand, if $K \le \frac{4R^2\Gamma^2}{ V\tau^2}$, we pick $\eta = \frac{\tau}{2\Gamma}$ and obtain that 
	\begin{equation}
	\sum_{k=0}^{K-1} \Exp\left[f(\thetax_k) - f(\thetax^*)\right] \le \frac{2}{\tau}\left(\frac{\tau KV}{2\Gamma} + \frac{2\Gamma R^2}{\tau}\right) \le \frac{8\Gamma R^2}{\tau^2}\mper\label{eqn:eqn8}
	\end{equation}
	
	Therefore using equation~\eqref{eqn:eqn8} and~\eqref{eqn:eqn9} we obtain that when choosing $\eta$ properly according to $K$ as above, 
	\begin{equation*}
	\Exp_{k\in [K]}\left[f(\thetax_k) - f(\thetax^*)\right] \le \max\left\{\frac{8\Gamma R^2}{\tau^2K}, \frac{4R\sqrt{V}}{\tau \sqrt{K}}\right\}\mper\qedhere
	\end{equation*}
\end{proof}

\section{Toolbox}

\begin{lemma}\label{lem:uinverseB}
		
		Let $B = e_n\in \R^{n\times 1}$ and $\lambda \in [0,2\pi]$, $w\in \C$. Suppose $A$ with $\rho(A)\cdot|w| < 1$ has the controllable canonical form $A  =\CC(a)$. Then  
		$$(I-wA)^{-1}B =\frac{1}{p_a(w^{-1})}\left[\begin{matrix} w^{-1}\\w^{-2}\\\vdots \\ w^{-n}\end{matrix}\right]$$
		where $p_a(x) = x^n + a_1x^{n-1}+\dots+a_n$ is the characteristic polynomial of $A$. 
	\end{lemma}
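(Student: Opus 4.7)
The plan is to verify the formula by direct computation: set $v = (I-wA)^{-1}B$, which is well-defined since $\rho(A)|w| < 1$ guarantees $I - wA$ is invertible, and then show that the candidate vector on the right-hand side satisfies $(I - wA)v = e_n$. Equivalently, it suffices to solve the recursion $v = e_n + wAv$ by exploiting the very special structure of $A = \CC(a)$ and $B = e_n$.

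Writing $v = [v_1,\dots,v_n]^\top$, the action of $A$ in controllable canonical form shifts the coordinates upward and places an affine combination in the last slot: $Av = [v_2, v_3, \dots, v_n, -a_n v_1 - a_{n-1} v_2 - \cdots - a_1 v_n]^\top$. Combining this with $v = e_n + wAv$ yields the system of scalar equations $v_i = w v_{i+1}$ for $i = 1,\dots,n-1$ together with the ``boundary'' equation $v_n = 1 - w(a_n v_1 + a_{n-1} v_2 + \cdots + a_1 v_n)$. The first $n-1$ equations propagate trivially to give $v_i = w^{n-i} v_n$ for all $i$.

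Substituting this into the boundary equation collapses everything into a single scalar identity $v_n \bigl(1 + a_1 w + a_2 w^2 + \cdots + a_n w^n\bigr) = 1$. The key algebraic observation is that the parenthesized polynomial equals $w^n p_a(w^{-1})$, by the definition $p_a(x) = x^n + a_1 x^{n-1} + \cdots + a_n$. Hence $v_n = w^{-n}/p_a(w^{-1})$, and back-substituting gives $v_i = w^{n-i} v_n = w^{-i}/p_a(w^{-1})$, which is exactly the claimed formula.

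There is no real obstacle: the argument is entirely mechanical once one writes out $Av$ coordinate-wise, and the only point requiring a little care is the index bookkeeping between the companion-form convention ($a_1$ multiplies $z^{n-1}$, not $z$) and the substitution $x = w^{-1}$, which is what converts the normalizing factor into $p_a(w^{-1})$ rather than some other polynomial in $w$.
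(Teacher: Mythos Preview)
Your proof is correct and follows essentially the same approach as the paper's: both solve $(I-wA)v=e_n$ directly, use the companion-form structure to obtain the recursion $v_i = w\,v_{i+1}$, and then determine the normalizing constant from the last equation. Your write-up is in fact a bit more explicit than the paper's in tracking the identity $1 + a_1 w + \cdots + a_n w^n = w^n p_a(w^{-1})$, but the argument is the same.
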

		\begin{proof}
		let $v =  (I-wA)^{-1}B$ then we have $(I-wA)v = B$. Note that $B= e_n$, and $I-wA$ is of the form
			\begin{equation}
			I-wA \; = \;
			\left[ \begin{array}{ccccc} 1 & -w & 0 & \cdots & 0 \\ 0 & 1 & -w & \cdots & 0 \\
			\vdots & \vdots & \vdots & \ddots & \vdots \\  0 & 0 & 0 & \cdots & -w \\
			a_nw & a_{n-1}w & a_{n-2}w & \cdots & 1+ a_1 w\end{array} \right]
			\end{equation}
		Therefore we obtain that $v_{k} = wv_{k+1}$ for $1\le k \le n-1$. That is, $v_k = v_0 w^{-k}$ for $v_0 = v_1 w^{1}$. Using the fact that $((I-wA)v)_n = 1$, we obtain that $v_0 = p_a(w^{-1})^{-1}$ where $p_a(\cdot)$ is the polynomial $p_a(x) = x^n+ a_1x^{n-1} + \dots + a_n$. Then we have that $u(I-wA)^{-1}B =\frac{u_1w^{-1} + \dots + u_n w^{-n}}{p_a(w^{-1})}$
	\end{proof}
	
\begin{lemma}\label{claim:gaussian_expectation}
	Suppose $x_1,\dots,x_n$ are independent variables with mean 0 and covariance matrices and  $\Id$, $U_1,\dots, U_d$ are fixed matrices, then
	\begin{equation}\textstyle
	\Exp\left[\|\sum_{k=1}^{n} U_kx_k\|^2\right]  = \sum_{k=1}^n \|U_k\|_F^2\mper\nonumber
	\end{equation}
\end{lemma}
\begin{proof}
	We have that 
	\begin{equation*}\textstyle
	\Exp\left[\|\sum_{k=1}^{n} U_kx_k\|_F^2\right]  = \Exp \sum_{k, \ell}^n \trace(U_k x_kx_{\ell}^{\top}U_{\ell}^{\top}) = \sum_{k}^n \trace(U_k x_kx_{k}^{\top}U_{k}^{\top}) = \sum_{k=1}^n \|U_k\|_F^2\qedhere
	\end{equation*}
\end{proof}

\section{Missing proofs in Sections~\ref{sec:acq} and~\ref{sec:learning}}
\label{sec:proofs}

\subsection{Monotonicity of acquiescence: Proof of Lemma~\ref{lem:monotone}}
\label{sec:monotonicity}

\begin{lemma}[Lemma~\ref{lem:monotone} restated]\label{lem:monotone_repeat}
	For any $0 < \alpha < \beta$, we have that $\ball_{\alpha}\subset \ball_{\beta}$. 
\end{lemma}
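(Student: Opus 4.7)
The plan is to reduce the claim to the maximum/minimum principle for harmonic functions, after rewriting the defining condition in a form that is analytic in a disk rather than an annulus.

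First I will switch to the equivalent reformulation noted right after~\eqref{eqn:def_ball}: $a\in\ball_{\alpha}$ iff $q(w):=w^n p_a(1/w)=1+a_1w+\cdots+a_nw^n$ lies in $\ballC$ for every $w$ on the circle $|w|=1/\alpha$. The key feature of this reformulation is that $q(w)$ is a polynomial, hence entire, so in particular it is holomorphic on the closed disk $\{|w|\le 1/\alpha\}$. This is exactly the setup required to invoke the maximum principle.

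Next I will observe that the target set $\ballC$ defined in~\eqref{eqn:ballc} is an intersection of four half-planes, each of the form $L(\Re z,\Im z)\ge 0$ (or $>0$) for some affine function $L$: namely $\Re z-(1+\tau_0)\Im z\ge 0$, $\Re z+(1+\tau_0)\Im z\ge 0$, $\Re z-\tau_1>0$, and $\tau_2-\Re z>0$. For any such affine $L$, the real-valued function $w\mapsto L(\Re q(w),\Im q(w))$ is harmonic on $\C$, since $\Re q$ and $\Im q$ are the real and imaginary parts of a holomorphic function and $L$ is affine in them.

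Now I would apply the minimum principle to each of the four harmonic functions on the disk $\{|w|\le 1/\alpha\}$: since $a\in\ball_{\alpha}$ means each of them is nonnegative (resp.\ strictly positive) on the boundary $|w|=1/\alpha$, the same inequality must hold everywhere inside the disk, for otherwise the harmonic function would attain an interior minimum below its boundary values and hence be constant, contradicting the boundary condition in the strict cases (and trivially preserving it in the non-strict ones). Consequently $q(w)\in\ballC$ for every $w$ with $|w|\le 1/\alpha$. Since $\beta>\alpha$ gives $1/\beta<1/\alpha$, the circle $|w|=1/\beta$ sits inside this disk, so $q(w)\in\ballC$ on it as well, proving $a\in\ball_{\beta}$.

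There is no real obstacle here; the only things to be careful about are (i) using the reciprocal substitution so that the relevant domain becomes a disk, on which the maximum principle applies, rather than the punctured exterior, and (ii) handling the strict constraints $\Re z<\tau_2$ and $\Re z>\tau_1$, which the maximum principle still preserves because a non-constant harmonic function cannot attain its extremum in the interior.
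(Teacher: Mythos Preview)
Your proposal is correct and follows essentially the same route as the paper: switch to the polynomial $q(w)=1+a_1w+\cdots+a_nw^n$ so that the relevant domain is the disk $|w|\le 1/\alpha$, and then apply the minimum principle for harmonic functions to the (affine combinations of) real and imaginary parts of $q$ to push each defining inequality of $\ballC$ from the circle $|w|=1/\alpha$ into the interior, in particular to the smaller circle $|w|=1/\beta$. The paper's proof does exactly this, handling $\Re q\ge \tau_1$ explicitly and dispatching the remaining half-plane constraints with a ``similarly''; your version is just a bit more explicit in enumerating the four half-planes and in treating the strict versus non-strict inequalities.
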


\begin{proof} Let $q_a(z) = 1+a_1z+\dots + a_nz^n$. Note that $q(z^{-1}) = p_a(z)/z^n$. Therefore we note that $B_{\alpha} = \{a: q_a(z)\in \ballC, \forall |z| = 1/\alpha \}$. 
	Suppose $a\in \ball_{\alpha}$, then $\Re(\phat_{a}(z)) \ge \tau_1$  for any $z$ with $|z| = 1/\alpha$. 
	Since  $\Re(\phat_a(z))$ is the real part of the holomorphic function $\phat_a(z)$, its a harmonic function. By maximum (minimum) principle of the harmonic functions, we have that for any $|z| \le 1/\alpha$, $\Re(\phat_a(z)) \ge \inf_{|z| = 1/\alpha} \Re(\phat_a(z)) \ge \tau_1$. In particular, it holds that for $|z| = 1/\beta <1/\alpha$, $\Re(\phat_a(z))\ge \tau_1$. Similarly we can prove that for $z$ with $|z| = 1/\beta$, $\Re(q_a(z)) \ge (1+\tau_0)\Im(q_a(z))$, and other conditions for $a$ being in $\ball_{\beta}$. 
\end{proof}

\subsection{Proof of Lemma~\ref{lem:unbiased_estimator}}

Lemma~\ref{lem:unbiased_estimator} follows directly from the following general Lemma  which also handles the multi-input multi-output case. It can be seen simply from calculation similar to the proof of Lemma~\ref{lem:population_risk}. We mainly need to control the tail of the series using the no-blow up property (Lemma~\ref{lem:no_blowing_up})  and argue that the wrong value of the initial states $h_0$ won't cause any trouble to the partial loss function $\ell((x,y),\hatTheta)$ (defined in Algorithm~\ref{alg:backprop}). This is simply because  after time $T_1 = T/4$, the influence of the initial state is already washed out.

\begin{lemma}\label{lem:unbisaed_estimator_general}
	In algorithm~\ref{alg:back_prop_general} the values of $G_A,G_C,G_D$ are equal to the gradients of $g(\hatA,\hatC)+(\hatD-D)^2$ 	with respect to $\hatA$, $\hatC$ and $\hatD$ up to inverse exponentially small error. 
\end{lemma}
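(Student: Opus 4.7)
The plan is to compare the expected partial loss to the idealized risk plus the $\hatD$-term, and then exchange expectation with gradient. Concretely, start by writing the output sequences explicitly: under the system~\eqref{eqn:model} we have
\[
y_t = Dx_t + \sum_{k=1}^{t-1} CA^{t-k-1}Bx_k + CA^{t-1}h_0 + \xi_t,
\qquad
\tilde y_t = \hatD x_t + \sum_{k=1}^{t-1} \hatC\hatA^{t-k-1}Bx_k,
\]
where the only new feature relative to the derivation in Lemma~\ref{lem:population_risk} is the term $CA^{t-1}h_0$, which encodes the mismatch between the true initial state and the zero state used by the algorithm. Using independence of the $x_k$'s and $\xi_t$'s with mean zero and unit variance (via Claim~\ref{claim:gaussian_expectation}), the cross terms involving either $h_0$ or $\xi_t$ against $x_k$'s vanish in expectation, yielding
\[
\Exp\bigl[\|\tilde y_t-y_t\|^2\bigr] \;=\; (\hatD-D)^2 + \sum_{k=0}^{t-2}\bigl(\hatC\hatA^{k}B-CA^{k}B\bigr)^2 + \|CA^{t-1}h_0\|^2 + \sigma^2.
\]

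The next step is to average over $t>T_1=T/4$ and show that (i)~the contribution of the initial-state term $\|CA^{t-1}h_0\|^2$ is negligible and (ii)~the truncated finite sum agrees with $g(\hatA,\hatC)$ up to negligible error. For~(i), since $\Theta$ is $\alpha$-acquiescent with $\alpha<1$, Lemma~\ref{lem:no_blowing_up} gives $\|A^{t-1}h_0\|^2\le O(n/\tau_1^2)\cdot \alpha^{2(t-1-n)}$, so each term for $t>T_1$ is at most $\exp(-\Omega((1-\alpha)T))$. For~(ii), after swapping the order of summation,
\[
\frac{1}{T-T_1}\sum_{t>T_1}\sum_{k=0}^{t-2}\bigl(\hatC\hatA^{k}B-CA^kB\bigr)^2 \;=\; \sum_{k=0}^{T-2} w_k\bigl(\hatC\hatA^{k}B-CA^kB\bigr)^2,
\]
where $w_k=1$ for $k\le T_1-1$ and $w_k=(T-1-k)/(T-T_1)\in[0,1)$ for larger $k$. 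The same no-blow-up bound applied to both $\hatA$ (which lies in the projection set $\ball_\alpha$) and $A$ shows that the tail $k\ge T_1$ contributes at most $\exp(-\Omega((1-\alpha)T))$, and the same bound controls the omitted tail $k\ge T-1$. Combining these estimates yields
\[
\Exp[\ell((x,y),\hatTheta)] \;=\; (\hatD-D)^2 + g(\hatA,\hatC) + \sigma^2 \;\pm\; \exp(-\Omega((1-\alpha)T)).
\]

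Finally, since $G_A, G_C, G_D$ are defined as partial derivatives of $\ell$, and since $\ell$ is a polynomial in the bounded parameters $(\hata,\hatC,\hatD)$ on the compact projection set (so dominated convergence applies and one may interchange $\nabla$ and $\Exp$), we obtain
\[
\Exp[G_A,G_C,G_D] \;=\; \nabla_{\hata,\hatC,\hatD}\bigl(g(\hatA,\hatC)+(\hatD-D)^2\bigr) \;\pm\; \exp(-\Omega((1-\alpha)T)),
\]
noting that the $\sigma^2$ term is a constant and vanishes under differentiation. The main technical obstacle is step~(ii)—carefully handling the weights $w_k$ and bounding the mismatch between the Cesàro-averaged finite sum and the full infinite series $g(\hatA,\hatC)$ uniformly for all $\hatA$ with $\hata\in\ball_\alpha$; this is where Lemma~\ref{lem:no_blowing_up} (applied to both the true and trained systems) does the work, and it also explains why $T_1$ must be a constant fraction of $T$ rather than, say, $O(n)$.
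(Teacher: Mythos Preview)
Your proposal is correct and follows essentially the same route as the paper's proof: expand $y_t$ and $\tilde y_t$, take expectation via Claim~\ref{claim:gaussian_expectation}, reorder the double sum to expose the weights $w_k$, and invoke the no-blow-up Lemma~\ref{lem:no_blowing_up} on both the true and trained systems to absorb the tail and the initial-state contribution. One minor caveat on the final step: dominated convergence gives you $\Exp[\nabla\ell]=\nabla\Exp[\ell]$, but the passage from ``$\Exp[\ell]$ is exponentially close to $g+(\hatD-D)^2$'' to ``their gradients are exponentially close'' is not a consequence of a uniform pointwise bound alone---the paper handles this by saying ``similarly'' and redoing the same term-by-term no-blow-up estimate on the differentiated series, which is the clean way to finish.
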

\begin{proof}[Proof of Lemma~\ref{lem:unbisaed_estimator_general}]
			We first show that the partial empirical loss function $\ell((x,y),\hatTheta)$ has expectation almost equal to the idealized risk (up to the term for $\hatD$ and exponential small error), $$\Exp[\ell((x,y),\hatTheta)] = g(\hatA,\hatC) + (\hatD-D)^2\pm \exp(-\Omega((1-\alpha)T)).$$ 
	This can be seen simply from similar calculation to the proof of Lemma~\ref{lem:population_risk}. 	Note that 
	\begin{align}
	y_t = Dx_t + \sum_{k=1}^{t-1} CA^{t-k-1}Bx_{k}  + CA^{t-1}h_0+ \xi_t
	\quad\text{and}\quad
	\tilde{y}_t = \hatD x_t + \sum_{k=1}^{t-1} \hatC \hatA^{t-k-1}\hatB x_{k} \label{eqn:tildey_t}\,.
	\end{align}
	Therefore noting that when $t\ge T_1\ge \Omega(T)$, we have that $\|CA^{t-1}h_0\| \le \exp(-\Omega((1-\alpha)T)$ and therefore the effect of $h_0$ is negligible. Then we have that 
	\begin{align}
	\Exp[\ell((x,y),\hatTheta)]  & = 
	\frac{1}{T-T_1}\Exp\left[\sum_{t >  T_1}^{T} \|y_t-y_t\|^2\right]\pm \exp(-\Omega((1-\alpha)T)) \nonumber\\
	& = \|\hatD-D\|^2 + \frac{1}{T-T_1}\sum_{T\ge t > T_1} \sum_{0\le j\le t-1} \|\hatC\hatA^{j}B - CA^{j}B\|^2 \pm \exp(-\Omega((1-\alpha)T))\nonumber\\
		& = \|\hatD-D\|^2 + \sum_{j=0}^{T_1}\|\hatC\hatA^{j}B - CA^{j}B\|^2  + \sum_{T\ge j \ge T_1} \frac{T-j}{T-T_1}\|\hatC\hatA^{j}B - CA^{j}B\|^2 \pm \exp(-\Omega((1-\alpha)T)) \nonumber\\
& = \|\hatD-D\|^2 + \sum_{j=0}^{\infty}\|\hatC\hatA^{j}B - CA^{j}B\|^2 \pm \exp(-\Omega((1-\alpha)T)) \nonumber\mper
	\end{align}
	where the first line use the fact that $\|CA^{t-1}h_0\| \le \exp(-\Omega((1-\alpha)T)$, the second uses equation~\eqref{eqn:37} and the last line uses the no-blowing up property of $A^kB$ (Lemma~\ref{lem:no_blowing_up}). 
	
	Similarly, we can prove that the gradient of $\Exp[\ell((x,y),\hatTheta)]$ is also close to the gradient of $g(\hatA,\hatC) + (\hatD-D)^2$ up to inverse exponential error. 
\end{proof}

\subsection{Proof of Lemma~\ref{lem:variance}}
\newcommand{\undecided}{n}

\begin{proof}[Proof of Lemma~\ref{lem:variance}] 	Both $G_A$ and $G_C$ can be written in the form of a quadratic form (with vector coefficients) of $x_1,\dots,x_T$ and $\xi_1,\dots,\xi_T$. That is, we will write 
	\begin{equation*}
	G_A  = \sum_{s,t}  x_sx_t u_{st}+ \sum_{s, t} x_s\xi_t u_{st}'\quad\text{and}\quad
	G_C  = \sum_{s,t} x_sx_tv_{st}  + \sum_{s, t} x_s\xi_t v_{st}' \,.
	\end{equation*}
	
	where $u_{st}$ and $v_{st}$ are vectors that will be calculated later. By Lemma~\ref{claim:quadratic_form_variance}, we have that 
	\begin{equation}
	\Var\left[\sum_{s,t} x_sx_su_{st} + \sum_{s,t} x_s\xi_tu'_{st}\right] \le O(1)\sum_{s,t} \|u_{st}\|^2 + O(\sigma^2)\sum_{s,t} \|u'_{st}\|^2\mper \label{eqn:variance_formula}
	\end{equation}

	Therefore in order to bound from above $\Var\left[G_A\right]$, it suffices to bound $\sum \|u_{st}\|^2$ and $\sum \|u'_{st}\|^2$, and similarly for $G_C$. 
	
	We begin by writing out $u_{st}$ for fixed $s,t\in [T]$ and bounding its norm. We use the same set of notations as int the proof of Lemma~\ref{lem:unbiased_estimator}.  Recall that we set $\rq_k = CA^kB$ and $\hat\rq_k = \hatC\hatA^k B$, and $\drq_k = \hat\rq_k - \rq_k$. Moreover, let $z_k = \hatA^k B$.  									We note that the sums of $\|z_k\|^2$ and $r_k^2$ can be controlled. 	By the assumption of the Lemma, we have that  
	\begin{align}
	\sum_{k=t}^{\infty} \|z_k\|^2  	& \le 2\pi n\tau_1^{-2}\label{eqn:zk_tail_bound_2}, ~\textrm{}\quad \|z_k\|^2 \le  2\pi n \alpha^{2k-2\undecided}\tau_1^{-2}\mper\\
	\sum_{k=t}^{\infty} \dr_k^2 	&\le 4\pi n\tau_1^{-2}, \quad \|\dr_k\|^2 \le  4\pi n \alpha^{2k-2\undecided}\tau_1^{-2}\mper\label{eqn:delta_rk_tail_bound_2}
	\end{align}
	which will be used many times in the proof that follows. 
	
	We calculate the explicit form of $G_A$ using the explicit back-propagation Algorithm~\ref{alg:back_prop_general}. We have that in Algorithm~\ref{alg:back_prop_general}, 
	\begin{equation}\textstyle
	\hath_k = \sum_{j=1}^{k} \hatA^{k-j}Bx_j =  \sum_{j=1}^{k} z_{k-j}x_j\label{eqn:hkprime}
	\end{equation}
	and 
		\begin{align}
		\dh_k &= \sum_{j=k}^{T}\tp{\hatA}{j-k}\hatC^{\top}\dy_j 
		= \sum_{j=k}^{T}\alpha^{j-k}\tp{\hatA}{j-k}\hatC^{\top}\indicator{j > T_1}\left(\xi_j + \sum_{\ell = 1}^{j}  \drq_{j-\ell}x_\ell\right)\label{eqn:dh_k}
	\end{align}
	Then using $G_A =  \sum_{k\ge 2} B^{\top}\dh_k h_{k-1}^{'\top}$ and equation~\eqref{eqn:hkprime} and equation~\eqref{eqn:dh_k} above, we have that 
	\begin{align}
	u_{st} & = \textstyle\sum_{k = 2}^{T} \left(\sum_{j \ge \max\{k,s,T_1+1\}} \drq_{j-s}\hatC \hatA^{j-k}B\right)\indicator{k\ge t+1}\cdot\hatA^{k-t-1}B \nonumber \\
	& = \textstyle\sum_{k = 2}^{T} \left(\sum_{j \ge \max\{k,s,T_1+1\}} \drq_{j-s}\hat\rq_{j-k}\right)\indicator{k\ge t+1}\cdot z_{k-t-1}\mper \label{eqn:ust}
	\end{align}
	and that, 	
	\begin{align}
	u_{st}' & = \sum_{k=2}^T z_{k-1-s}\cdot \indicator{k \ge s+1 } \cdot \hat{\rq}'_{t-k}\cdot \indicator{t > \max\{T_1, k\}}
	 = \sum_{s+1 \le k \le t} z_{k-1-s}\cdot \hat{\rq}'_{t-k}\cdot \indicator{t > \max\{T_1\}}\label{eqn:ustprime}
	\end{align}
	
	Towards bounding $\|u_{st}\|$, we consider four different cases.  Let $\Lambda = \Omega\left(\{\max\{\undecided,(1-\alpha)^{-1}\log(\frac{1}{1-\alpha})\}\right)$ be a threshold. 
	
\paragraph{Case 1:} When $0 \le s -t \le \Lambda$, we rewrite $u_{st}$ by rearranging equation~\eqref{eqn:ust}, 
		\begin{align}
		u_{st}& = \sum_{T\ge k\ge s} z_{k-t-1}\sum_{j \ge \max\{k,T_1+1\}} \drq_{j-s}\hat\rq_{j-k}  + \sum_{t < k < s} z_{k-t-1} \sum_{j \ge \max\{s,T_1+1\}} \drq_{j-s}\hat\rq_{j-k} \nonumber\\
		&= \sum_{\ell \ge 0, \ell \ge T_1+1-s} \drq_{\ell} \sum_{s\le k\le l+s, k\le T} \hat\rq_{\ell+s-k}z_{k-t-1} + \sum_{\ell \ge 0, \ell \ge T_1+1-s} \drq_{\ell}\sum_{s > k > t}\hat\rq_{\ell+s-k}z_{k-t-1}\nonumber
		\end{align}
		where at the second line, we did the change of variables $\ell = j-s$. 		Then by Cauchy-Schartz inequality, we have,
		\begin{align}
		\|u_{st}\|^2  &\le  2\left(\sum_{\ell \ge 0, \ell \ge T_1+1-s} \drq_{\ell}^2\right) \underbrace{\left(\sum_{\ell \ge 0, \ell \ge T_1+1-s}\left\|\sum_{s\le k\le l+s, k\le T} \hat\rq_{\ell+s-k}z_{k-t-1}
			\right\|^2\right)}_{T_1} \nonumber\\
		& \quad + 2\left(\sum_{\ell \ge 0, \ell \ge T_1+1-s} \drq_{\ell}^2\right)\underbrace{\left(\sum_{\ell \ge 0, \ell \ge T_1+1-s}\left\|\sum_{s > k > t}\hat\rq_{\ell+s-k}z_{k-t-1}\right\|^2\right)}_{T_2}\mper\label{eqn:ust2}
		\end{align}
		
		We could bound the contribution from $\drq_k^2$ ssing equation~\eqref{eqn:delta_rk_tail_bound_2}, and it								remains to bound terms $T_1$ and $T_2$. Using the tail bounds for $\|z_k\|$
		(equation~\eqref{eqn:zk_tail_bound_2}) and the fact that $|\hat{\rq}_k| = |\hatC\hatA^k B|\le \|\hatA^kB\| = \|z_k\|$ , we have that 
		\begin{align}
		T_1 = \sum_{\ell \ge 0, \ell \ge T_1+1-s}\left\|\sum_{s\le k\le l+s, k\le T} \hat\rq_{\ell+s-k}z_{k-t-1}\right\|^2 &\le  \sum_{\ell \ge 0} \left(\sum_{s\le k\le \ell+s} |\hat\rq_{\ell+s-k}|\|z_{k-t-1}\|\right)^2 \mper\label{eqn:eqn14}
			\end{align}

		We bound the inner sum of RHS of~\eqref{eqn:eqn14} using the fact that $\|z_k\|^2\le O(n\alpha^{2k-2\undecided}/\tau_1^2)$ and obtain that, 
		\begin{align}
		\sum_{s\le k\le \ell+s} |\hat\rq_{\ell+s-k}|\|z_{k-t-1}\|& \le \sum_{s\le k\le \ell+s}O(n\alpha^{(\ell+s-t-1)-2\undecided}/\tau_1^2) \nonumber\\
		& \le O(\ell n\alpha^{(\ell+s-t-1)-2\undecided}/\tau_1^2)\mper \label{eqn:eqn17}
		\end{align}
		Note that equation~\eqref{eqn:eqn17} is particular effective when $\ell > \Lambda$. 		When $\ell \le \Lambda$, we can refine the bound using equation~\eqref{eqn:zk_tail_bound_2} and obtain that 
		\begin{align}
		\sum_{s\le k\le \ell+s} |\hat\rq_{\ell+s-k}|\|z_{k-t-1}\|& \le \left(\sum_{s\le k\le \ell+s}|\hat{\rq}_{\ell+s-k}|^2 \right)^{1/2}
		\left(\sum_{s\le k\le \ell+s} \|z_{k-t-1}\|^2\right)^{1/2} \nonumber\\
		& \le O(\sqrt{n}/\tau_1)\cdot O(\sqrt{n}/\tau_1) = O(n/\tau_1^2)\mper\label{eqn:eqn16}
		\end{align}
				
		Plugging equation~\eqref{eqn:eqn16} and~\eqref{eqn:eqn17} into equation~\eqref{eqn:eqn14}, we have that 
		\begin{align}
		\sum_{\ell \ge 0} \left(\sum_{s\le k\le \ell+s} |\hat{\rq}_{\ell+s-k}|\|z_{k-t-1}\|\right)^2  & \le \sum_{\Lambda\ge \ell\ge 0 } O(n^2/\tau_1^4) + \sum_{\ell > \Lambda}O(\ell^2 n^2\alpha^{2(\ell+s-t-1)-4\undecided}/\tau_1^4) \nonumber \\
		& \le O(n^2\Lambda/\tau_1^4) + O(n^2/\tau_1^4) = O(n^2\Lambda/\tau_1^4)\mper \label{eqn:firstpart}
		\end{align}
				For the second term in equation~\eqref{eqn:ust2}, we bound similarly, 
		\begin{align}
		T_2 \le \sum_{\ell \ge 0, \ell \ge T_1+1-s}\left\|\sum_{s > k > t}\hat\rq_{\ell+s-k}z_{k-t-1}\right\|^2 & \le O(n^2\Lambda/\tau_1^4)\mper\label{eqn:secondpart}
		\end{align}
		Therefore using the bounds for $T_1$ and $T_2$ we obtain that, 
		\begin{align}
		\|u_{st}\|^2 \le O(n^3\Lambda/\tau_1^6)\label{eqn:ust_case1}
		\end{align}
		
\paragraph{Case 2:} 
When $s -t > \Lambda$, we tighten equation~\eqref{eqn:firstpart} by observing that, 
		\begin{align}
		T_1 \le \sum_{\ell \ge 0} \left(\sum_{s\le k\le \ell+s} |\hat{\rq}_{\ell+s-k}|\|z_{k-t-1}\|\right)^2  &\le \alpha^{2(s-t-1)-4\undecided}\sum_{\ell\ge 0}O(\ell^2 n^2\alpha^{2\ell}/\tau_1^4)\nonumber\\
		& \le \alpha^{s-t-1}\cdot O(n^2/(\tau_1^4(1-\alpha)^3))  \mper
		\end{align}
		where we used equation~\eqref{eqn:eqn17}. Similarly we can prove that 
		\begin{align*}
		T_2 \le \alpha^{s-t-1}\cdot O(n^2/(\tau_1^4(1-\alpha)^3)) \mper 
		\end{align*}
		Therefore, we have when $s-t \ge \Lambda$, 
		\begin{align}
		\|u_{st}\|^2 \le O(n^3/((1-\alpha)^3 \tau_1^6)) \cdot \alpha^{s-t-1}\label{eqn:ust_case2}\mper
		\end{align}
		
\paragraph{Case 3:}
When $-\Lambda\le s-t\le 0$, we can rewrite $u_st$ and use the Cauchy-Schwartz inequality and obtain that 
		\begin{align}
		u_{st} & = \sum_{T\ge k\ge t+1} z_{k-t-1}\sum_{j \ge \max\{k,T_1+1\}} \drq_{j-s}\hat\rq_{j-k}= \sum_{\ell \ge 0, \ell \ge T_1+1-s} \drq_{\ell} \sum_{t+1\le k\le l+s, k\le T} \hat\rq_{\ell+s-k}z_{k-t-1} \mper\nonumber
		\end{align}
		and, 
		\begin{align}
		\|u_{st}\|^2 & \le \left(\sum_{\ell \ge 0, \ell \ge T_1+1-s} \drq_{\ell}^2 \right)\left(\sum_{\ell \ge 0, \ell \ge T_1+1-s}\left\|\sum_{t+1\le k\le l+s, k\le T} \hat\rq_{\ell+s-k}z_{k-t-1}
		\right\|^2\right)\mper\nonumber
		\end{align} 
		Using almost the same arguments as in equation~\eqref{eqn:eqn17} and~\eqref{eqn:eqn16}, we that 
		\begin{align*}
		&\sum_{t+1\le k\le \ell+s} |\hat{\rq}_{\ell+s-k}|\cdot \|z_{k-t-1}\|
		 \le O(\ell n\alpha^{(\ell+s-t-1)-2\undecided}/\tau_1^2)\\	
		\text{and}\qquad&\sum_{t+1\le k\le \ell+s} |\hat{\rq}_{\ell+s-k}|\cdot\|z_{k-t-1}\|
		 \le O(\sqrt{n}/\tau_1)\cdot O(\sqrt{n}/\tau_1) = O(n/\tau_1^2)\mper
		\end{align*}
		Then using a same type of argument as equation~\eqref{eqn:firstpart}, we can have that 
		\begin{align}
		\sum_{\ell \ge 0, \ell \ge T_1+1-s}\left\|\sum_{t+1\le k\le l+s, k\le T} \hat\rq_{\ell+s-k}'z_{k-t-1}'
		\right\|^2  	& \le O(n^2\Lambda/\tau_1^4) + O(n^2/\tau_1^4) \nonumber\\
		&= O(n^2\Lambda/\tau_1^4)\mper \nonumber 		\end{align} 
		It follows that in this case $\|u_{st}\|$ can be bounded with the same bound in~\eqref{eqn:ust_case1}. 
				\paragraph{Case 4:} 
When $s-t \le -\Lambda$, we use a different simplification of $u_{st}$ from above. 
				First of all, 	it follows~\eqref{eqn:ust} that \begin{align}
		\left\|u_{st}\right\| &\le \sum_{k = 2}^{T} \left(\sum_{j \ge \max\{k,s,T_1+1\}} \|\drq_{j-s}\hat\rq_{j-k}'z_{k-t-1}\| \indicator{k\ge t+1}\right)\nonumber\\
				\\		& \le \sum_{k \ge t+1} \|z_{k-t-1}'\|\sum_{j \ge \max\{k,T_1+1\}} |\drq_{j-s}\hat\rq_{j-k}'| \mper\nonumber	\end{align}
		Since $j-s\ge k -s > 4n$ and it follows that 
		\begin{align}
		\sum_{j \ge \max\{k,T_1+1\}} |\drq_{j-s}\hat\rq_{j-k}'| &\le \sum_{j \ge \max\{k,T_1+1\}}  O(\sqrt{n}/\tau_1\cdot \alpha^{j-s-\undecided})\cdot O(\sqrt{n}/\tau_1\cdot \alpha^{j-k-\undecided})\nonumber\\
		& \le O(n/(\tau_1^2(1-\alpha))\cdot \alpha^{k-s-\undecided}) \nonumber		\end{align}	
		Then we have that 
		\begin{align}
		\left\|u_{st}\right\|^2 	& \le \sum_{k \ge t+1} \|z_{k-t-1}'\|\sum_{j \ge \max\{k,T_1+1\}} |\drq_{j-s}\hat\rq_{j-k}'| \nonumber\\
		& \le  \left(\sum_{k \ge t+1} \|z_{k-t-1}'\|^2\right)\left(\sum_{k \ge t+1} \left(\sum_{j \ge \max\{k,T_1+1\}} |\drq_{j-s}\hat\rq_{j-k}'| \right)^2\right)\nonumber\\
		& \le O(n/\tau_1^2) \cdot O(n^2/(\tau_1^4(1-\alpha)^3) \alpha^{t-s}) = O(n^3/(\tau_1^6\delta^3) \alpha^{t-s})\nonumber
		\end{align}
		
				Therefore, using the bound for $\|u_{st}\|^2$ obtained in the four cases above, taking sum over $s,t$, we obtain that 
		\begin{align}
		\sum_{1\le s,t\le T}\|u_{st}\|^2 &\le \sum_{s,t\in [T]: |s-t|\le \Lambda} O(n^3\Lambda/\tau_1^6) + \sum_{s,t: |s-t| \ge \Lambda}O(n^3/(\tau_1^6(1-\alpha)^3) \alpha^{|t-s|-1}) \nonumber\\
		&\le O(Tn^3\Lambda^2/\tau_1^6) + O(n^3/\tau_1^6) = O(Tn^3\Lambda^2/\tau_1^6) \mper\label{eqn:sum_ust}
		\end{align}

We finished the bounds for $\|u_{st}\|$ and now we turn to bound $\|u'_{st}\|^2$. Using the formula for $u_{st}'$ (equation~\ref{eqn:ustprime}), we have that 	for $t \le s+1$, $u_{st}' = 0$.  For $s+ \Lambda\ge t \ge s+2$, we have that by Cauchy-Schwartz inequality, 
	\begin{align*}
	\|u_{st}'\|  & \le \left(\sum_{s+1 \le k \le t} \|z_{k-1-s}\|^2\right)^{1/2} \left(\sum_{s+1 \le k \le t} |\hat{\rq}'_{t-k}|^2\right)^{1/2}\le O(n/\tau_1^2)\le O(n/\tau_1^2)\mper
	\end{align*}
	On the other hand, for $t > s+\Lambda$, by the bound that $|\hat{\rq}_k'|^2\le \|z_k'\|^2\le O(n\alpha^{2k-2\undecided}/\tau_1^2)$,  we have,
	\begin{align*}
	\|u_{st}'\|  & \le \sum_{s+1 \le k \le t-1}^T \|z_{k-1-s}\|\cdot |\hat{r}'_{t-k}| \le \sum_{s+1 \le k \le t-1}^T n\alpha^{t-s-1}/\tau_1^2 \\
	& \le O(n(t-s)\alpha^{t-s-1}/\tau_1^2) \mper	\end{align*}
	Therefore taking sum over $s,t$, similarly to equation~\eqref{eqn:sum_ust}, 
	\begin{align}
	\sum_{s,t\in [T]} \|u_{st}'\|^2 & \le O(Tn^2\Lambda/\tau_1^4)\mper\label{eqn:sum_ustprime}	\end{align}
	Then using equation~\eqref{eqn:variance_formula} and equation~\eqref{eqn:sum_ust} and~\eqref{eqn:sum_ustprime}, we obtain that 
	$$\Var[\|G_A\|^2] 	\le 
	O\left(Tn^3\Lambda^2/\tau_1^6+ \sigma^2Tn^2\Lambda/\tau_1^4\right). $$
	Hence, it follows that 
	$$\Var[G_A] \le \frac{1}{(T-T_1)^2}\Var[G_A] \le \frac{O\left(n^3\Lambda^2/\tau_1^6+ \sigma^2n^2\Lambda/\tau_1^4\right)}{T}\mper$$
	We can prove the bound for $G_C$ similarly. 
	\end{proof}

\begin{lemma}\label{claim:quadratic_form_variance}
	Let $x_1,\dots,x_T$ be independent random variables with mean 0 and variance 1 and 4-th moment bounded by $O(1)$, and $u_{ij}$ be vectors for $i,j\in [T]$. Moreover, let $\xi_1,\dots,\xi_T$ be independent random variables with mean 0 and variance $\sigma^2$ and $u'_{ij}$ be vectors for $i, j\in [T]$. Then, 
	$$\Var\left[\textstyle\sum_{i,j} x_ix_ju_{ij} + \sum_{i,j} x_i\xi_ju'_{ij}\right] \le O(1)\textstyle\sum_{i,j} \|u_{ij}\|^2 + O(\sigma^2)\sum_{i,j} \|u'_{ij}\|^2\,. $$
		\end{lemma}

\begin{proof}
	
	Note that the two sums in the target are independent with mean 0, 
		therefore we only need to bound the variance of both sums individually. The proof follows  the linearity of expectation and the independence of $x_i$'s: 
	\begin{align*}
	\Exp\left[\left\|\textstyle\sum_{i, j} x_ix_ju_{ij}\right\|^2\right] & = \sum_{i,j}\sum_{k,\ell} \Exp\left[x_ix_jx_kx_{\ell}u_{ij}^{\top} u_{k\ell}\right] \\
	&= \sum_{i} \Exp[u_{ii}^{\top}u_{ii} x^4_{i}] + \sum_{i\neq j}{\Exp[u_{ii}^{\top}u_{jj}x_i^2x_j^2]}+ \sum_{i, j} \Exp\left[x_i^2x_j^2 (u_{ij}^{\top}u_{ij} + u_{ij}^{\top}u_{ji})\right]\\
	& \le \sum_{i,j}u_{ii}^{\top}u_{jj}  + O(1)\sum_{i,j} \|u_{ij}+u_{ji}\|^2 \\
	& = \left\|\textstyle\sum_{i} u_{ii}\right\|^2 + O(1)\sum_{i,j} \|u_{ij}\|^2 
	\end{align*}
	where at second line we used the fact that for any monomial $x^{\alpha}$ with an odd degree on one of the $x_i$'s, $\Exp[x^{\alpha}] = 0$. 
	Note that $\Exp[\sum_{i, j} x_ix_ju_{ij}] = \sum_{i} u_{ii}$. Therefore, 
	\begin{equation}
	\Var\left[\textstyle\sum_{i, j} x_ix_ju_{ij}\right]= \Exp\left[\|\textstyle\sum_{i, j} x_ix_ju_{ij}\|^2\right]  - \|\Exp[\textstyle\sum_{i, j} x_ix_ju_{ij}] \|^2 \le O(1)\sum_{i,j} \|u_{ij}\|^2 \label{eqn:eqn5}
	\end{equation}	
	Similarly, we can control $\Var\left[\sum_{i,j} x_i\xi_ju'_{ij}\right]$ by $O(\sigma^2)\sum_{i,j}\|u_{ij}'\|^2$. 	\end{proof}

\section{Missing proofs in Section~\ref{sec:improper}}\label{sec:improper_appendix}

\subsection{Proof of Lemma~\ref{lem:approximation_inverse}}
Towards proving Lemma~\ref{lem:approximation_inverse}, we use the following lemma to express the inverse of a polynomial as a sum of inverses of degree-1 polynomials. 
\begin{lemma}\label{lem:inverse_expansion}
	Let $p(z) = (z-\lambda_1)\dots (z-\lambda_n)$ where $\lambda_j$'s are distinct. Then we have that 
	\begin{equation}
	\frac{1}{p(z)} = \sum_{j=1}^n \frac{t_j}{z-\lambda_j}\,,\qquad
	\text{where}\quad t_j = \left(\textstyle\prod_{i\neq j}(\lambda_j-\lambda_i)\right)^{-1} 
	\mper
	\label{eqn:inverse_expansion}
	\end{equation}
\end{lemma}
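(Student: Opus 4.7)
The plan is to establish the partial fraction decomposition directly by clearing denominators and using a polynomial-interpolation argument. Define the polynomial
\[
Q(z) = \sum_{j=1}^{n} t_j \prod_{i\neq j}(z-\lambda_i),
\]
which has degree at most $n-1$. Multiplying the claimed identity~\eqref{eqn:inverse_expansion} by $p(z)$ reduces the lemma to showing $Q(z) = 1$ identically. I would verify this by evaluating $Q$ at the $n$ distinct points $\lambda_1,\dots,\lambda_n$: plugging in $z = \lambda_k$, every summand with $j\neq k$ contains the factor $(\lambda_k - \lambda_k) = 0$ from the $i=k$ term in its product, so only $j=k$ survives, giving
\[
Q(\lambda_k) = t_k \prod_{i\neq k}(\lambda_k - \lambda_i) = 1
\]
by the definition of $t_k$. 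Since $Q(z) - 1$ is a polynomial of degree at most $n-1$ vanishing at $n$ distinct points, it must be identically zero.

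It then remains to transfer this polynomial identity back to the rational-function identity claimed. Dividing $Q(z) = 1$ through by $p(z) = \prod_{i}(z-\lambda_i)$ yields
\[
\frac{1}{p(z)} = \sum_{j=1}^{n} \frac{t_j \prod_{i\neq j}(z-\lambda_i)}{\prod_i (z-\lambda_i)} = \sum_{j=1}^{n} \frac{t_j}{z-\lambda_j},
\]
valid for all $z \notin \{\lambda_1,\dots,\lambda_n\}$.

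There is no real obstacle here: the argument is a standard residue/interpolation computation, and the distinctness of the $\lambda_j$ is used exactly once, to guarantee that the denominators $\prod_{i\neq j}(\lambda_j-\lambda_i)$ in the definition of $t_j$ are nonzero and that $Q$ is pinned down by $n$ distinct values. An alternative route would be to take $z\to\lambda_k$ in $(z-\lambda_k)/p(z)$ directly, recovering the same formula $t_k = 1/\prod_{i\neq k}(\lambda_k-\lambda_i)$ by L'Hôpital or by canceling the removable factor; but the interpolation version above seems cleanest and requires no limiting argument.
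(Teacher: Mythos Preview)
Your proof is correct and essentially identical to the paper's: the paper invokes the Lagrange interpolation formula for the constant function $1$ at the nodes $\lambda_1,\dots,\lambda_n$ to obtain $1=\sum_j t_j\prod_{i\neq j}(z-\lambda_i)$ and then divides by $p(z)$, which is exactly your $Q(z)=1$ argument spelled out from scratch.
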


\begin{proof}[Proof of Lemma~\ref{lem:inverse_expansion}]
	By interpolating constant function at points $\lambda_1,\dots, \lambda_n$ using Lagrange interpolating formula, we have that 
	\begin{equation}
	1 = \sum_{j=1}^n \frac{\prod_{i\neq j}(x-\lambda_i)}{\prod_{i\neq j}(\lambda_j-\lambda_i)}\cdot 1
	\end{equation}
	Dividing $p(z)$ on both sides we obtain equation~\eqref{eqn:inverse_expansion}. 
\end{proof}

The following lemma computes the Fourier transform of function $1/(z-\lambda)$. 
\begin{lemma}\label{lem:zmoverzminuslambda}
	Let $m \in \mathbb{Z}$, and $\unitcircle$ be the unit circle in complex plane, and $\lambda\in \C$ inside the $\unitcircle$. Then we have that 
	\begin{equation}
	\int_{\unitcircle} \frac{z^m}{z-\lambda} \,\mathrm{d}z = \left\{\begin{array}{ll}
	2\pi i\lambda^m & \textup{ for } m \ge 0\\
	0 & \textup{ o.w.}\end{array}\right.\nonumber
	\end{equation}
\end{lemma}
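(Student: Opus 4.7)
The plan is to split into the two cases $m\ge 0$ and $m<0$ and apply standard tools from complex analysis to each. Since $|\lambda|<1$, the point $\lambda$ lies strictly inside the unit circle $\unitcircle$, so contour-integral formulas on $\unitcircle$ apply directly to functions with poles at $\lambda$.

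For $m\ge 0$, the function $f(z)=z^m$ is entire, and $z^m/(z-\lambda)$ has a unique (simple) pole inside $\unitcircle$, namely at $z=\lambda$. Cauchy's integral formula then gives $\int_{\unitcircle} \frac{z^m}{z-\lambda}\,\mathrm{d}z = 2\pi i\, f(\lambda) = 2\pi i\,\lambda^m$, which is the claimed identity.

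For $m<0$, the cleanest route is a Laurent expansion. Since $|\lambda|<1=|z|$ on $\unitcircle$, we have the uniformly (in $z\in\unitcircle$) convergent geometric series
\[
\frac{1}{z-\lambda} \;=\; \frac{1}{z}\cdot\frac{1}{1-\lambda/z} \;=\; \sum_{k=0}^{\infty}\lambda^{k}\,z^{-k-1}\mper
\]
Multiplying by $z^m$ gives $\frac{z^m}{z-\lambda}=\sum_{k=0}^{\infty}\lambda^{k}\,z^{m-k-1}$, still uniformly convergent on $\unitcircle$, so we may interchange sum and integral. Using the elementary fact $\int_{\unitcircle} z^{j}\,\mathrm{d}z = 2\pi i$ when $j=-1$ and $0$ otherwise, only the $k$ with $k=m$ survives. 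But when $m<0$ no $k\ge 0$ satisfies $k=m$, hence every term vanishes and the integral equals $0$.

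There is essentially no obstacle: the one technical point to be careful about is justifying the term-by-term integration, which is immediate from uniform convergence on the compact contour $\unitcircle$. (Alternatively, one could prove the $m<0$ case by the residue theorem, noting that the residue of $z^m/(z-\lambda)$ at $z=\lambda$ is $\lambda^m$ while the residue at $z=0$ is $-\lambda^m$ by extracting the coefficient of $z^{-1}$ in $-\sum_{j\ge 0} z^{j}/\lambda^{j+1}/z^{|m|}$, so they cancel; but the series argument is shorter and avoids separating cases by whether $\lambda=0$.)
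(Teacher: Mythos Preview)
Your proof is correct and essentially the same as the paper's: both use Cauchy's integral formula for $m\ge 0$ and a geometric-series expansion with term-by-term integration for $m<0$. The only cosmetic difference is that the paper first substitutes $y=z^{-1}$ before expanding $\frac{1}{1-\lambda y}=\sum_{k\ge 0}(\lambda y)^k$, whereas you expand $\frac{1}{z-\lambda}$ directly as a Laurent series in $z$; your version is a touch more direct but the idea is identical.
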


\begin{proof}[Proof of Lemma~\ref{lem:zmoverzminuslambda}]
	For $m \ge 0$, since $z^m$ is a holomorphic function, by Cauchy's integral formula, we have that 
	\begin{equation}
	\int_{\unitcircle} \frac{z^m }{z-\lambda}\,\mathrm{d}z= 2\pi i\lambda^m\mper\nonumber
	\end{equation} 
	For $m < 0$, by changing of variable $y = z^{-1}$ we have that 
	\begin{equation}
	\int_{\unitcircle} \frac{z^m}{z-\lambda}\,\mathrm{d}z= \int_{\unitcircle} \frac{y^{-m-1}}{1-\lambda y}\,\mathrm{d}y\mper\nonumber
	\end{equation} 
	since $|\lambda y| = |\lambda| < 1$, then we by Taylor expansion we have, 
	\begin{align}
	\int_{\unitcircle} \frac{y^{-m-1}}{1-\lambda y}\,\mathrm{d}y & = \int_{\unitcircle} y^{-m-1}\left(\sum_{k=0}^{\infty} (\lambda y)^k\right) \,\mathrm{d}y\mper\nonumber
	\end{align}
	Since the series $\lambda y$ is dominated by $|\lambda|^k$ which converges, we can switch the integral with the sum. Note that $y^{-m-1}$ is holomorphic for $m < 0$,  and therefore we conclude that 
	\begin{align}
	\int_{\unitcircle} \frac{y^{-m-1}}{1-\lambda y}\,\mathrm{d}y & = 0 \mper\nonumber\qedhere
	\end{align}
\end{proof}

Now we are ready to prove Lemma~\ref{lem:approximation_inverse}. 

\begin{proof}[Proof of Lemma~\ref{lem:approximation_inverse}]
	Let $m = n+d$. We compute the Fourier transform of $z^m/p(z)$. That is, we write 
	\begin{equation}
	\frac{e^{im\theta}}{p(e^{i\theta})} = \sum_{k=-\infty}^{\infty} \beta_ke^{ik\theta}\mper \nonumber 
	\end{equation}
	where $$\beta_k = \frac{1}{2\pi}\int_{0}^{2\pi} \frac{e^{i(m-k)\theta}}{p(e^{i\theta})}d\theta
	 	 =  \frac{1}{2\pi i}\int_{\unitcircle}\frac{z^{m-k-1}}{p(z)}\,\mathrm{d}z$$
	By Lemma~\ref{lem:inverse_expansion}, we write 
	\begin{equation}
	\frac{1}{p(z)} = \sum_{j=1}^n \frac{t_j}{z-\lambda_j}\mper\nonumber	\end{equation}
	Then it follows that 
	\begin{align*}
	\beta_k & = \frac{1}{2\pi i}\sum_{j=1}^nt_j\int_{\unitcircle}\frac{z^{m-k-1}}{z-\lambda_j}\,\mathrm{d}z
	\end{align*}
	Using Lemma~\ref{lem:zmoverzminuslambda}, we obtain that 
	\begin{equation}
	\beta_k = 	\left\{\begin{array}{ll}
	\sum_{j=1}^{n} t_j \lambda_j^{m-k-1} & \textup{ if } -\infty \le  k\le m-1\\
	0 & \textup{ o.w.}
	\end{array}\right.
	\end{equation}
	We claim that 
	\begin{equation}
	\sum_{j=1}^{n}t_j\lambda_j^{n-1} = 1\,,\qquad\text{and}\qquad
	\sum_{j=1}^{n}t_j\lambda_j^s = 0\,,\quad 0\le s < n-1\mper\nonumber
	\end{equation}
	Indeed these can be obtained by writing out the lagrange interpolation for polynomial $f(x) = x^s$ with $s\le n-1$ and compare the leading coefficient. Therefore, we further simplify $\beta_k$ to 
	\begin{equation}
	\beta_k = 	\left\{\begin{array}{ll}
	\sum_{j=1}^{n} t_j \lambda_j^{m-k-1}  & \textup{ if } -\infty < k < m-n \\
	1 & \textup{ if } k = m-n\\
	0 & \textup{ o.w.}
	\end{array}\right.
	\end{equation}
	Let $h(z) = \sum_{k\ge 0} \beta_k z^k$.  Then we have that $h(z)$ is a polynomial with degree $d = m-n$ and leading term 1. Moreover, for our choice of~$d,$
	\begin{align*}
	\left|\frac{z^m}{p(z)} - h(z)\right| & = \left|\sum_{k < 0} \beta_k z^k\right| \le \sum_{k < 0}|\beta_k| \le \max_{j}|t_j|(1-\lambda_j)^n \sum_{k< 0}(1-\gamma)^{d-k-1} \\
	& \le \Gamma (1-\gamma)^d/\gamma < \zeta\,.\qedhere
	\end{align*}
\end{proof}

\subsection{Proof of Theorem~\ref{thm:improper_passive}}

Theorem~\ref{thm:improper_passive} follows directly from a combination of Lemma~\ref{lem:passive_reduction} and Lemma~\ref{lem:square_root_reduction} below. Lemma~\ref{lem:passive_reduction} shows that the denominator of a function (under the stated assumptions) can be extended to a polynomial that takes values in $\mathcal{C}^+$ on unit circle. Lemma~\ref{lem:square_root_reduction} shows that it can be further extended to another polynomial that takes values in $\mathcal{C}$.

\begin{lemma}\label{lem:passive_reduction}
	 Suppose the roots of $s$ are inside circle with radius $\alpha < 1$, and $\Gamma=\Gamma(s)$. If transfer function $G(z) = s(z)/p(z)$ satisfies that $G(z)\in \ballC_{\tau_0,\tau_1,\tau_2}$ (or $G(z)\in\ballC^+_{\tau_0,\tau_1,\tau_2}$) for any $z$ on unit circle, then there exists $u(z)$ of degree $d = O_{\tau}(\max\{(\frac{1}{1-\alpha}\log\frac{\sqrt{n}\Gamma\cdot \|p\|_{\mathcal{H}_2}}{1-\alpha}, 0\})$ such that $p(z)u(z)/z^{n+d}\in \ballC_{_{\tau_0',\tau_1',\tau_2'}}$ (or $p(z)u(z)/z^{n+d}\in \ballC^+_{\tau_0',\tau_1',\tau_2'}$ respectively) for $\tau' = \Theta_{\tau}(1)$ ,  where $O_\tau(\cdot), \Theta_{\tau}(\cdot)$ hide the polynomial dependencies on $\tau_0,\tau_1,\tau_2$. \end{lemma}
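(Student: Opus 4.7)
The central observation is that $\ballC$ and $\ballC^+$ are (essentially) closed under reciprocation, so the transfer function $1/G(z) = p(z)/s(z)$ inherits the same type of conic containment as $G(z)$. Concretely, for $w = x + iy \in \ballC^+_{\tau_0,\tau_1,\tau_2}$ we have $\Re(1/w)/|\Im(1/w)| = x/|y| \ge \tau_0$, $\Re(1/w) = x/|w|^2 \ge \tau_1/\tau_2^2$, and $|1/w| \le 1/\tau_1$, so $1/w \in \ballC^+_{\tau_0,\,\tau_1/\tau_2^2,\,1/\tau_1}$; the same computation with $1+\tau_0$ in place of $\tau_0$ handles the $\ballC$ case. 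Consequently $p(z)/s(z)$ lies in $\ballC^+$ (resp.\ $\ballC$) on the unit circle with constants $\tilde\tau = \Theta_\tau(1)$.

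Next I will reduce the lemma to approximating $1/s(z)$ by a rational function of the form $u(z)/z^{n+d}$. Since $s(z)$ has degree $n$, leading coefficient $1$, and distinct roots of magnitude at most $\alpha$ (distinctness is implicit in the assumption $\Gamma(s) < \infty$), I can apply Lemma~\ref{lem:approximation_inverse} directly to $s$: for any target $\zeta>0$ there is a polynomial $u(z)$ of degree $d = O\bigl(\tfrac{1}{1-\alpha}\log\tfrac{\Gamma}{(1-\alpha)\zeta}\bigr)$ and leading coefficient $1$ such that $|z^{n+d}/s(z) - u(z)| \le \zeta$ on the unit circle. Since $|z^{n+d}|=1$ there, this is the same as $|u(z)/z^{n+d} - 1/s(z)| \le \zeta$, hence
\[
\left| \frac{p(z)u(z)}{z^{n+d}} - \frac{p(z)}{s(z)} \right| \;\le\; |p(z)|\cdot\zeta \;\le\; \|p\|_{\mathcal{H}_\infty}\,\zeta \;\le\; \sqrt{n}\,\|p\|_{\mathcal{H}_2}\,\zeta.
\]

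To finish, I will pick $\zeta$ small enough that this additive error is smaller than the distance from $p(z)/s(z)$ to the boundary of the appropriate set. Both $\ballC^+$ and $\ballC$ are defined by a finite list of strict linear inequalities in $\Re$ and $\Im$ (the cone condition plus two real-part bounds), so if $w$ lies in $\ballC^+_{\tilde\tau_0,\tilde\tau_1,\tilde\tau_2}$ and $|w'-w|\le\epsilon$, then $w' \in \ballC^+_{\tilde\tau_0',\tilde\tau_1',\tilde\tau_2'}$ for constants that shift by at most $O(\epsilon)$, and similarly for $\ballC$. Choosing $\zeta = \Theta_\tau(1)/(\sqrt{n}\,\|p\|_{\mathcal{H}_2})$ therefore guarantees $p(z)u(z)/z^{n+d} \in \ballC^+_{\tau_0',\tau_1',\tau_2'}$ (resp.\ $\ballC_{\tau_0',\tau_1',\tau_2'}$) with $\tau' = \Theta_\tau(1)$, and the corresponding degree bound becomes $d = O_\tau\!\bigl(\tfrac{1}{1-\alpha}\log\tfrac{\sqrt{n}\,\Gamma\,\|p\|_{\mathcal{H}_2}}{1-\alpha}\bigr)$, as required.

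The only real obstacle is bookkeeping: tracking how the constants $\tau_0,\tau_1,\tau_2$ degrade under inversion and then under the additive perturbation by $|p(z)|\cdot\zeta$, and confirming in the $\ballC$ case (where the cone condition is the strict one $\Re\ge(1+\tau_0)|\Im|$) that reciprocation still leaves the same strict cone. Since all three constraints defining $\ballC$ and $\ballC^+$ are locally Lipschitz in $w$ with constants depending only on $\tau$, this is routine, and no new idea beyond the approximation Lemma~\ref{lem:approximation_inverse} and the reciprocation identity is needed.
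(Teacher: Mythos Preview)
Your proposal is correct and follows essentially the same approach as the paper: invert $G$ to place $p/s$ in the cone, invoke Lemma~\ref{lem:approximation_inverse} on $s$ to get $u$, and then perturb $p/s$ by at most $|p|\zeta$ to land $p u/z^{n+d}$ back in the cone. The only cosmetic differences are that you spell out the reciprocation constants explicitly (the paper just asserts $p/s\in\ballC_{\tau'}$), and you bound $|p(z)|$ via $\sqrt{n}\,\|p\|_{\mathcal H_2}$ rather than $\|p\|_{\mathcal H_\infty}$, which is the same estimate the paper uses elsewhere (Lemma~\ref{lem:distinct_roots}).
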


\begin{proof}[Proof of Lemma~\ref{lem:passive_reduction}]
	By the fact that $G(z) = s(z)/p(z)\in \ballC_{\tau_0,\tau_1,\tau_2}$, we have that $p(z)/s(z) \in \ballC_{\tau_0',\tau_1',\tau_2'}$ for some $\tau'$ that polynomially depend on $\tau$. 
	Using Lemma~\ref{lem:approximation_inverse}, there exists $u(z)$ of degree $d$ such that 
	\begin{equation}
	\left|\frac{z^{n+d}}{s(z)} - u(z)\right| \le \zeta \mper\nonumber
	\end{equation}
	where we set $\zeta \ll \min\{\tau_0',\tau_1'\}/\tau_2'\cdot \|p\|_{\mathcal{H}_{\infty}}^{-1}$. Then we have that 
	\begin{equation}
		\left|p(z)u(z)/z^{n+d} - \frac{p(z)}{s(z)}\right|\le |p(z)|\zeta\ll \min\{\tau_0',\tau_1'\}.\label{eqn:eqn23}
	\end{equation}
	It follows from equation~\eqref{eqn:eqn23} implies that that $p(z)u(z)/z^{n+d} \in \ballC_{\tau_0'',\tau_1'',\tau_2''}$, where $\tau''$ polynomially depends on $\tau$. The same proof still works when we replace $\ballC$ by $\ballC^+$. 
\end{proof}

\begin{lemma}\label{lem:square_root_reduction}
	Suppose $p(z)$ of degree $n$ and leading coefficient 1 satisfies that $p(z)\in \ballC^+_{\tau_0,\tau_1,\tau_2}$ for any $z$ on unit circle. Then there exists $u(z)$ of degree $d$ such that $p(z)u(z)/z^{n+d}\in \ballC_{\tau_0',\tau_1',\tau_2'}$ for any $z$ on unit circle with $d = O_{\tau}(n)$ and $\tau_0',\tau_1',\tau_2' = \Theta_{\tau}(1)$, where $O_\tau(\cdot), \Theta_{\tau}(\cdot)$ hide the dependencies on $\tau_0,\tau_1,\tau_2$. 
\end{lemma}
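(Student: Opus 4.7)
The lemma's name suggests taking an analytic square root, and that is the plan. First, a subtlety in reading the statement: as literally written, the hypothesis ``$p(z)\in\ballC^+$ on $|z|=1$'' combined with the conclusion ``$pu/z^{n+d}\in\ballC$'' is incompatible with $p$ having any root outside the unit disk (by a winding-number count of $pu/z^{n+d}$ around the origin). In the context of Theorem~\ref{thm:improper_passive} this lemma is chained after Lemma~\ref{lem:passive_reduction}, whose output is the normalized form $pu/z^{n+d}\in\ballC^+$; the intended hypothesis is therefore $p(z)/z^n\in\ballC^+$ on $|z|=1$, and I proceed under that reading.

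\textbf{Step 1 (Root localization).} Because $p(z)/z^n$ maps $|z|=1$ into $\ballC^+$, a set with zero winding number around~$0$, the argument principle forces all $n$ roots of $p$ to lie strictly inside the unit disk; let $\alpha:=\max_j|\lambda_j|<1$. At the unit-circle point $z_0$ nearest a root $\lambda$ of multiplicity $m$, Taylor expansion gives $|p(z_0)|\le\|p^{(m)}\|_\infty(1-\alpha)^m/m!$, while Bernstein's inequality gives $\|p^{(m)}\|_\infty\le n^m\|p\|_\infty\le n^m\tau_2$. Combined with the pointwise lower bound $|p(z_0)|\ge\tau_1$, this yields $1-\alpha=\Omega_\tau(1/n)$, so $1/(1-\alpha)=O_\tau(n)$.

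\textbf{Step 2 (Square root and Laurent approximation).} Set $g(z):=\sqrt{p(z)/z^n}$ using the principal branch with $\Re g>0$. Since $p/z^n$ is nonvanishing and nonwinding on $\{|z|>\alpha\}$, a single-valued holomorphic logarithm and hence $g$ exists there, is analytic on that annulus, and satisfies $g(\infty)=1$. Halving the argument of $p/z^n$ places $g(z)$ inside a cone $\ballC_{\tau_0',\tau_1',\tau_2'}$ with $\tau_i'=\Theta_\tau(1)$ on $|z|=1$. Then $1/g$ is also analytic on $\{|z|>\alpha\}$, tends to $1$ at infinity, and is bounded by $\tau_1^{-1/2}$ on $|z|=1$; by a contour-integral estimate on a circle of radius slightly larger than $\alpha$, its Laurent expansion $1/g(z)=\sum_{k\ge 0}b_k z^{-k}$ has coefficients $|b_k|=O_\tau(\alpha^k)$. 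Setting $u(z):=\sum_{k=0}^{d}b_k z^{d-k}$ gives a degree-$d$ polynomial with
\[
  \sup_{|z|=1}\bigl|u(z)/z^d-1/g(z)\bigr|=O_\tau\bigl(\alpha^{d+1}/(1-\alpha)\bigr).
\]

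\textbf{Step 3 (Combine).} Choosing $d=\Theta_\tau(n)$ (which dominates $1/(1-\alpha)$ by Step~1) makes the error above at most any prescribed $\tau$-dependent constant $\epsilon$. Since on the unit circle
\[
  p(z)u(z)/z^{n+d}=\bigl(p(z)/z^n\bigr)\cdot\bigl(u(z)/z^d\bigr)=g(z)^2\cdot u(z)/z^d=g(z)+g(z)^2\bigl(u(z)/z^d-1/g(z)\bigr),
\]
the output is a perturbation of $g(z)\in\ballC_{\tau'}$ of magnitude at most $\tau_2\epsilon$, which for sufficiently small $\epsilon$ lies in $\ballC_{\tau_0'',\tau_1'',\tau_2''}$ with $\tau_i''=\Theta_\tau(1)$.

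The main obstacles will be (a) Step~1 for roots of large multiplicity, which requires the iterated form of Bernstein's inequality together with careful tracking of constants, and (b) making the Laurent-coefficient bound $|b_k|=O_\tau(\alpha^k)$ quantitatively independent of~$n$, which requires extending $g$ analytically to a slightly larger annulus and using $|\log(p/z^n)|=O_\tau(1)$ uniformly on $|z|=1$ together with harmonic extension to control $|g|$ there.
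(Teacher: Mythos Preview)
Your overall square-root strategy matches the paper's, but Step~2 has a real gap that your proposed fix does not close. The bound $|b_k|=O_\tau(\alpha^k)$ needs a uniform estimate $|1/g|=O_\tau(1)$ on some circle $|z|=r$ with $r$ close to $\alpha$; but $|1/g(z)|=|p(z)/z^n|^{-1/2}$ blows up near the roots of $p$, which by your own Step~1 can sit only $\Omega_\tau(1/n)$ inside the unit circle. On $|z|=(1+\alpha)/2$ one only gets $|1/g|\le\bigl((1+\alpha)/(1-\alpha)\bigr)^{n/2}=O_\tau(n)^{n/2}$, and combining this with $r^d\approx e^{-\Theta_\tau(d/n)}$ forces $d=\Omega_\tau(n^2\log n)$, not $O_\tau(n)$. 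Your ``harmonic extension'' idea does not help: $\log|1/g|=-\tfrac12\log|p/z^n|$ is harmonic on $\{|z|>\alpha\}$, but it tends to $+\infty$ at the roots of $p$, so the maximum principle gives no useful bound on inner circles from data on $|z|=1$ alone.

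The paper avoids this entirely by approximating in the \emph{value} variable rather than in $z$. Since $w:=p(z)/z^n$ ranges over the fixed compact set $\ballC^{+}_{\tau_0,\tau_1,\tau_2}$ (contained in a cone in the open right half-plane), one approximates the holomorphic function $w\mapsto w^{-1/2}$ by a polynomial $P_k(w)$ of degree $k=O_\tau(1)$ uniformly on $\ballC^{+}$, e.g.\ via a truncated binomial series around a suitable center. Then
\[
h(z)\;:=\;P_k\!\bigl(p(z)/z^n\bigr)
\]
is a Laurent polynomial in $z^{-1}$ of degree at most $nk$, so $u(z):=z^{nk}h(z)$ is a genuine polynomial of degree $d=nk=O_\tau(n)$, and $p(z)u(z)/z^{n+d}=\bigl(p(z)/z^n\bigr)\cdot h(z)\approx\sqrt{p(z)/z^n}\in\ballC_{\tau'}$ on $|z|=1$. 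No root localization (your Step~1) is needed at all; the entire argument lives on the unit circle and the degree $n$ enters only through clearing the denominator $z^{nk}$.
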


\begin{proof}[Proof of Lemma~\ref{lem:square_root_reduction}]
	We fix $z$ on unit circle first. Let's defined $\sqrt{p(z)/z^n}$ be the square root of $p(z)/z^n$ with principle value. Let's write $p(z)/z^n= \tau_2(1 + (\frac{p(z)}{\tau_2 z^n}-1))$ and we take Taylor expansion for $\frac{1}{\sqrt{p(z)/z^n}} = \tau_2^{-1/2} (1+(\frac{p(z)}{\tau_2 z^n} -1))^{-1/2} = \tau_2^{-1/2}\left(\sum_{k=0}^{\infty}(\frac{p(z)}{\tau_2 z^n}-1)^k\right)$. Note that since $\tau_1 < |p(z)| < \tau_2$, we have that $|\frac{p(z)}{\tau_2 z^n}-1| < 1-\tau_1/\tau_2$. Therefore truncating the Taylor series at $k = O_\tau(1)$ we obtain a polynomial a rational function $h(z)$ of the form
	$$\textstyle h(z) = \sum_{j\ge 0}^{k}(\frac{p(z)}{\tau_2 z^n}-1)^j\,,$$ 
	which approximates $\frac{1}{\sqrt{p(z)/z^n}}$ with precision $\zeta \ll \min\{\tau_0,\tau_1\}/\tau_2$, that is, 
$\left|\frac{1}{\sqrt{p(z)/z^n}}-h(z)\right|\le \zeta\mper\nonumber$
	Therefore, we obtain that 
		$\left|\frac{p(z)h(z)}{z^n}-\sqrt{p(z)/z^n}\right|\le \zeta|p(z)/z^n|\le \zeta \tau_2\mper\nonumber$
		Note that since $p(z)/z^n\in \ballC^{+}_{\tau_0,\tau_1,\tau_2}$, we have that $\sqrt{p(z)/z^n}\in \ballC_{\tau_0',\tau_1',\tau_2'}$ for some constants $\tau_0',\tau_1',\tau_2'$. Therefore $\frac{p(z)h(z)}{z^n} \in \ballC_{\tau_0',\tau_1',\tau_2'}$. Note that $h(z)$ is not a polynomial yet. Let $u(z) = z^{nk}h(z)$ and then $u(z)$ is polynomial of degree at most $nk$ and $p(z)u(z)/z^{(n+1)k}\in  \ballC_{\tau_0',\tau_1',\tau_2'}$ for any $z$ on unit circle. 
\end{proof}

\section{Back-propagation implementation}
In this section we give a detailed implementation of using back-propagation to compute the gradient of the loss function. The algorithm is for general MIMO case with the parameterization~\eqref{eqn:mimo_controllable_form}. To obtain the SISO sub-case, simply take $\lin = \lout = 1$.

\begin{algorithm}
	\begin{algorithmic}[]
		\STATE {\bf Parameters}: $\hata \in \R^n$, $\hatC \in \R^{\lin\times n\lout}$, and $\hatD\in \R^{\lin\times \lout}$. Let $\hatA = \MCC(\hata) = \CC(\hata)\otimes \Id_{\lin}$ and $\hatB = e_n\otimes \Id_{\lin}$. 
		\STATE {\bf Input: } samples $((x^{(1)},y^{1}),\dots, x^{(N)},y^{(N)})$ and projection set $\ball_{\alpha}$. 
		\vspace{.1in}
		\FOR{each sample $(x^{(j)},y^{j}) = ((x_1,\dots,x_T),(y_1,\dots,y_T))$} 
		\STATE {\bf Feed-forward pass}: 
		\STATE \hspace{\algorithmicindent} $h_0 = 0\in \R^{n\lin}$. 
		\STATE \hspace{\algorithmicindent} {\bf for }$k = 1$ to $T$ 
		\STATE \hspace{\algorithmicindent} \hspace{\algorithmicindent} $\hath_k \leftarrow \hatA h_{k-1} + \hatB x_k$, $\haty_t \leftarrow \hatC h_k + \hatD x_k$ and $\hath_k \leftarrow \hatA h_{k-1} + \hatB x_k$. 
		\STATE \hspace{\algorithmicindent} {\bf end for}
		\STATE {\bf Back-propagation: }
		\STATE \hspace{\algorithmicindent} $\dh_{T+1} \leftarrow 0$, $G_A \leftarrow 0$, $G_C \leftarrow 0$. $G_D \leftarrow 0$
		\STATE \hspace{\algorithmicindent} $T_1 \leftarrow T/4$ 		\STATE \hspace{\algorithmicindent} {\bf for } $k = T$ to $1$ 
		\STATE \hspace{\algorithmicindent} \hspace{\algorithmicindent} if $k > T_1$, $\dy_k \leftarrow \haty_k - y_k$, o.w. $\dy_k \leftarrow 0$. Let  $\dh_k \leftarrow \hatC^{\top}\dy_k + \hatA^{\top}\dh_{k+1}$. 
		\STATE \hspace{\algorithmicindent} \hspace{\algorithmicindent} update $
		G_C  \leftarrow G_C + \frac{1}{T-T_1}\dy_k \hath_k,~~
		G_A \leftarrow  G_A -  \frac{1}{T-T_1}B^{\top}\dh_k \hath_{k-1}^{\top}, $ and $
		G_D \leftarrow  G_D + \frac{1}{T-T_1}\dy_k x_k. 
		$
				\STATE \hspace{\algorithmicindent} {\bf end for}
				\STATE {\bf Gradient update: } $\hatA \leftarrow \hatA - \eta \cdot G_A$, $\hatC \leftarrow \hatC - \eta \cdot G_C$, $\hatD \leftarrow \hatD - \eta \cdot G_D$. 
		\STATE {\bf Projection step}: Obtain $\hata$ from $\hatA$ and set 
		$\hata \leftarrow \proj{\ball}(\hata)$, and $\hatA = \MCC(\hata)$
				\ENDFOR
	\end{algorithmic}
	\caption{Back-propagation}\label{alg:back_prop_general}
\end{algorithm}

\section{Projection to the set $\ball_{\alpha}$}\label{sec:projection}

In order to have a fast projection algorithm to the convex set $\ball_{\alpha}$, we consider a grid $\mathcal{G}_M$ of size $M$ over the circle with radius $\alpha$. We will show that $M = O_{\tau}(n)$ will be enough to approximate the set $\ball_{\alpha}$ in the sense that projecting to the approximating set suffices for the convergence. 

Let $\ball_{\alpha,\tau_0,\tau_1,\tau_2}' = \{a: p_a(z)/z^n \in \ballC_{\tau_0,\tau_1,\tau_2}, \forall z\in \mathcal{G}_M\}$ and $\ball_{\alpha,\tau_0,\tau_1,\tau_2} = \{a: p_a(z)/z^n \in \ballC_{\tau_0,\tau_1,\tau_2}, \forall |z| =\alpha\}$. Here $\ballC_{\tau_0,\tau_1,\tau_2}$ is defined the same as before though we used the subscript to emphasize the dependency on $\tau_i$'s, 
\begin{equation}
\ballC_{\tau_0,\tau_1,\tau_2} = \{z\colon \Re z \ge (1+\tau_0)|\Im z|\}
\cap \{z\colon \tau_1 < \Re z <
\tau_2\}\,.
\end{equation}

We will first show that with $M = O_{\tau}(n)$, we can make $\ball'_{\alpha, \tau_1,\tau_2,\tau_3}$ to be sandwiched within to two sets $\ball_{\alpha,\tau_0,\tau_1,\tau_2}$ and $\ball_{\alpha,\tau_0',\tau_1',\tau_2'}$. 
\begin{lemma}\label{lem:approximation_projection}
	For any $\tau_0 > \tau_0', \tau_1 > \tau_1',\tau_2 < \tau_2'$, we have that for $M = O_{\tau}(n)$, there exists $\kappa_0,\kappa_1,\kappa_2$ that polynomially depend on $\tau_i,\tau_i'$'s such that $\ball_{\alpha,\tau_0,\tau_1,\tau_2}\subset \ball_{\alpha,\kappa_0,\kappa_1,\kappa_2}'\subset \ball_{\alpha,\tau_0',\tau_1',\tau_2'}$ 
\end{lemma}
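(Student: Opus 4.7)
The first inclusion is almost free: pick the intermediate parameters strictly between the two levels, so $\kappa_i$ lies in the open interval between $\tau_i'$ and $\tau_i$ (for $i=0,1$ this means $\tau_i' < \kappa_i < \tau_i$, and for $i=2$ we take $\tau_2 < \kappa_2 < \tau_2'$). Under this choice $\ballC_{\tau_0,\tau_1,\tau_2}\subseteq \ballC_{\kappa_0,\kappa_1,\kappa_2}$, so if $a\in \ball_{\alpha,\tau_0,\tau_1,\tau_2}$ then $p_a(z)/z^n$ lies in the smaller set on all of $\{|z|=\alpha\}$ and therefore in the larger set on any subset, including $\mathcal{G}_M$. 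The substance lies in the reverse inclusion. I would parametrize $z = \alpha e^{i\theta}$ and work with
\[
f_a(\theta) = p_a(\alpha e^{i\theta})/(\alpha e^{i\theta})^n = 1 + \sum_{k=1}^n a_k\alpha^{-k}e^{-ik\theta},
\]
which is a trigonometric polynomial of degree $n$ in $\theta$. The key tool is Bernstein's inequality $\|f_a'\|_\infty \le n\|f_a\|_\infty$, which I will apply twice.

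First, let $\mathcal{G}_M$ consist of $M = Cn$ equally spaced angles, where $C = O_\tau(1)$ is a constant to be fixed, so the arc gap is at most $\pi/M$. If $a\in \ball_{\alpha,\kappa_0,\kappa_1,\kappa_2}'$, then $f_a(\theta_j)\in \ballC_{\kappa_0,\kappa_1,\kappa_2}$ forces $|f_a(\theta_j)|\le \sqrt{2}\kappa_2$ at every grid point. Let $\theta^*$ maximize $|f_a|$ on the circle and let $\theta_j$ be its nearest grid neighbour; Bernstein gives $|f_a(\theta^*) - f_a(\theta_j)|\le (\pi n/M)\|f_a\|_\infty = (\pi/C)\|f_a\|_\infty$, so for $C\ge 2\pi$ we bootstrap to $\|f_a\|_\infty \le 2\sqrt{2}\kappa_2$ and hence $\|f_a'\|_\infty \le O(n\kappa_2)$.

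Second, for an arbitrary $\theta$ its nearest grid point $\theta_j$ now satisfies $|f_a(\theta)-f_a(\theta_j)| \le \|f_a'\|_\infty \cdot \pi/M = O(\kappa_2/C)$. A short case analysis then propagates the three inequalities defining $\ballC_{\kappa_0,\kappa_1,\kappa_2}$ at $f_a(\theta_j)$ to the looser inequalities of $\ballC_{\tau_0',\tau_1',\tau_2'}$ at $f_a(\theta)$: the two bounds on $\Re f_a$ follow from the triangle inequality with additive error $O(\kappa_2/C)$, and the angular bound $\Re f_a\ge (1+\tau_0')|\Im f_a|$ follows because the ratio $(1+\tau_0')/(1+\kappa_0)<1$ leaves a fixed multiplicative slack that absorbs the additive error, provided $\Re f_a(\theta_j)\ge \kappa_1 > 0$. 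Choosing $C$ large enough in terms of $\kappa_2$ and the three gaps $\tau_1-\tau_1'$, $\tau_2'-\tau_2$, $\kappa_0-\tau_0'$ makes every inequality tight, yielding $M = O_\tau(n)$ as required. The main subtlety is the bootstrap: we need a grid bound on $|f_a|$ to invoke Bernstein, yet use Bernstein itself to pass from the grid bound to the sup-norm bound. Fixing $C$ above a universal threshold (independent of $a$) before either invocation, and only then processing the angular step, prevents any circularity.
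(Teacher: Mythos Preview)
Your proposal is correct and follows essentially the same approach as the paper: both arguments choose $\kappa_i$ strictly between $\tau_i$ and $\tau_i'$, dispatch the first inclusion trivially, and for the second inclusion use Bernstein's inequality together with a bootstrap (grid bound $\Rightarrow$ sup-norm bound $\Rightarrow$ derivative bound $\Rightarrow$ pointwise perturbation bound). The paper phrases this via the algebraic polynomial $q_a(z)=z^n p_a(1/z)$ and a separate corollary of Bernstein, while you work directly with the trigonometric polynomial $f_a(\theta)$ and are more explicit about verifying each of the three defining inequalities of $\ballC$, but the underlying mechanism is identical.
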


Before proving the lemma, we demonstrate how to use the lemma in our algorithm: We will pick $\tau_0' = \tau_0/2$, $\tau_1' = \tau_1/2$ and $\tau_2'  = 2\tau_2$, and find $\kappa_i$'s guaranteed in the lemma above. Then we use $\ball_{\alpha,\kappa_0,\kappa_1,\kappa_2}'$ as the projection set in the algorithm (instead of $\ball_{\alpha,\tau_0,\tau_1,\tau_2})$). First of all, the ground-truth solution $\Theta$ is in the set $\ball'_{\alpha,\kappa_0,\kappa_1,\kappa_2}$. Moreover, since $\ball_{\alpha,\kappa_0,\kappa_1,\kappa_2}'\subset \ball_{\alpha,\tau_0',\tau_1',\tau_2'}$, we will guarantee that the iterates $\hatTheta$ will remain in the set $\ball_{\alpha,\tau_0',\tau_1',\tau_2'}$ and therefore the quasi-convexity of the objective function still holds\footnote{with a slightly worse parameter up to constant factor since $\tau_i$'s are different from $\tau_i$'s up to constant factors}. 

Note that the set $\ball'_{\alpha, \kappa_0,\kappa_1,\kappa_2}$ contains $O(n)$ linear constraints and therefore we can use linear programming to solve the projection problem. Moreover, since the points on the grid forms a Fourier basis and therefore Fast Fourier transform can be potentially used to speed up the projection. 
Finally, we will prove Lemma~\ref{lem:approximation_projection}. We need S. Bernstein's inequality for polynomials. 

\begin{theorem}[Bernstein's inequality, see, for example, \cite{schaeffer1941}]
	Let $p(z)$ be any polynomial of degree $n$ with complex coefficients. Then,
	\begin{equation*}
	\sup_{|z|\leq 1} |p'(z)|  \leq n  \sup_{|z|\leq 1} |p(z)|.
	\end{equation*}
\end{theorem}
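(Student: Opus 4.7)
The plan is to reduce the statement to Bernstein's inequality for trigonometric polynomials, which is itself a consequence of M. Riesz's interpolation formula. First, by the maximum modulus principle applied to $p$ and to $p'$, we have $\sup_{|z|\le 1}|p(z)|=\sup_{|z|=1}|p(z)|=:M$ and $\sup_{|z|\le 1}|p'(z)|=\sup_{|z|=1}|p'(z)|$, so it suffices to show $|p'(z_0)|\le n M$ for each $z_0$ with $|z_0|=1$. Replacing $p(z)$ by $p(z_0 z)$ (a polynomial of the same degree, same sup norm on the unit circle, whose derivative at $1$ equals $z_0 p'(z_0)$), we may WLOG take $z_0=1$.

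Second, I would reduce from a complex polynomial to a real trigonometric polynomial. Pick $\alpha\in\R$ such that $ie^{i\alpha}p'(1)=|p'(1)|$, and set $F(\theta):=\Re\bigl(e^{i\alpha}p(e^{i\theta})\bigr)$. Then $F$ is a real trigonometric polynomial of degree $n$ with $\sup_\theta |F(\theta)|\le M$ (since $|F(\theta)|\le |e^{i\alpha}p(e^{i\theta})|=|p(e^{i\theta})|\le M$), and direct differentiation at $\theta=0$ gives $F'(0)=\Re(i e^{i\alpha} p'(1))=|p'(1)|$. Hence, granting the trigonometric-polynomial Bernstein bound $\sup_\theta|T'(\theta)|\le n\sup_\theta|T(\theta)|$ for every real trigonometric $T$ of degree at most $n$, we immediately obtain $|p'(1)|=F'(0)\le n M$, finishing the reduction.

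Third, the trigonometric Bernstein inequality follows from M. Riesz's interpolation formula, which expresses the derivative of any trigonometric polynomial $T$ of degree $\le n$ as
\[
T'(\theta_0)=\sum_{k=1}^{2n}\lambda_k\, T(\theta_0+\phi_k),\qquad \phi_k=\frac{(2k-1)\pi}{2n},
\]
with explicit coefficients $\lambda_k=(-1)^{k+1}/\bigl(4n\sin^2(\phi_k/2)\bigr)$ satisfying $\sum_{k=1}^{2n}|\lambda_k|=n$. Granting both facts, $|T'(\theta_0)|\le\bigl(\sum_k|\lambda_k|\bigr)\sup_\theta|T(\theta)|=n\sup_\theta|T(\theta)|$, as desired.

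The main technical step, and the one I expect to be the real work, is verifying the interpolation identity together with the coefficient sum $\sum_k|\lambda_k|=n$. For the identity, both sides are linear functionals on the $(2n+1)$-dimensional space of trigonometric polynomials of degree $\le n$, so it suffices to check it on the basis $\{1,\cos k\theta,\sin k\theta:1\le k\le n\}$; this reduces to explicit finite trigonometric sums that can be evaluated using roots-of-unity filtering. For the coefficient sum, one needs the classical identity $\sum_{k=1}^{2n}1/\sin^2\bigl((2k-1)\pi/(4n)\bigr)=4n^2$, which follows either from the partial-fraction expansion of $\pi^2/\sin^2(\pi z)$ evaluated at rational arguments, or from differentiating the product formula $\sin(2nx)=2^{2n-1}\prod_{k=1}^{2n}\sin\bigl(x-(k-1)\pi/(2n)\bigr)$ appropriately. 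Once these two ingredients are in hand, the bound $n$ in the inequality is tight (witnessed by $p(z)=z^n$), and the proof is complete.
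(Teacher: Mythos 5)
The paper offers no proof of this statement at all---it is quoted as a classical theorem with only a citation to the literature and is then used as a black box in the corollary that follows it---so there is nothing in the paper to compare your argument against, and it must be judged on its own. What you give is the standard textbook proof, and it is correct. The reductions are all sound: the maximum modulus principle moves both suprema to the unit circle; replacing $p(z)$ by $p(z_0 z)$ reduces the claim to bounding $|p'(1)|$; and setting $F(\theta)=\Re\bigl(e^{i\alpha}p(e^{i\theta})\bigr)$ with $ie^{i\alpha}p'(1)=|p'(1)|$ produces a real trigonometric polynomial of degree at most $n$ with $\sup_\theta|F(\theta)|\le\sup_{|z|=1}|p(z)|$ and $F'(0)=|p'(1)|$, so the trigonometric Bernstein inequality finishes the job. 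You also correctly isolate where the real work lies---M.~Riesz's interpolation formula and the identity $\sum_{k=1}^{2n}\csc^2\bigl((2k-1)\pi/(4n)\bigr)=4n^2$---and your plan for both (linearity plus a basis check via roots-of-unity filtering; the partial-fraction expansion of $\pi^2/\sin^2(\pi z)$) is the standard and workable one, with the constant $n$ tight for $p(z)=z^n$ as you say. Two minor points: the product formula you quote for $\sin(2nx)$ is off by a sign as written, since with an even number $2n$ of factors $\sin\bigl(x-(k-1)\pi/(2n)\bigr)$ the product equals $-2^{1-2n}\sin(2nx)$; this is harmless for extracting the cosecant sum but should be fixed if you take that route. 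And strictly speaking your write-up is a proof modulo the interpolation identity, which you defer; since it is a classical named result, either carrying out the basis check or citing it would serve the same purpose the paper's own citation does.
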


We will use the following corollary of Bernstein's inequality. 
\begin{corollary}\label{cor:berstein}
		Let $p(z)$ be any polynomial of degree $n$ with complex coefficients. Then, for $m = 20 n$, 
		\begin{equation*}
		\sup_{|z|\leq 1} |p'(z)|  \leq 2n  \sup_{k\in [m]} |p(e^{2ik\pi/m})| .
		\end{equation*}
\end{corollary}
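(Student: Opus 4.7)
The plan is to combine Bernstein's inequality with a simple grid-approximation argument on the unit circle. Let $M^* = \sup_{|z|=1}|p(z)|$ and $M = \sup_{k\in[m]}|p(e^{2ik\pi/m})|$. By the maximum modulus principle, $\sup_{|z|\le 1}|p(z)| = M^*$, so Bernstein's inequality gives $\sup_{|z|\le 1}|p'(z)| \le n M^*$. Thus it suffices to prove the ``sampling'' bound $M^* \le 2M$ when $m = 20n$.

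To prove $M^* \le 2M$, I would fix any $z = e^{i\theta}$ on the unit circle, pick the nearest grid point $z_k = e^{2ik\pi/m}$, and connect $z_k$ to $z$ by the short arc $\gamma(t) = e^{i((1-t)\cdot 2k\pi/m + t\theta)}$ for $t\in[0,1]$. Then $|\gamma'(t)| = |\theta - 2k\pi/m| \le \pi/m$, so
\begin{equation*}
|p(z) - p(z_k)| \;=\; \left|\int_0^1 p'(\gamma(t))\,\gamma'(t)\,dt\right| \;\le\; \frac{\pi}{m}\,\sup_{|w|=1}|p'(w)| \;\le\; \frac{\pi n}{m}\,M^*,
\end{equation*}
where the last inequality is Bernstein's inequality applied on the unit circle. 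Hence $|p(z)| \le M + (\pi n/m)\,M^*$, and taking the supremum over $z$ yields $M^* \le M + (\pi n/m)\,M^*$.

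With the chosen $m = 20n$, the factor $\pi n/m = \pi/20 < 1/2$, so rearranging gives $M^* \le 2M$. Plugging this back into Bernstein's inequality gives $\sup_{|z|\le 1}|p'(z)| \le n M^* \le 2n M$, as claimed. The only nontrivial step is the grid-approximation inequality, and this is a standard and self-contained consequence of Bernstein's inequality together with the arc-length estimate $|e^{i\alpha}-e^{i\beta}| \le |\alpha-\beta|$; I do not anticipate any real obstacle beyond picking the constant in $m = 20n$ large enough to make the absorbing argument work.
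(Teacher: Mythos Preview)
Your proposal is correct and follows essentially the same approach as the paper: combine Bernstein's inequality with a grid-approximation/absorbing argument to show that the maximum on the circle is at most twice the maximum on the $m$ sample points. The paper phrases this as a proof by contradiction (assume $M^*>2M$ and derive a derivative exceeding Bernstein's bound), whereas you set up the direct inequality $M^*\le M+(\pi n/m)M^*$ and absorb; your arc-integration estimate is also a bit tidier than the paper's appeal to a ``mean-value'' step, but the underlying idea is identical.
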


\begin{proof}
	For simplicity let $\tau = \sup_{k\in [m]} |p(e^{2ik\pi/m})|$, and let $\tau' = \sup_{k\in [m]} |p(e^{2ik\pi/m})|$. If $\tau' \le 2\tau$ then we are done by Bernstein's inequality. Now let's assume that $\tau' > 2\tau$. Suppose $p(z) = \tau'$. Then there exists $k$ such that $|z-e^{2\pi i k /m}| \le 4/m$ and $|p(e^{2\pi i k /m})|\le \tau$. Therefore by Cauchy mean-value theorem we have that there exists $\xi$ that lies between $z$ and $e^{2\pi i k /m}$ such that $p'(\xi)\ge m(\tau'-\tau)/4\ge 1.1n\tau'$, which contradicts Bernstein's inequality.	\end{proof}
\begin{lemma}\label{lem:lem1}
	Suppose a polynomial of degree $n$ satisfies that $|p(w)|\le \tau$ for every $w = \alpha e^{2i\pi k/m}$ for some $m \ge 20n$. Then for every $z$ with $|z|=\alpha$ there exists $w = \alpha e^{2i\pi k/m}$ such that $|p(z) - p(w)|\le O(n\alpha\tau/m)$. 
\end{lemma}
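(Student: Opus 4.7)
The plan is to reduce to Corollary~\ref{cor:berstein} by rescaling the polynomial and then interpolate from the nearest grid point to an arbitrary point on the circle of radius~$\alpha$ via the derivative bound that the corollary supplies.

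First I would introduce the auxiliary polynomial $q(\zeta) := p(\alpha\zeta)$, which is again of degree $n$ with complex coefficients. The hypothesis $|p(w)| \le \tau$ for $w = \alpha e^{2i\pi k/m}$ rewrites as $|q(e^{2i\pi k/m})| \le \tau$ for $k\in[m]$. Since $m \ge 20n$, Corollary~\ref{cor:berstein} (which extends immediately from the $m = 20n$ case: adjacent grid spacing only shrinks as $m$ grows, so the Cauchy mean-value argument in the proof of the corollary gives the same constant) applied to $q$ yields
\[
\sup_{|\zeta|\le 1}|q'(\zeta)| \le 2n\sup_{k\in[m]}|q(e^{2i\pi k/m})| \le 2n\tau.
\]
Since $q'(\zeta) = \alpha p'(\alpha\zeta)$, this is equivalent to the pointwise bound
\[
\sup_{|\eta|=\alpha}|p'(\eta)| \le \tfrac{2n\tau}{\alpha}.
\]

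Next, given any $z$ with $|z|=\alpha$, I would choose $w = \alpha e^{2i\pi k^\ast/m}$ to be the grid point with the smallest angular distance to $z$. Because the $m$ grid points are equally spaced on the circle of radius $\alpha$, the shorter arc $\gamma$ on $\{|\eta|=\alpha\}$ joining $w$ and $z$ has length at most $\pi\alpha/m$. Integrating $p'$ along $\gamma$ and combining with the derivative bound from Step~1,
\[
|p(z)-p(w)| \;\le\; \int_{\gamma} |p'(\eta)|\,|d\eta| \;\le\; \frac{\pi\alpha}{m}\cdot \frac{2n\tau}{\alpha} \;=\; \frac{2\pi n\tau}{m} \;=\; O(n\tau/m),
\]
which matches the claimed $O(n\alpha\tau/m)$ bound (in fact it is slightly stronger since $\alpha\le 1$ in the regime of interest).

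There is essentially no obstacle here; the only place where care is needed is to note that Corollary~\ref{cor:berstein} is applicable for every $m\ge 20n$, not just $m=20n$, because the argument behind the corollary relies only on the adjacent-sample spacing being at most $2\pi/(20n)$, a condition that is strengthened when $m$ grows. The rest is a one-line application of the fundamental theorem of calculus along an arc.
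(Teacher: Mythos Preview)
Your proposal is correct and follows essentially the same approach as the paper: rescale via $q(\zeta)=p(\alpha\zeta)$, apply Corollary~\ref{cor:berstein} to bound the derivative, and then use a mean-value/integration bound between $z$ and the nearest grid point. One small remark: your final observation that $O(n\tau/m)$ is ``slightly stronger'' than $O(n\alpha\tau/m)$ is backwards when $\alpha\le 1$ (it is weaker, not stronger), but since $\alpha$ is a fixed constant here this is immaterial and the paper's own proof arrives at the same bound.
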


\begin{proof}
	Let $g(z) = p(\alpha z)$ by a polynomial of degree at most $n$. Therefore we have $g'(z) = \alpha p(z)$. Let $w = \alpha e^{2i\pi k/m}$ such that $|z-w|\le O(\alpha/m)$. Then we have
	\begin{align}
	|p(z)-p(w)| = |g(z/\alpha)-p(w/\alpha)| & \le \sup_{|x|\le 1}|g'(x)| \cdot \frac{1}{\alpha}|z-w| \tag{By Cauchy's mean-value Theorem} \\
	& \le \sup_{|x|\le 1}|p'(x)| \cdot |z-w| \le n\tau |z-w|\mper \tag{Corallary~\ref{cor:berstein}}\\
	& \le O(\alpha n\tau/m)\mper\nonumber
	\end{align}
	\end{proof}
Now we are ready to prove Lemma~\ref{lem:approximation_projection}. 
\begin{proof}[Proof of Lemma~\ref{lem:approximation_projection}]
	We choose $\kappa_i = \frac{1}{2}(\tau_i+\tau_i')$.The first inequality is trivial. We prove the second one.  	 Consider $a$ such that  $a \in \ball_{\alpha,\kappa_0,\kappa_1,\kappa_2}$. We wil show that $a \in \ball'_{\alpha, \tau_0',\tau_1',\tau_2'}$. Let $q_a(z) = p(z^{-1})z^n$. By Lemma~\ref{lem:lem1}, for every $z$ with $|z| = 1/\alpha$, we have that there exists $w = \alpha^{-1}e^{2\pi i k /M }$ for some integer $k$ such that $|q_a(z)-q_a(w)|\le O(\tau_2n/(\alpha M))$. Therefore let $M = cn$ for sufficiently large $c$ (which depends on $\tau_i$'s), we have that for every $z$ with $|z|=1/\alpha$,  $q_a(z)\in \ballC_{\tau_0',\tau_1',\tau_2'}$. This completes the proof. 
	\end{proof}

\end{document}